\newcommand{\sE}[0]{\mathscr{E}}
\newcommand{\E}[0]{\mathbb{E}}
\newcommand{\N}[0]{\mathbb{N}}
\newcommand{\Pj}[0]{\mathbb{P}}
\newcommand{\R}[0]{\mathbb{R}}
\newcommand{\cR}[0]{\mathcal{R}}
\newcommand{\one}[0]{\mathbbm{1}}
\newcommand{\al}[0]{\alpha}
\newcommand{\be}[0]{\beta}
\newcommand{\ga}[0]{\gamma}
\newcommand{\de}[0]{\delta}
\newcommand{\ep}[0]{\varepsilon}
\newcommand{\ph}[0]{\varphi}
\newcommand{\te}[0]{\theta}
\newcommand{\Te}[0]{\Theta}
\newcommand{\Om}[0]{\Omega}
\newcommand{\si}[0]{\sigma}
\newcommand{\Si}[0]{\Sigma}
\newcommand{\nin}[0]{\not\in}
\newcommand{\subeq}[0]{\subseteq}
\newcommand{\iy}[0]{\infty}
\newcommand{\rc}[1]{\frac{1}{#1}}
\newcommand{\prc}[1]{\pa{\rc{#1}}}
\newcommand{\ff}[2]{\left\lfloor\frac{#1}{#2}\right\rfloor}
\newcommand{\fc}[2]{\frac{#1}{#2}}
\newcommand{\sfc}[2]{\sqrt{\frac{#1}{#2}}}
\newcommand{\pf}[2]{\pa{\frac{#1}{#2}}}%Shortcut for fraction with parentheses
\newcommand{\pdd}[1]{\frac{\partial}{\partial #1}}
\newcommand{\ddd}[1]{\frac{d}{d #1}}
\newcommand{\pl}[0]{\partial}
\newcommand{\nb}[0]{\nabla}
\newcommand{\dy}{\,dy}
\newcommand{\dx}{\,dx}
\newcommand{\ab}[1]{\left| {#1} \right|}
\newcommand{\an}[1]{\left\langle {#1}\right\rangle}
\newcommand{\ba}[1]{\left[ {#1} \right]}
\newcommand{\bc}[1]{\left\{ {#1} \right\}}
\newcommand{\pa}[1]{\left( {#1} \right)}
\newcommand{\ve}[1]{\left\Vert {#1}\right\Vert}
\newcommand{\ol}[1]{\overline{#1}}
\newcommand{\wt}[1]{\widetilde{#1}}
\newcommand{\wh}[1]{\widehat{#1}}
\newcommand{\amax}{\operatorname{argmax}}
\newcommand{\amin}{\operatorname{argmin}}
\newcommand{\KL}[0]{\operatorname{KL}}
\newcommand{\TV}[0]{\operatorname{TV}}
\newcommand{\Var}[0]{\operatorname{Var}}
\providecommand{\cal}[1]{\mathcal{#1}}
\renewcommand{\cal}[1]{\mathcal{#1}}
\newcommand{\pull}[9]{
#1\ar@/_/[ddr]_{#2} \ar@{.>}[rd]^{#3} \ar@/^/[rrd]^{#4} & &\\
& #5\ar[r]^{#6}\ar[d]^{#8} &#7\ar[d]^{#9} \\}
\newcommand{\cmp}[9]{
\xymatrix{
#1 \ar[r]^{#4}{#5} \ar@/_2pc/[rr]^{#8}_{#9} & #2 \ar[r]^{#6}_{#7} & #3
}
}
\newcommand{\ha}[1]{\ar@{^(->}[#1]}
\newcommand{\ls}[1]{\ar@{-}[#1]}
\newcommand{\sj}[1]{\ar@{->>}[#1]}
\newcommand{\aq}[1]{\ar@{=}[#1]}
\newcommand{\acir}[1]{\ar@{}[#1]|-{\textstyle{\circlearrowright}}}
\newcommand{\acil}[1]{\ar@{}[#1]|-{\textstyle{\circlearrowleft}}}
\newcommand{\ard}[1]{\ar@{.>}[#1]}
\newcommand{\mt}[1]{\ar@{|->}[#1]}
\newcommand{\inm}[1]{\ar@{}[#1]|-{\in}}
\newcommand{\inr}{\ar@{}[d]|-{\rotatebox[origin=c]{-90}{$\in$}}}
\newcommand{\inl}{\ar@{}[u]|-{\rotatebox[origin=c]{90}{$\in$}}}
\newcommand{\sumo}[2]{\sum_{#1=1}^{#2}}
\newcommand{\sumz}[2]{\sum_{#1=0}^{#2}}
\newcommand{\iny}[0]{\int_{-\infty}^{\infty}}
\newcommand{\beq}[1]{\begin{equation}\llabel{#1}}
\newcommand{\eeq}[0]{\end{equation}}
\newcommand{\bal}[0]{\begin{align*}}
\newcommand{\eal}[0]{\end{align*}}%this doesn't work; i don't know why
\newcommand{\ban}[0]{\begin{align}}
\newcommand{\ean}[0]{\end{align}}
\newcommand{\blu}[1]{{\color{blue}#1}}
\newcommand{\fixme}[1]{#1}
\newcommand{\llabel}[1]{\label{#1}\text{\fixme{\tiny#1}}}
\newcommand{\arxiv}[1]{\url{http://www.arxiv.org/abs/#1}}
\DeclareFontFamily{U}{wncy}{}
    \DeclareFontShape{U}{wncy}{m}{n}{<->wncyr10}{}
    \DeclareSymbolFont{mcy}{U}{wncy}{m}{n}
    \DeclareMathSymbol{\Sh}{\mathord}{mcy}{"58}
\newtheorem{thm}{Theorem}[section]
\newtheorem*{thm*}{Theorem}
\newtheorem*{clm*}{Claim}
\newtheorem*{conj*}{Conjecture}
\newtheorem{cor}[thm]{Corollary}
\newtheorem{lem}[thm]{Lemma}
\newtheorem*{lem*}{Lemma}
\newtheorem*{prb*}{Problem}
\newtheorem{asm}{Assumption}
\newtheorem*{ax*}{Axiom}
\newtheorem*{df*}{Definition}
\newtheorem*{ex*}{Example}
\newtheorem*{pos*}{Postulate}
\newtheorem*{pr*}{Proposition}
\newtheorem*{qu*}{Question}
\newtheorem*{rem*}{Remark}
\title{Convergence for score-based generative modeling with polynomial complexity}
\author[1]{Holden Lee}
\author[2]{Jianfeng Lu}
\author[2]{Yixin Tan}
\affil[1]{Johns Hopkins University}
\affil[2]{Duke University}
\date{\today}
\begin{document}
\newcommand{\CP}[0]{C_{\textup P}}
\newcommand{\CLS}[0]{C_{\textup{LS}}}
\newcommand{\tref}[1]{\text{\ref{#1}}}
\newcommand{\pdata}[0]{p_{\textup{data}}}
\newcommand{\ppr}[0]{p_{\textup{prior}}}
\newcommand{\etv}[0]{\ep_{\TV}}
\newcommand{\echi}[0]{\ep_{\chi}}

\definecolor{purple}{rgb}{0.5, 0.0, 0.5}
\newcommand{\hlnote}[1]{\textcolor{purple}{\textbf{HL:} #1}}

\newcommand{\yt}[1]{\textcolor{red}{\textbf{YT:} #1}}

\newcommand{\jl}[1]{\textcolor{blue}{\textbf{JL:} #1}}

\renewcommand{\cR}[0]{\mathcal{R}}

\newcommand{\Bad}[0]{B}
\newcommand{\Good}[0]{A}
\newcommand{\smf}[0]{L}
%{\beta_f}
\newcommand{\smg}[0]{\beta_g} %DELETE
\newcommand{\sms}[0]{L_s}
%{\beta_s}
\newcommand{\score}[0]{s}

\newcommand{\Mkh}[0]{\varepsilon_{kh}}
\newcommand{\Gkht}[0]{G_{kh,t}}

\maketitle

\begin{abstract}
    Score-based generative modeling (SGM) is a highly successful approach for learning a probability distribution from data and generating further samples.
    We prove the first polynomial convergence guarantees for the core mechanic behind SGM: drawing samples from a probability density $p$ given a score estimate (an estimate of $\nabla \ln p$) that is accurate in $L^2(p)$.
    Compared to previous works, we do not incur error that grows exponentially in time or that suffers from a curse of dimensionality. Our guarantee works for any smooth distribution and depends polynomially on its log-Sobolev constant. 
    Using our guarantee, we give a theoretical analysis of score-based generative modeling, which transforms white-noise input into samples from a learned data distribution given score estimates at different noise scales.
    Our analysis gives theoretical grounding to the observation that an annealed procedure is required in practice to generate good samples, as our proof depends essentially on using annealing to obtain a warm start at each step.
    Moreover, we show that a predictor-corrector algorithm gives better convergence than using either portion alone. 
\end{abstract}
%Proof method
%A key difficulty is that the stationary distribution of Langevin Monte Carlo (LMC) with gradients that are only accurate in $L^2(p)$ can have stationary distribution arbitrarily far from $p$,
% and hence LMC must be run for a moderate time. We define a bad set where the score estimate is far from $\nabla f$ and show convergence for LMC in $\chi^2$-divergence conditioned on not hitting the bad set; the total time is then chosen large enough for convergence and small enough so that the hitting probability is small.

    %learning the gradient of the log-pdf and simulating stochastic differential equations
    %our work suggests
    %%, and has in particular been responsible for impressive advances in generating photorealistic images and stylistically intricate artwork. 

\section{Introduction}

%\fixme{Notation: $\wt p$ vs. $p$}

A key task in machine learning is to learn a probability distribution from data, in a way that allows efficient generation of additional samples from the learned distribution. 
Score-based generative modeling (SGM) is one empirically successful approach that \emph{implicitly} learns the probability distribution by learning how to transform white noise into the data distribution, and gives state-of-the-art performance for generating images and audio \cite{song2019generative, dathathri2019plug, grathwohl2019your, song2020improved, song2020score, meng2021sdedit, song2021solving, song2021maximum, jing2022subspace}. It also yields a conditional generation process for inverse problems \cite{dhariwal2021diffusion}. The basic idea behind score-based generative modeling is to first estimate the score function from data \cite{song2020sliced} and then to sample the distribution based on the learned score function. %\fixme{More details on how it's applied, e.g., conditional generation.}
Other approaches for generative modeling include generative adversarial networks (GANs) \cite{goodfellow2014generative, arjovsky2017wasserstein}, normalizing flows \cite{dinh2016density}, variational autoencoders \cite{kingma2019introduction}, and energy-based models \cite{zhao2016energy}. 
%\fixme{Advantages of SGM. deal well with multimodal distributions.} \jl{not entirely sure about this as we don't really say anything for the multimodal cases?}
While score-based generative modeling has achieved great success, its theoretical analysis is still lacking and is the focus of our work. 

\subsection{Background}

\paragraph{General framework.} 
The \emph{score function} of a distribution $P$ with density $p$ is defined as the gradient of the log-pdf, $\nb \ln p$. Its significance arises from the fact that knowing the score function allows running a variety of sampling algorithms, based on discretizations of stochastic differential equations (SDE's), to sample from $p$. SGM consists of two steps: first, learning an estimate of the score function for a sequence of ``noisy" versions of the data distribution $P_{\textup{data}}$, and second, using the score function in lieu of the gradient of the log-pdf in the chosen sampling algorithm. We now describe each of these steps more precisely.

First, a method of adding noise to the data distribution is fixed; this takes the form of evolving a (forward) stochastic differential equation (SDE) starting from the data distribution. 
We fix a sequence of noise levels $\si_1<\cdots < \si_N$. For $\si\in \{\si_1,\ldots, \si_N\}$, let the resulting distributions be $P_{\si^2}$ and the distributions conditional on the starting data point be $P_{\si^2}(\cdot|x)$.
Typically, $\si_1$ is chosen so that $P_{\si_1^2}\approx P_{\textup{data}}$ and $P_{\si_N^2}$ is close to some ``prior" distribution that is easy to sample from, such as $N(0,\si_N^2I_d)$.
While the score $\nb\ln p_{\si^2}$ cannot be estimated directly, it turns out that a de-noising objective that is equivalent to the score-matching objective can be calculated \cite{song2019generative}. %\fixme{other approaches, like sliced score matching.}
This de-noising objective can be estimated 
%A estimate of the \emph{score function}, that is, $\nb\ln p_{\si^2}$, is obtained 
from samples $(X, \wt X)$ where $\wt X \sim P_{\si^2}(\cdot|x)$.
The objective is represented and optimized within an expressive function class, typically neural networks, to obtain a $L^2$-estimate of the score, that is, $s_\te(x,\sigma^2)$ such that 
\begin{align}\label{e:score-l2}
\E_{x\sim P_{\si^2}}
[\|s_\te(x,\sigma^2) - \nb \ln p_{\si^2}(x)\|^2]
\end{align}
is small.

The reason we estimate the score function $\nb \ln p_{\si^2}$ is that there are a variety of sampling algorithms---based on simulating SDE's---that can sample from $p$ given access to $\nb \ln p$, including Langevin Monte Carlo and Hamiltonian Monte Carlo. The second step is then to use the estimated score function $s_\te(x,t)$ in lieu of the exact gradient in the sampling algorithm to successively obtain samples from $p_{\si_N^2},\ldots, p_{\si_1^2}$. This sequence interpolates smoothly between the prior distribution (e.g., $N(0,\si_N^2I_d)$) and the data distribution $P_{\textup{data}}$; such an ``annealing" or ``homotopy" method is required in practice to generate good samples \cite{song2020score}.

\paragraph{Examples of SGM's.}
There have been several instantiations of this general approach.~\cite{song2019generative} add gaussian noise to the data %(so that $p_{\si^2} = p_{\textup{data}} * g_{\si^2}$)
and then use Langevin diffusion at a discrete set of noise levels $\si_N>\cdots > \si_1$ as the sampling algorithm.~\cite{song2020score} take the continuous perspective and consider a more general framework, where the forward process can be any reasonable SDE. Then a natural \emph{reverse SDE} evolves the final distribution $p_{\si_N^2}$ back to the data distribution; this process can be simulated with the estimated score. They consider methods based on two different SDE's: score-matching Langevin diffusion (SMLD) based on adding Gaussian noise %---which can be thought of as the continuous version of the method of\cite{song2019generative}---
and denosing diffusion probabilistic models (DDPM)~\cite{sohl2015deep,ho2020denoising}, based on the Ornstein-Uhlenbeck process. Note that a difference with MCMC-based methods is that these SDE's are evolved for a fixed amount of time, rather than until convergence. However, they can be combined with MCMC-based methods such as Langevin diffusion in the \emph{predictor-corrector} approach for improved convergence. \cite{dockhorn2021score} include Hamiltonian dynamics: they augment the state space with a velocity variable and consider a critically-damped version of the Ornstein-Uhlenbeck process. Finally, we note the work of~\cite{de2021diffusion}, who introduce the Diffusion Schr\"odinger Bridge method to learn a diffusion that more quickly transforms the prior into the data distribution.

We will give a general analysis framework for SGM's that applies to the algorithms in both~\cite{song2019generative} and~\cite{song2020score}.

\subsection{Prior work and challenges for theory}
%\subsection{Prior work and challenges for theory.} 
Although the literature on convergence for Langevin Monte Carlo~\cite{durmus2017nonasymptotic, cheng2018convergence, cheng2018underdamped, dalalyan2017theoretical, dalalyan2019user, majka2020nonasymptotic, erdogdu2021convergence} %, Hamiltonian Monte Carlo [cite], 
and related sampling algorithms is extensive,
prior works mainly consider the case of exact or stochastic gradients. In contrast, by the structure of the loss function~\eqref{e:score-l2}, the score function learned in SGM is only accurate in $L^2(p)$. This poses a significant challenge for analysis, as the stationary distribution of Langevin diffusion with $L^2(p)$-accurate gradient can be arbitrarily far from $p$ (see Appendix~\ref{a:far}). Hence, any analysis must be utilizing the short/medium-term convergence, while overcoming the potential issue of long-term behavior of convergence to an incorrect distribution.
%\hlnote{I'm not sure this is that's posing the challenge?}

\cite{block2020generative} give the first theoretical analysis of
SGM, and in particular, Langevin Monte Carlo with $L^2(p)$-accurate gradients.
First, they show using uniform generalization bounds that optimizing the de-noising autoencoder (DAE) objective does in fact give a $L^2(p)$-accurate score function, with sample complexity depending on the complexity of the function class. They analyze convergence of LMC in Wasserstein distance. However, the error they obtain (Theorem 13) only decreases as $\ep^{1/d}$ where $\ep$ is the accuracy of the score estimate---so it suffers from the curse of dimensionality---and increases exponentially in the time that the process is run, the dimension, and the smoothness of the distribution, as in ODE/SDE discretization arguments that do not depend on contractivity.
%They also require a dissipativity condition.

\cite{de2021diffusion} give an analysis for~\cite{song2020score}  in TV distance that requires a $L^\iy$-accurate score function and depends exponentially on the amount of time the reverse SDE is run. Although exponential dependence is bad in general, it is mollified using their Diffusion Schr\"odinger Bridge (DSB) approach, as it allows running for a shorter, fixed amount of time, before the forward SDE converges to the prior distribution. However, this supposes that a good solution can be found for the DSB problem, and theoretical guarantees may be difficult to obtain. 

%\fixme{Analysis assuming log-concavity? Is the previous paper available?}

We overcome the challenges of analysis with a $L^2(p)$-accurate gradient, and give the first analysis with only polynomial dependence on running time, dimension, and smoothness of the distribution, with rates that are a fixed power of $\ep$. 
Our convergence result is in TV distance. We assume only smoothness conditions and a bounded log-Sobolev constant of the data distribution, a weaker condition than the dissipativity condition required by~\cite{block2020generative}.
We introduce a general framework for analysis of sampling algorithms given $L^2$-accurate gradients (score function) based on constructing a ``bad set'' with small measure and showing convergence of the discretized process conditioned on not hitting the bad set. We use our framework to give an end-to-end analysis for both the algorithms in~\cite{song2019generative} and~\cite{song2020score}, and illuminate the relative performance of different methods in practice.

% homotopy method of~\cite{song2019generative}.

% setup in~\cite{song2020score}

% \cite{block2020generative}: exponential dependence on $d$, Wasserstein error, assume dissipativity

% \cite{de2021diffusion}: exponential dependence on time, assumes close in $L^\iy$. 

\subsection{Notation and organization}

%\hlnote{Paragraph about notations}
Through out the paper, $p(x)\propto e^{-V(x)}$ denotes the target distribution in $\mathbb R^d$ and $V:\mathbb R^d\rightarrow \mathbb R$ is referred to as the potential. We abuse notation by identifying a measure with its density when context allows. We write \fixme{$a\wedge b:=\min\{a,b\}$ and $a\vee b:=\max\{a,b\}$}. We use $a=O(b)$ or $b=\Om(a)$ to indicate that $a\leq Cb$ for a universal constant $C>0$. Also, we write $a=\Theta(b)$ if there are universal constants $c'>c>0$ such that $cb\leq a\leq cb$, and the notation $\Tilde{O}(\cdot)$ means it hides polylog factors in the parameters.
Definite integrals without limits are taken over $\R^d$.

In Section~\ref{s:lmc} we explain our main results for Langevin Monte Carlo with $L^2(p)$-accurate score estimate and use it to derive convergence bounds for the annealed LMC method of~\cite{song2019generative}. In Section~\ref{s:pc}, we give our main results for the predictor-corrector algorithms of~\cite{song2020score} based on simulating reverse SDE's.
Our proofs are based on a common framework which we introduce in Section~\ref{s:sketch}. Full proofs are in the appendix.

%HL: moved to separate file and to appendix
%\input{predictor}

\section{Results for Langevin dynamics with estimated score}
\label{s:lmc}

Let $p(x)\propto e^{-V(x)}$ be a probability density on $\R^d$ such that $V$ is $C^1$. 
Langevin diffusion with stationary distribution $p$ is the stochastic process defined by the SDE
\[
dx_t = -\nb V(x_t) \,dt + \sqrt 2 \,dw_t,
\]
where $w_t$ is a standard Brownian Motion in $\R^d$. The rate of convergence to $p$ in $\chi^2$ and KL divergences are given by the Poincar\'e and log-Sobolev constants of $p$, respectively; see Section~\ref{s:facts}. 
To obtain the Langevin Monte Carlo (LMC) algorithm, we take the Euler-Murayama discretization of the SDE. We define LMC with score estimate $s(x)\approx -\nb V(x)$ and step size $h$ by
\begin{align}
\label{e:lmc-se}
    x_{(k+1)h} &= x_{kh} + h\cdot s(x_{kh}) + \sqrt{2h} \cdot \xi_{kh}, \text{ where }\xi_{kh}\sim N(0,I_d).
    \tag{LMC-SE}
\end{align}
We make the following assumptions on the density $p$ and the score estimate $s$, which we will  use throughout this paper. 

\begin{asm}\label{a:p}
$p$ is a probability density on $\R^d$ such that the following hold. 
%\hlnote{Assuming $\ge1$ makes bounds simpler.}
\begin{enumerate}
    \item $\ln p$ is $C^1$ and $L$-smooth, that is, $\nb \ln p$ is $L$-Lipschitz. We assume $L\ge 1$.
    %$-LI\preceq \nb^2 \ln p\preceq LI$.
    \label{a:p-smooth}
    \item $p$ satisfies a log-Sobolev inequality with constant $\CLS$. We assume $\CLS\ge 1$.
    \label{a:p-lsi}
    \item 
    %(Distance of mean to origin)
    \fixme{(Moments)
    \label{a:p-mean}
   $\ve{\E_{p} x}\le M_1$ and $\E_{p} \ve{x}^2\le M_2$.}
\end{enumerate}
\end{asm}
We note that the uniform Lipschitzness assumption (1) helps ensure a unique strong solution to the Langevin diffusion, as in~\cite{block2020generative}. One special case where one can prove Lipschitzness for all $t$ is when $p_0$ is strongly log-concave~\cite[Lemma 28]{lee2021universal}. 
Although satisfying a log-Sobolev inequality (3) is a significant assumption, it is standard for analysis of Langevin Monte Carlo~\cite{vempala2019rapid}. It is much weaker than assumptions in previous works~\cite{block2020generative}, including log-concave distributions and distributions satisfying strong dissipativity, and is stable under bounded perturbations. See Section~\ref{s:facts} for background on functional inequalities.

\begin{asm}\label{a:score}
Let $p$ be a given probability density on $\R^d$ such that $\ln p$ is $C^1$. The score estimate $s:\R^d\to \R^d$ satisfies the following.
\begin{enumerate}
    \item $s$ is a $C^1$ function that is $\sms$-Lipschitz. We assume $\sms\ge 1$.
    \label{a:score-smooth}
    %\hlnote{This doesn't seem necessary!}
    \item The error in the score estimate is bounded in $L^2$:
    \[
    \ve{\nb \ln p-s}_{L^2(p)}^2=
    \E_{p}[\ve{\nb \ln p(x) - s(x)}^2]\le \ep^2.
    \]
    \label{a:score-error}
\end{enumerate}
\end{asm}
\subsection{Langevin with $L^2$-accurate score estimate}

Our first main result gives an error bound between the sampled distribution and $p$, assuming $L^2$-accurate score function estimate.

%TODO: change to q_T for consistency
\begin{restatable}[LMC with $L^2$-accurate score estimate]{thm}{correctorthm}\label{t:corrector-tv-chi2}
Let $p:\R^d\to \R$ be a  probability density satisfying Assumption~\ref{a:p}(\ref{a:p-smooth}, \ref{a:p-lsi}) with $\smf \ge 1$ and $s:\R^d\to \R^d$ be a score estimate  satisfying Assumption~\ref{a:score}(\ref{a:score-error}). %\fixme{some latex typo here}
Consider the accuracy requirement in $\TV$ and $\chi^2$:
$0<\ep_{\textup{TV}}<1$,
$0<\ep_{\chi}<1$, 
% $\smf,\sms\ge 1$, $\CLS\ge 1$. 
and suppose furthermore the starting distribution satisfies $\chi^2(p_0||p)\le K_\chi^2$. 
Then if
\begin{align}
\label{e:lmc-ep}
\ep &= 
    O\pf{\ep_{\TV}\ep_\chi^3}{d\smf^2\CLS^{5/2} (\ln (2K_\chi/\ep_\chi^2)\vee K_\chi)},
\end{align}
then running %LMC 
\eqref{e:lmc-se}
with score estimate $s$, step size 
$
h=\Theta\Bigl(\fc{\ep_\chi^2}{d\smf^2 \CLS}\Bigr),
$
and time $T=\Te\Bigl(\CLS \ln\bigl(\frac{2K_\chi}{\ep_\chi^2}\bigr)\Bigr)$ 
%for any time $T\in [T_{\min},C_TT_{\min}]$,
%where $T_{\min}= \Theta\Bigl(\CLS \ln\bigl(\frac{2K_\chi}{\ep_\chi^2}\bigr)\Bigr)$, 
results in a distribution $p_T$ such that 
$p_T$ is $\ep_{\TV}$-far in TV distance from a distribution $\ol p_T$, 
where $\ol p_T$ satisfies
    $
        \chi^2(\ol p_T || p) \le \ep_\chi^2.
    $
% \begin{itemize}
%     \item $p_T$ is $\ep_{\TV}$-far in TV distance from a distribution $\ol p_T$,
%     \item where $\ol p_T$ satisfies
%     $
%         \chi^2(\ol p_T || p) \le \ep_\chi^2.
%     $
% \end{itemize}
In particular, taking $\ep_\chi=\ep_{\TV}$, we have the error guarantee that $\TV(p_T,p)\le 2\ep_{\TV}$.
\end{restatable}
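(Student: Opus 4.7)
The plan is to reduce the $L^2(p)$-accurate setting to an essentially $L^\infty$-accurate one via a ``bad set'' truncation, following the framework promised in Section~\ref{s:sketch}. I would introduce a bad set $\Bad = \{x \in \R^d : \|\nabla \ln p(x) - s(x)\| \ge \delta\}$ with threshold $\delta$ to be tuned later. Markov applied to Assumption~\ref{a:score}(\ref{a:score-error}) gives $p(\Bad) \le \ep^2/\delta^2$, and Cauchy--Schwarz gives $|q(\Bad) - p(\Bad)| \le \sqrt{\chi^2(q\|p)}\sqrt{p(\Bad)}$ for any probability measure $q$. This inequality lets us transfer the smallness of $p(\Bad)$ to the warm start $p_0$ (using $\chi^2(p_0\|p)\le K_\chi^2$) and, once we have $\chi^2$ control along the trajectory, to every iterate $p_{kh}$.

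Next, I would construct the comparison distribution $\ol p_T$ by running the LMC recursion with a modified score $\bar s$ that equals $s$ on $\Good = \Bad^c$ and is replaced by $\nabla \ln p$ on $\Bad$, so that $\bar s$ is $L^\infty$-accurate with error at most $\delta$, while the modified dynamics coincide with~\eqref{e:lmc-se} outside $\Bad$. A synchronous coupling then gives $\TV(p_T, \ol p_T) \le \Pr[\text{the LMC-SE trajectory enters } \Bad \text{ during } [0,T]]$, which by a union bound is at most $(T/h)\cdot \max_k p_{kh}(\Bad)$.

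I would then bound $\chi^2(\ol p_T \| p)$ via the standard discrete-time analysis of Langevin dynamics under LSI, adapted to a drift that is $L^\infty$-accurate with error $\delta$. Writing the $\chi^2$-divergence along the continuous interpolation of the discretized modified process and using Assumption~\ref{a:p}(\ref{a:p-lsi}) together with smoothness and Assumption~\ref{a:p}(\ref{a:p-mean}) yields a differential inequality of the form
\[
\tfrac{d}{dt}\chi^2(\ol p_t\|p) \;\le\; -\tfrac{2}{\CLS}\chi^2(\ol p_t\|p) \;+\; O\!\left(dL^2 h + \delta^2\right),
\]
where the two bias contributions come from Euler--Maruyama discretization and from the score error on $\Good$. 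Integrating over $T = \Theta\bigl(\CLS \ln(2K_\chi^2/\ep_\chi^2)\bigr)$ shrinks the initial $\chi^2 \le K_\chi^2$ down to the bias level, and choosing $h = \Theta(\ep_\chi^2/(dL^2\CLS))$ together with $\delta$ suitably small pushes this below $\ep_\chi^2$.

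The main obstacle is the joint tuning of $\delta$, $h$, and $T$, together with the apparent circularity that hitting probabilities of $\Bad$ are controlled via the $\chi^2$-divergence of the iterates, which in turn seems to require staying out of $\Bad$. The truncation to $\bar s$ resolves this: the modified process has \emph{unconditional} $\chi^2$ control, which then upper-bounds $p_{kh}(\Bad)$ via the Cauchy--Schwarz transfer above and closes the loop. Balancing the TV requirement against the $\chi^2$ requirement is what produces~\eqref{e:lmc-ep}: the factor $T/h$ from the union bound (of order $\CLS^2 dL^2/\ep_\chi^2$ up to logs) combined with the $\sqrt{\chi^2}\sqrt{p(\Bad)}$ warm-start transfer (contributing a factor proportional to $K_\chi \ep/\delta$) accounts for the high power of $\ep_\chi$ in the numerator and the linear $K_\chi$ in the denominator of the bound on $\ep$.
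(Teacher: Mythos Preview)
Your proposal is correct and follows essentially the same argument as the paper: threshold the score error to define a bad set, replace $s$ by $\nabla\ln p$ on that set to obtain an $L^\infty$-accurate comparison process, control $\chi^2(\ol q_{kh}\|p)$ via the LSI-based LMC analysis (the paper's Theorem~\ref{t:lmc-iy}, which incidentally does not use Assumption~\ref{a:p}(\ref{a:p-mean})), and bound $\TV(q_T,\ol q_T)$ by the hitting probability of the bad set via Cauchy--Schwarz against $p$. The one place the paper is slightly sharper is that instead of your crude $(T/h)\max_k$ union bound, it sums $\sum_k (1+\chi^2(\ol q_{kh}\|p))^{1/2}p(B)^{1/2}$ and separates the geometrically decaying transient (yielding the $K_\chi$ term) from the $O(1)$ steady state over $N$ steps (yielding the $\ln(2K_\chi/\ep_\chi^2)$ term), which is exactly what gives the $\vee$ rather than a product in~\eqref{e:lmc-ep}.
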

Note that the error bound is only achieved when running LMC for a moderate time; this is consistent with the fact that the stationary distribution of LMC with a $L^2$-score estimate can be arbitrarily far from $p$. Note also that we need a warm start in $\chi^2$-divergence: to obtain fixed errors $\etv,\echi$, the required accuracy for the score estimate is inversely proportional to $K_\chi$.
Intuitively, we must suffer from such a dependence because if the starting distribution is very far away, then there is no guarantee that $\|\fixme{\nb}\ln p(x_t) - s(x_t)\|^2$ is small on average during the sampling algorithm. Finally, although we can state a result purely in terms of TV distance, we need this more precise formulation to prove a result for annealed Langevin dynamics.

\subsection{Annealed Langevin dynamics with estimated score}\label{sec:annealed}

In light of the warm start requirement in Theorem~\ref{t:corrector-tv-chi2}, we typically cannot directly sample from $p_{\textup{data}}$ or its approximation. Hence, \cite{song2019generative} proposed using annealed Langevin dynamics: consider a sequence of noise levels $\si_N>\cdots >\si_1\approx 0$ giving rise to a sequence of distributions $p_{\si_N^2},\ldots, p_{\si_1^2}\approx p_{\textup{data}}$, where $p_{\si^2} = p * \ph_{\si^2}$, $\ph_{\si^2}$ being the density of $N(0,\si^2I_d)$. For large enough $\si_N$, $\ph_{\si_N^2}\approx p_{\si_N^2}$ provides a warm start to $p_{\si_N^2}$. We then successively run LMC using score estimates for $p_{\si_k^2}$, with the approximate sample for $p_{\si_k^2}$ giving a warm start for $p_{\si_{k-1}^2}$. We obtain the following algorithm and error estimate.

\begin{algorithm}[h!]
\begin{algorithmic}
\State INPUT: Noise levels $0\le \si_1<\ldots< \si_M$; score function estimates $s(\cdot, \si_m)$ (estimates of $\nb\ln (p*\ph_{\si_m^2})$), step sizes $h_m$, and number of steps $N_m$ for $1\le m\le M$.
%\State OUTPUT: Approximate sample from $p*\ph_{\si_1^2}$.
\State Draw $x^{(M+1)}\sim N(0,\si_M^2I_d)$. %\jl{changed the superscript index here a bit to make them consistent}
\For{$m$ from $M$ to 1}
	\State Starting from $x^{(m)}_0=x^{(m+1)}$, run~\eqref{e:lmc-se} with $s(x,\si_m)$ and step size $h_m$ for $N_m$ steps, and let the final sample be $x^{(m)}$.
\EndFor
\State OUTPUT: Return $x^{(1)}$, approximate sample from $p*\ph_{\si_1^2}$.
\end{algorithmic}
 \caption{Annealed Langevin dynamics with estimated score~\cite{song2019generative}}
 \label{a:ald}
\end{algorithm}

\begin{restatable}[Annealed LMC with $L^2$-accurate score estimate]{thm}{aldtheorem}
\label{t:ald}
%\hlnote{TODO: prove this.}
Let $p:\R^d\to \R$ be a  probability density satisfying Assumption~\ref{a:p} for $M_1=O(d)$, 
and let $p_{\si^2}:=p*\ph_{\si^2}$.
Suppose furthermore that $\nb\ln p_{\si^2}$ is $L$-Lipschitz for every $\si\ge 0$. 
Given $\si_{\min}>0$, there exists a sequence $\si_{\min}=\si_1<\cdots < \si_M$ with $M=O\pa{\sqrt d\log \pf{d\CLS}{\si_{\min}^2}}$ such that for each $m$, if
\begin{gather}
\nonumber
    \ve{\nb \ln (p_{\si_m^2})-s(\cdot,\si_m^2)}_{L^2(p_{\si_m^2})}^2=
    \E_{p_{\si_m^2}}[\ve{\nb \ln p_{\si_m^2}(x) - s(x, \si_m^2)}^2]\le \ep^2.\\
    \text{with }
    \ep:=
    %\wt O\pf{\etv^{4.5}}{d^{1.25}L^2\CLS^{2.5}},
    \fixme{
    \wt O\pf{\etv^{4.5}}{d^{3.25}L^2\CLS^{2.5}}
    } 
\label{e:ep-almc}
\end{gather}
then $x^{(1)}$ is a sample from a distribution $q$ 
such that $\TV(q,p_{\si_1^2})\le \etv$. 
%that is $\ep_{\TV}$ in TV distance from $p_{\si_1^2}$.
%\fixme{FIGURE THIS OUT, + why is this worse than PC? Using $\sqrt d$ levels incurs warm-start error? Using $d$ levels we still have to run the chain at each temperature for $\Om(d)$ steps, takes total time $d^2$ compared to PC $O(d)$?}
\end{restatable}
%\hlnote{Some justification for score estimate error of $\ep$ at ALL levels. Typically trained using a sum objective, not a max objective.} 
Note that we assume a score estimate with error $\ep$ at all noise scales; this corresponds to using an objective function that is a maximum of the score-matching objective over all noise levels, rather than an average over all noise levels as more commonly used in practice. However, these two losses are at most a factor of $M$ apart.
%Although the loss function for score matching is often an average over the noise levels, this can be enforced 

The proof shows that the noise levels $\si_k$ can be chosen as a geometric sequence, which matches the choice used in practice~\cite{song2020improved}.
The additional dependence on $d$ and $\ep_{\TV}$ in Theorem~\ref{t:ald} compared to Theorem~\ref{t:corrector-tv-chi2}
comes from requiring a sequence of $\wt O(\sqrt d)$ noise levels and an additional factor in $\chi^2$-divergence we suffer at the beginning of each level $m$. In the next section, we will find that using a reverse SDE to evolve the samples between the noise levels---called a \fixme{\emph{predictor}} step---will improve the rate and time complexity.
%complexity.

% \begin{asm*}
% \begin{enumerate}
%     %\item $\Tilde{p}_0$ is a $C^2$ density, nonzero on $\R^d$.
%     \item $\Tilde{p}$ is a density on $\R^d$ such that $\ln \Tilde{p}_0$ is $C^2$ and $L$-smooth, that is, $\nb^2 \Tilde{p}_0\preceq LI$.
%     \label{a:p-smooth}
%     \item $\Tilde{p}_0 = p_{\textup{data}}$ satisfies a log-Sobolev inequality with constant $\CLS$.
%     \label{a:p-lsi}
%     \item For each $t\in \{t_1,\ldots, t_\ell\}$, $s(\cdot, t)$ is $\sms$-Lipschitz.
%     \label{a:score-smooth}
%     \item (Score estimate)
%     $s: \R^d\times [0,\iy)\to \R^d$ is a function such that for each $t\in \{t_1,\ldots, t_\ell\}$, 
%     \[
%     \E_{\Tilde P_t}[\ve{\nb \ln \Tilde p_t(x) - s(x,t)}^2]\le \ep^2.
%     \]
%     \label{a:score-est}
% \end{enumerate}
% \end{asm*}
\section{Results for reverse SDE's with estimated score}
\label{s:pc}

To improve the empirical performance of score-based generative modeling, 
\cite{song2020score} consider a general framework where noise is injected into a data distribution $ p_{\textup{data}}$ via a forward SDE,
\begin{align*}
    d\Tilde{x}_t = f(\Tilde{x}_t, t)\,dt +g(t)\,dw_t,\ \ t\in[0,T],
\end{align*}
where $\wt x_0\sim \wt p_0:=p_{\textup{data}}$. Let $\wt p_t$ denote the distribution of $\wt x_t$ ($\wt p_t$ is used instead of $p_t$ to distinguish with the Gaussian-convolved distribution used in Annealed Langevin dynamics as in \S\ref{sec:annealed}). Remarkably, $\wt x_t$ also satisfies a reverse-time SDE,
\begin{equation}\label{general_reverse_sde}
    d\Tilde{x}_t = [f(\Tilde{x}_t,t) - g(t)^2\nb\ln \Tilde{p}_t(\Tilde{x}_t)]dt + g(t)\,d\Tilde{w}_t,\ t\in[0, T],
\end{equation}
where $\Tilde{w}_t$ is a backward Brownian Motion~\cite{anderson1982reverse}. By carefully choosing $f$ and $g$, we can expect that $\Tilde{p}_T$ is approximately equal to some prior distribution $\Tilde{q}_T$ (e.g., a centered Gaussian) which we can accurately sample from.
Then we hope that starting with some $\Tilde{y}_T\sim \ppr = \Tilde{q}_T\approx \Tilde{p}_T$ and running the reverse-time process, we will get a good sample $\Tilde{y}_0\sim\Tilde{q}_0\approx p_{\textup{data}}$.

The case where $f\equiv 0$ and $g \equiv 1$ recovers the simple case of convolving with a Gaussian as used in \S\ref{sec:annealed}; note, however that the reverse-time SDE differs from Langevin diffusion in having a larger (and time-varying) drift relative to the diffusion.
\cite{song2020score} highlight the following two special cases. %that have been extensively studied \jl{or just ``proposed''? not sure how extensive it is if we could only cite one paper}
We will focus on %these cases
DDPM while noting that our analysis applies more generically. %, including to the sub-VP algorithm \cite{song2020score}.
\begin{enumerate}
    \item[SMLD]
    %\tag{SMLD}
    \label{i:SMLD}
    \textbf{Score-matching Langevin diffusion}: 
    $f\equiv 0$. In this case, $\wt p_t = \wt p_0 * \ph_{\int_0^t g(s)^2\,ds}$, so 
    \cite{song2020score} call this a variance-exploding (VE) SDE. As is common for annealing-based algorithms,~\cite{song2019generative,song2020score} suggest choosing an exponential schedule, so that $g(t) = ab^t$ for constants $a,b$. We take $\ppr = N(0,\int_0^T g(s)^2\,ds \cdot I_d)$.
    \item[DDPM]
    %\tag{DDPM}
    \label{i:DDPM}
    \textbf{Denoising diffusion probabilistic modeling}: $f(x,t)=-\fc12 g(t)^2x$. This is an Ornstein-Uhlenbeck process with time rescaling, $\wt p_t = M_{-\rc 2\int_0^t g(s)^2\,ds\sharp}\wt p_0 * \ph_{1-e^{-\int_0^tg(s)^2\,ds}}$, where $M_{\al}(x)=\al x$. \cite{song2020score} call this a variance-preserving (VP) SDE, as the variance converges towards $I_d$. Because it displays exponential convergence towards $N(0,I_d)$, it can be run for a smaller amount of normalized time $\int_0^tg(s)^2\,ds$. \cite{song2020score} suggest the choice $g(t) = \sqrt{b+\al t}$. We take $\ppr = N(0, (1-e^{-\int_0^tg(s)^2\,ds}) I_d) \approx N(0,I_d)$.
\end{enumerate}
%\hlnote{\cite{song2020score} also propose a sub-VP algorithm.}
To obtain an algorithm, we consider the following discretization and approximation of~\eqref{general_reverse_sde}; note that in all cases of interest the integrals can be analytically evaluated. We reverse time so that $t$ corresponds to $T-t$ of the forward process. As we are free to rescale time in the SDE, we assume without loss of generality that the step sizes are constant. The predictor step is
%Let $t_k=kh$ for each $k$ and write $z_n=z_{t_n}$ for brevity of notation.
% \begin{equation}\label{e:P}
%     % z_{k+1} = z_{k} +
%     % \ba{s(z_k, T-t_k) - \int_{t_k}^{t_{k+1}} g(T-t)^2 \,dt
%     % s(z_{k},T-t_k)}h + \sqrt{h}g(t_k)\eta_{k},\ \eta_k\sim N(0,I_d).
%      z_{(k+1)h} = z_{kh} -
%     h\cdot \ba{f(z_{kh}, T-kh) - \int_{kh}^{(k+1)h} g(T-t)^2 \,dt\cdot 
%     s(z_{kh},T-kh)}h + \sqrt{h \cdot \int_{kh}^{(k+1)h} g(T-t)^2} \xi_{k},
%     \tag{P}
% \end{equation}
\begin{multline}\tag{P}
\label{e:P}
    z_{(k+1)h} = z_{kh} - \int_{kh}^{(k+1)h} \ba{f(z_{kh}, T-t) - g(T-t)^2\cdot s(z_{kh}, T-kh)}dt \\ + \int_{kh}^{(k+1)h}g(T-t)\,dw_t,
\end{multline}
where $\int_{kh}^{(k+1)h}g(T-t)\,dw_t$ is distributed as $N(0,\int_{kh}^{(k+1)h}g(T-t)^2\,dt \cdot I_d)$. Following~\cite{song2020score}, we call these predictor steps as the samples aim to track the distributions $\wt p_{T-kh}$. Note that we flip the time.
For simplicity of presentation, we consider the case $g\equiv 1$. % , and omit the (polynomial) dependence on $\smf$. 
We note that although the choice of the schedule does matter in practice, what really matters in our theoretical analysis is the integral $\int_0^t g(s)^2 ds$. This means that different choices of $g$ are related by only a rescaling of time, i.e., for different $g$ and $\tilde g$, we can always choose total times $T$ and $\tilde T$, such that $\int_0^T g(s)^2 ds = \int_0^ {\tilde {T}} \tilde g(s)^2 ds$. While it seems that choosing large $g(t)$ could reduce the total time $T$, in our analysis (e.g., Lemma~\ref{zt_zkh_difference}) we need the time step-size $h$ to be $O(1/g(T)^2)$ and hence the total computational cost, which is roughly $O(T/h)$, does not change significantly. % As a result, theoretically we don't see too many differences of choosing different $g$.}

%\hlnote{Say that we are only consider DDPM because it is simpler/gives better bounds.}
% \hlnote{Shall we use the integral for the Brownian Motion term as well?} \yt{Yes. And actually we can also integrate the $f$ term?}
%P - following the flow to the distribution at the next time step.

% \begin{asm}\label{a:sde}
% \hlnote{Not sure we need this, if we're just restricting to SMLD and DDPM.}
% Let $\Tilde{x}_T$ satisfies the (exact) forward SDE
% \begin{align*}
%     d\Tilde{x}_t = f(\Tilde{x}_t, t)dt +g(t)dw_t,\ \ t\in[0,T].
% \end{align*}
% The coefficients satisfy the following:
% \begin{enumerate}
%     \item $g(t)^2$ is $L_g$-Lipschitz for $t\in[0,T]$.
% \end{enumerate}
% \end{asm}

\begin{thm}[Predictor with $L^2$-accurate score estimate, DDPM]
\label{t:p}
Let $p_{\textup{data}}:\R^d\to \R$ be a  probability density satisfying Assumption~\ref{a:p} with $M_2=O(d)$, 
and let $\wt p_t$ be the distribution resulting from evolving the forward SDE according to
%~\hyperref[i:SMLD]{SMLD} or
\hyperref[i:DDPM]{DDPM} with $g\equiv 1$.
%SMLD or DDPM.
Suppose furthermore that $\nb\ln \wt p_t$ is $L$-Lipschitz for every $t\ge 0$, and that each $s(\cdot, t)$ satisfies Assumption~\ref{a:score}. %Suppose $L=O(1)$. %Let $q_{t}$ be the distribution of $z_t$. 
Then if
\[
\ep = O\pf{\ep_{\TV}^4}{(\CLS+d)\CLS^{5/2} (\smf\vee \sms)^2
%(C_T\ln (2K_\chi/\ep_\chi^2)\vee K_\chi)
(\ln(\CLS d) \vee \CLS \ln(1/\etv^2))
},
\]
running~\eqref{e:P} starting from $\ppr$ for time $T=\Te\pa{\ln(\CLS d) \vee \CLS \ln\prc{\etv}}$
%\pa{\CLS \ln \pf{2K_\chi}{\ep_\chi^2}}$
and step size $h=\Te\pf{\etv^2}{\CLS(\CLS+d)(\smf\vee \sms)^2}$ 
results in a distribution $q_T$ so that 
% \[
% \TV(q_{t}, \wt p_{T-t}) \le ..., 
% \]
% and if $\ep<...$, then 
$\TV(q_T, p_{\textup{data}})\le \ep_{\TV}$.
%Here, the $O(\cdot)$ omits polynomial dependence on $\smf$ and $\sms$.
\end{thm}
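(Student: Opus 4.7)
The plan is to apply the Section~\ref{s:sketch} ``bad set'' framework to the reverse DDPM SDE and bound the pathwise KL divergence via Girsanov's theorem, then convert to TV by Pinsker. The true reverse SDE~\eqref{general_reverse_sde} started at $\wt p_T$ has marginals $\wt p_{T-t}$ and so yields $\pdata$ at the terminal time, while~\eqref{e:P} is a drift-perturbed Euler--Maruyama discretization started from $\ppr$. By the data-processing inequality, it suffices to bound the KL between the two path measures, which splits into an initialization contribution and a Girsanov drift term.

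\textbf{Initial error.} The DDPM forward SDE is a (time-rescaled) Ornstein--Uhlenbeck process, so $\wt p_t$ contracts exponentially to $N(0,I_d)$ in KL; the LSI from Assumption~\ref{a:p}(\ref{a:p-lsi}) propagates along the OU flow, giving $\KL(\wt p_T\|\ppr)\lesssim e^{-T/\CLS}(M_2+d)$, which is $O(\etv^2)$ for the stated choice of $T$.

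\textbf{Drift decomposition.} Girsanov contributes $\tfrac12 \sum_{k=0}^{N-1}\int_{kh}^{(k+1)h} \E[\|D_{k,t}\|^2]\,dt$ with expectation under the exact reverse process (so marginals are $\wt p_{T-t}$), where
\[
D_{k,t}=\bigl(f(z_t,T-t)-\nb\ln\wt p_{T-t}(z_t)\bigr) - \bigl(f(z_{kh},T-t)-s(z_{kh}, T-kh)\bigr).
\]
I would split $D_{k,t}$ into (i) the score estimation error $s(z_{kh},T-kh)-\nb\ln\wt p_{T-kh}(z_{kh})$, bounded in $L^2(\wt p_{T-kh})$ by $\ep$ via Assumption~\ref{a:score}(\ref{a:score-error}); (ii) a time-regularity term $\nb\ln\wt p_{T-kh}(z_{kh})-\nb\ln\wt p_{T-t}(z_{kh})$, controlled through the SDE-evolution of the score combined with the $L$-smoothness hypothesis; and (iii) a spatial-discretization term controlled by $\smf\vee\sms$ applied to $\|z_t-z_{kh}\|^2\lesssim h(\|z_{kh}\|^2+d)$, where second-moment bounds under $\wt p_{T-kh}$ yield an $O(h(M_2+d))$ bound. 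Summing over $N=T/h$ steps yields an aggregate KL of order $T\ep^2 + Th(\CLS+d)(\smf\vee\sms)^2$.

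\textbf{Main obstacle and assembly.} The key difficulty is that the $L^2$ score bound is only an average statement, so a naive pointwise control of $\|D_{k,t}\|^2$ could blow up on a small set; this is exactly what the Section~\ref{s:sketch} framework is designed to handle. I would define a bad set $B$ at each step (e.g., where $\|s(\cdot,T-kh)-\nb\ln\wt p_{T-kh}\|$ or $\|z_{kh}\|$ is atypically large), use Chebyshev on both the score error and the moment bound to ensure $\wt p_{T-kh}(B)$ is $O(\etv/N)$, and union-bound to show that~\eqref{e:P} enters $B$ over all $N$ steps with probability $O(\etv)$. Conditioning on the good event preserves the pointwise drift bounds used above while leaving the Girsanov calculation intact; combining the bad-set tail with the initialization and Girsanov contributions and plugging in the stated $h$, $T$, and $\ep$ gives $\TV(q_T,\pdata)\le\etv$.
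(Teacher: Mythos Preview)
Your Girsanov/KL route is a genuinely different approach from the paper's, and if carried through cleanly it would work---but your ``Main obstacle and assembly'' paragraph reveals a confusion that inserts an unnecessary and, as written, incorrect step.

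\textbf{What the paper actually does.} The paper never uses Girsanov or pathwise KL. It derives a differential inequality for $\chi^2(q_t\|p_t)$ along the interpolated reverse SDE (Lemma~\ref{d_chi2}, Theorem~\ref{t:p-iy}), using the log-Sobolev inequality for $p_t$ to get contraction. The score-error term there appears as $\E\bigl[\|s(z_{kh})-\nb\ln p_{kh}(z_{kh})\|^2\,\psi_t(z_t)\bigr]$ (cf.~\eqref{e:B2}), i.e.\ an expectation under the tilted measure $q_t\psi_t$, \emph{not} under $p_{kh}$. That is why the $L^2(p_{kh})$ bound is useless and the bad-set truncation to an $L^\infty$ bound is forced; Theorem~\ref{t:framework} then converts back, and the Cauchy--Schwarz step there is precisely what consumes the $\chi^2$ control.

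\textbf{Why your obstacle is not an obstacle.} In your Girsanov setup the drift-error expectation is under the \emph{exact} reverse process, whose step-$k$ marginal is exactly $\wt p_{T-kh}$. So your term~(i) is bounded by $\ep^2$ \emph{directly} from Assumption~\ref{a:score}(\ref{a:score-error}); no pointwise control and no bad set is needed. You have imported the paper's difficulty without importing the reason for it.

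\textbf{The actual gap.} You claim that Chebyshev on $\wt p_{T-kh}(B)$ plus a union bound shows that the \emph{approximate} process~\eqref{e:P} enters $B$ with probability $O(\etv)$. This does not follow:~\eqref{e:P} has marginals $q_{kh}$, not $\wt p_{T-kh}$, and you have no a priori relation between them. The paper closes this loop via the $\chi^2$ bound and Cauchy--Schwarz in Theorem~\ref{t:framework}, but that machinery is exactly what you replaced by Girsanov. If you stay with Girsanov, drop the bad-set detour entirely; if you want the bad-set framework, you need the $\chi^2$ analysis of Theorem~\ref{t:p-iy}, not a KL bound.

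Two smaller points if you pursue the Girsanov route: your initialization rate $e^{-T/\CLS}$ is off---the OU contraction of $\KL(\wt p_T\|\ppr)$ is governed by the LSI constant of the Gaussian target (order $1$), not by $\CLS$; and you should check that Girsanov applies to the discretized process, whose drift on $[kh,(k+1)h)$ is a functional of $z_{kh}$ rather than $z_t$ (this is fine, but needs the path-dependent version).
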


A more precise statement of the Theorem can be found in the Appendix. 
Although we state our theorem for %SMLD and 
DDPM, we describe in Appendix~\ref{s:pc-proof} how it can be 
%note that it can be 
adapted to other SDE's like SMLD and the sub-VP SDE; % in~\cite{song2020score}; 
the primary SDE-dependent bound we need is a bound on $\nb \ln \fc{\wt p_t}{\wt p_{t+h}}$. 
%Note that in contrast to Theorem~\ref{t:corrector-tv-chi2}, for SMLD if we fix a initial (prior) distribution, the time $T$ that we run the process for is fixed, and the error cannot be driven to zero even with an exact score. 
Because the predictor is tracking a changing distribution $p_t$, we incur more error terms and worse dependence on parameters ($\CLS, \smf$) than in LMC (Theorem~\ref{t:corrector-tv-chi2}). Motivated by this, 
we intersperse the predictor steps with LMC steps---called \emph{corrector} steps in this context---to give additional time for the process to mix, resulting in improved dependence on parameters.

\begin{algorithm}[h!]
\begin{algorithmic}
\State INPUT: Time $T$, predictor step size $h$; number of corrector steps $N_{\fixme{m}}$ per predictor step, corrector step sizes $h_m$
%\State OUTPUT: Approximate sample from $p_{\textup{data}}$.
\State Draw $z_0\sim \ppr$ from the prior distribution. %\fixme{... for SMLD, ... for DDPM}
\For{$m$ from $1$ to $T/h$} %\jl{I guess $m$ should be $k$ here?}
    \State (Predictor) Take a step of~\eqref{e:P} to obtain $z_{mh}$ from $z_{(m-1)h}$, with $f,g$ as in~\hyperref[i:SMLD]{SMLD} or~\hyperref[i:DDPM]{DDPM}.
	\State (Corrector) Starting from $z_{mh,0}:=z_{mh}$, run~\eqref{e:lmc-se} with $s(z,T-mh)$ and step size $h_m$ for $N$ steps, and let $z_{mh}\leftarrow z_{mh,N}$. 
\EndFor
\State OUTPUT: Return $z_T$, approximate sample from $p_{\textup{data}}$.
\end{algorithmic}
 \caption{Predictor-corrector method with estimated score~\cite{song2020score}}
 \label{a:pc}
\end{algorithm}

\begin{restatable}[Predictor-corrector with $L^2$-accurate score estimate]{thm}{pctheorem}
\label{t:pc}
Keep the setup of Theorem~\ref{t:p}. 
Then for %$\etv = O\pa{\sfc{d}{(\CLS^2+d)\poly(L)}}$
\fixme{$\etv^3 = O\pf{1}{(1+\sms/\smf)^2(1+\CLS/d)(\ln (\CLS d)\vee \CLS)}$}, if
\begin{align}
    \ep  = O\pf{\ep_{\TV}^4}{dL^2\CLS^{5/2} %(C_T\ln (2K_\chi/\ep_\chi^2)\vee K_\chi)
%(\ln(\CLS d) \wedge \CLS \ln(1/\echi^2))
\ln (1/\echi^2)
},
\end{align}
then Algorithm~\ref{a:pc} with appropriate choices of $T=\Te\pa{\ln(\CLS d) \vee \CLS \log\prc{\etv}}$, $N_{\fixme{m}}$, corrector step sizes $h_m$ and predictor step size $h$, 
%with $N=1$, corrector step sizes $h_{m}=\Te\pf{\etv^2}{dL^2}$, predictor step size $h=\Te\pf{\etv}{\CLS\sqrt{(\CLS^2+d)d}\poly(L)}$, 
produces a sample from a distribution $q_T$ such that $\TV(q_T,\pdata)<\etv$.  
\end{restatable}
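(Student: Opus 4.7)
The plan is to run an induction on the outer-loop iterations of Algorithm~\ref{a:pc}, maintaining the invariant that after the corrector block of iteration $m$ the sampled distribution $q^{(m)}$ is $\ep_{\chi}$-close in $\chi^2$ to the ``instantaneous target'' $\wt p_{T-mh}$, up to a fixed TV slack. This warm-start invariant is precisely what Theorem~\ref{t:corrector-tv-chi2} requires of its input distribution, so it can be rolled forward indefinitely; the role of the corrector is not to make overall progress toward $\pdata$, but to keep the $\chi^2$ divergence controlled so that the score error $\ep$ appearing in Theorem~\ref{t:corrector-tv-chi2} does not need to shrink with the number of iterations $T/h$.

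First I would establish the base case: for $T=\Te(\ln(\CLS d)\vee \CLS\log(1/\etv))$, the prior $\ppr$ is within TV $\le \etv/2$ (say) of $\wt p_T$ by the exponential contraction of the DDPM (OU) forward process, so running one corrector block out of the gate places us in $\chi^2(\cdot\|\wt p_T)\le \echi^2$ up to a controlled TV error. Next, for a single predictor step I would isolate a per-step TV/$\chi^2$ cost of going from $\wt p_{T-(m-1)h}$ to $\wt p_{T-mh}$: reusing the one-step analysis behind Theorem~\ref{t:p}, the SDE-discretization and score-estimation terms produce a per-step TV bound of the form $O(h\cdot\text{poly}(L,\sms,d)+h^{1/2}\ep)$, while the $\chi^2$-divergence to the shifted target $\wt p_{T-mh}$ inflates by a bounded factor, controlled via $\ve{\nb\ln(\wt p_{T-(m-1)h}/\wt p_{T-mh})}$ (which is small for DDPM at step size $h$).

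Then I would apply Theorem~\ref{t:corrector-tv-chi2} for the corrector block with warm start $K_\chi$ (the post-predictor $\chi^2$ bound), accuracy $\echi$, and parameters $h_m=\Te\bigl(\echi^2/(dL^2\CLS)\bigr)$ and $N_m h_m=\Te(\CLS\log(K_\chi/\echi^2))$ tuned exactly as in Theorem~\ref{t:corrector-tv-chi2}. Choosing $\echi$ small enough that $\echi\le K_\chi$ closes the induction at each step, and the hypothesis $\etv^3=O(1/((1+\sms/\smf)^2(1+\CLS/d)(\ln(\CLS d)\vee\CLS)))$ is precisely what is needed to absorb the per-step predictor cost into the overall budget when multiplied by the $\Te(T/h)$ outer iterations. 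The global TV error is then the sum of $T/h$ per-iteration TV slacks plus the initial prior-matching error, all of which are bounded by $\etv$ by construction.

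The main obstacle is matching the two error budgets across the predictor/corrector interface: the predictor can dilate $\chi^2(\cdot\|\wt p_{T-mh})$ from $\echi^2$ up to some $K_\chi^2$, and the corrector must compress it back to $\echi^2$ in a \emph{fixed} number of steps that does not depend on $m$. This requires showing that $K_\chi$ can be taken to be a universal constant (or grow only logarithmically), which hinges on a uniform bound on $\ve{\nb\ln(\wt p_t/\wt p_{t+h})}_{L^2(\wt p_t)}$ for the DDPM process and on the fact that the score error $\ep$ in Theorem~\ref{t:corrector-tv-chi2} scales only with $\log(K_\chi/\echi^2)$, so a polynomial blowup of $K_\chi$ per predictor step is tolerable. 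Once this uniform per-step bound is secured, summing the $T/h$ TV contributions and balancing $\echi\asymp\etv/(T/h)$ (equivalently picking $\echi=\Te(\etv\cdot h/T)$) yields the stated requirement on $\ep$ and completes the proof.
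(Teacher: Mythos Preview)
Your approach is substantially different from the paper's, and more complicated than necessary. The paper does \emph{not} maintain an inductive $\chi^2$ invariant across the outer loop. Instead, it sets $N_1=\cdots=N_{M-1}=0$ (no interleaved corrector steps at all) and runs corrector steps only at the very end. The proof then consists of two invocations in series: first apply the predictor-only Theorem~\ref{t:p-precise} with $\echi=1$ to obtain a distribution that is $\etv/2$-close in TV to something with $\chi^2(\cdot\|\pdata)\le 1$, and then apply the corrector Theorem~\ref{t:corrector-tv-chi2} once with warm-start parameter $K_\chi=O(1)$ to shrink the error to $\etv$. The assumption $\etv^3=O(\cdots)$ is not there to ``absorb the per-step predictor cost into the overall budget''; it is used only at the very end to ensure that the corrector's $\ep$ requirement is the binding one, yielding the clean stated bound.

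Your interleaved induction could perhaps be made to work (the paper's closing Remark alludes to such an analysis), but as written it would not recover the stated bound. The problem is TV accumulation: if you run $T/h$ corrector blocks, each contributes a TV slack from Theorem~\ref{t:corrector-tv-chi2}, so the per-block TV budget must be $O(\etv\cdot h/T)$. Since the $\ep$ requirement in Theorem~\ref{t:corrector-tv-chi2} scales like $\etv'\cdot\echi^3$, and your proposed balancing $\echi\asymp\etv\cdot h/T$ shrinks both factors simultaneously, you would end up needing $\ep$ smaller by extra factors of $h/T$ (hence extra polynomial factors in $d,\CLS,L$) compared to the theorem statement. The paper sidesteps this entirely: the predictor-only theorem already handles the accumulation over $T/h$ steps internally, landing at a single $O(1)$ warm start for one corrector phase.
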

%In contrast to Theorem~\ref{t:p}, for a fixed prior distribution and time $T$ to run the predictor, the error can now be driven to 0 with a more accurate score estimate. Moreover, 
The assumption on $\etv$ is for convenience in stating our bound.
In comparison to using the predictor step alone (Theorem~\ref{t:p}), note that in the bound on $\ep$, we obtain the improved rate of the corrector step as in Theorem~\ref{t:corrector-tv-chi2}; \fixme{this is because 
the predictor step only needs to track the actual distribution in $\chi^2$-divergence with error $O(1)$, and the final corrector steps are responsible for decreasing the error to $\etv$}. 
%the additional error terms from the predictor step are exponentially damped by the corrector step and no longer dominating. 
In comparison to the Annealed Langevin sampler (Algorithm~\ref{a:ald}, Theorem~\ref{t:ald}), which can be viewed as using the corrector step alone, adding a predictor step provides a better warm start for the distribution at the next smaller noise level, resulting in better dependence on parameters. %than Theorem~\ref{t:ald}.
%Although we only consider small enough step size, we note that this holds true even if the predictor step size is not small enough for exponential convergence of the predictor alone.
Thus the predictor-corrector algorithm combines the strengths of the predictor and corrector steps. For real-world data, it can be challenging to estimate TV-distance between distributions given only samples, and hence difficult to check consistency with empirical observations. However, our claim that using a corrector can improve the convergence rate of DDPM/SMLD is consistent with the simulation results in Section 4.2 of~\cite{song2020score}.

%Although we do not state a formal result, we note that even if the predictor step size is not small enough for exponential convergence, an exponential growth in error in the predictor step can be damped by exponential decay in the corrector step.

\section{Theoretical framework and proof sketches}
\label{s:sketch}

The main idea of our analysis framework is to convert a $L^2$ error guarantee to a $L^{\infty}$ error guarantee by excluding a bad set, formalized in the following theorem. 

\begin{thm}\label{t:framework}
Let $(\Omega, \cal F, \Pj)$ be a probability space and $\{\cal F_n\}$ be a filtration of the sigma field $\cal F$. Suppose $X_n\sim p_n$, $Z_n\sim q_n$, and $\ol Z_n\sim \ol q_n$ are $\cal F_n$-adapted random processes taking values in $\Om$, and $B_n\subeq \Om$ are sets such that the following hold for every $n\in \N_0$.
\begin{enumerate}
    \item If $Z_k \in B_k^c$ for all $0\le k\le n-1$, then $Z_n=\ol Z_n$. (For $n=0$, this says $Z_0=\ol Z_0$.)
    \label{i:couple}
    \item $\chi^2(\ol q_n||p_n)\le D_n^2$.
    \item 
    $\Pj(X_n\in B_n)\le \de_n$.
\end{enumerate}
Then the following hold.
\begin{align}
\TV(q_n, \ol q_n)&\le \sumz k{n-1} (D_k^2+1)^{1/2}\de_k^{1/2}&
\TV(p_n, q_n) &\le D_n+ \sumz k{n-1} (D_k^2+1)^{1/2}\de_k^{1/2} 
\end{align}
% \begin{enumerate}
%     \item $\TV(p_n, \ol q_n)\le \sumz k{n-1} (D_k^2+1)^{1/2}\de_k$
%     \item  
% \end{enumerate}
\end{thm}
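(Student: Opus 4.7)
The plan is to prove (a) by a coupling argument and then deduce (b) from (a) via the triangle inequality and the Cauchy--Schwarz bound $\TV\le \sqrt{\chi^2}$.

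For (a), I would start from the coupling inequality $\TV(q_n,\ol q_n)\le \Pj(Z_n\ne \ol Z_n)$. Taking the contrapositive of assumption~1 gives $\{Z_n\ne \ol Z_n\}\subseteq \bigcup_{k=0}^{n-1}\{X_k\in B_k\}$, so a union bound together with assumption~3 yields
\[
\TV(q_n,\ol q_n)\le \sum_{k=0}^{n-1}\Pj(X_k\in B_k)\le \sum_{k=0}^{n-1}\de_k.
\]
The stated bound then follows from the trivial estimate $\de_k\le \sqrt{D_k^2+1}\cdot\sqrt{\de_k}$ (using $\de_k\le 1\le D_k^2+1$). A cleaner route that produces the factor $(D_k^2+1)^{1/2}\de_k^{1/2}$ directly uses Cauchy--Schwarz to change measure from $p_k$ to $\ol q_k$:
\[
\Pj(\ol Z_k\in B_k)=\int_{B_k}\frac{d\ol q_k}{dp_k}\,dp_k\le \sqrt{\chi^2(\ol q_k\|p_k)+1}\,\sqrt{\Pj(X_k\in B_k)}\le (D_k^2+1)^{1/2}\de_k^{1/2}.
\]
This change-of-measure form is the one actually useful in the applications, where one controls the bad event under the ``ideal'' distribution $p_k$ but needs to control it under the sampled distribution $\ol q_k$.

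For (b), the triangle inequality gives $\TV(p_n,q_n)\le \TV(p_n,\ol q_n)+\TV(\ol q_n,q_n)$. The first summand is bounded by $\sqrt{\chi^2(\ol q_n\|p_n)}\le D_n$ via Cauchy--Schwarz applied to $\int |d\ol q_n/dp_n-1|\,dp_n$, and the second by part~(a). There is no substantive obstacle here---both parts reduce to short computations once the coupling inequality and the Cauchy--Schwarz change of measure are in place; the only care needed is in correctly identifying the coupling event and performing the measure-change step.
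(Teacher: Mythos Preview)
Your proposal is correct and your ``cleaner route'' is exactly the paper's proof: a union bound on the coupling event, Cauchy--Schwarz to change measure from $p_k$ to the sampled distribution, and then the triangle inequality together with $\TV\le\sqrt{\chi^2}$ for the second inequality. In the paper's argument the Cauchy--Schwarz step is essential rather than optional, since the coupling event is evaluated under the sampled law $q_k$ (the paper writes $\Pj(X_k\in B_k)=\E_{q_k}\one_{B_k}$), not under $p_k$; the theorem statement's use of $X_k$ is slightly ambiguous on this point, and your second route is the intended one.
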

For our setting, we will take the ``bad sets" $B_n$ to be the set of $x$ where $\ve{s_\te(x) - \nb \ln p}$ is large, 
$q_n$ to be the discretized process with estimated score, and $\ol q_n$ to be the discretized process with estimated score except in $B_n$ where the error is large. Because $\ol q_n$ uses an $L^\iy$-accurate score estimate, we can use existing techniques for analyzing Langevin Monte Carlo~\cite{vempala2019rapid,erdogdu2021convergence,chewi2021analysis} to bound $\chi^2(\ol q_n||p_n)$. %\hlnote{Change $\wt q_n$ to $\ol q_n$?}
\begin{proof}
First note that if some $Z_k\in B_k$ for $0\le k\le n-1$, then for the smallest such $k$, we have $\ol Z_k = Z_k\in B_k$; the same is true if $\ol Z_k\in B_k$ for some $0\le k\le n-1$. 
We then bound using condition 1 and  Cauchy-Schwarz:
\begin{align*}
\Pj\pa{Z_n \ne \ol Z_n} &\le 
\Pj\pa{\bigcup_{k=0}^{n-1}\bc{Z_k\in B_k}}
= \Pj \pa{\bigcup_{k=0}^{n-1}\{\ol Z_k\in B_k\}} 
\\
&\le 
    \sumz k{n-1} 
\Pj\pa{\ol Z_k\in B_k} 
= \sumz k{n-1}
\E_{\ol q_k} \one_{B_k}
\\
&\le \sumz k{n-1} \pa{\E_{p_k} \pf{\ol q_k}{p_k}^2}^{1/2} \pa{\E_{p_k}\one_{B_k}}^{1/2} 
= \sumz k{n-1} (D_k^2+1)^{1/2}\de_k^{1/2}.
\end{align*}
The second inequality then follows from the triangle inequality and Cauchy-Schwarz:
\begin{align*}
    \TV(p_n, q_n)&\le 
    \TV(p_n, \ol q_n) + \TV(\ol q_n, q_n)\\ 
    &\le 
    \sqrt{\chi^2(\ol q_n||p_n)} + \TV(\ol q_n, q_n) 
    \le 
    D_n + 
    \sumz k{n-1} (D_k^2+1)^{1/2}\de_k^{1/2}.
    \qedhere 
\end{align*}
\end{proof}

It now remains to give $\chi^2$ convergence bounds under $L^\iy$-accurate score estimate. The following theorem may be of independent interest.

%\begin{thm}[LMC under $L^\iy$ bound on gradient error]\label{t:lmc-iy}
\begin{restatable}[LMC under $L^\iy$ bound on gradient error]{thm}{lmcliy}\label{t:lmc-iy}
Let $p:\R^d\to \R$ be a  probability density satisfying Assumption~\ref{a:p}(\ref{a:p-smooth}, \ref{a:p-lsi}) and $s:\R^d\to \R^d$ be a score estimate $s$ %satisfying
with error bounded in $L^\iy$: for some $\ep_1\le \sfc{1}{48\CLS}$,
%Assumption~\ref{a:score}(\ref{a:score-smooth}) and the following (instead of (\ref{a:score-error})):
%\begin{enumerate}
%    \item[2$'$.] (Score estimate error bounded in $L^\iy$) For some $\ep_1\le \sfc{1}{48\CLS}$,
    \[
    \ve{\nb \ln p - s}_\iy = 
    \max_{x\in \R^d} \ve{\nb \ln p(x) - s(x)}]\le \ep_1.\]
%\end{enumerate}
Let $N\in \N_0$ and $0<h\le \rc{4392d\CLS\smf^2}$, and assume $\smf\ge 1$. Let $q_{nh}$ denote the $n$th iterate of LMC with step size $h$ score estimate $s$. Then 
\begin{align*}
   \chi^2(q_{(k+1)h}||p)&\le 
    \exp\pa{-\fc{h}{4\CLS}}\chi^2(q_{kh}||p) +
    170 d\smf^2h^2 + 5\ep_1^2h 
\end{align*}
and
\begin{align*}
    \chi^2(q_{Nh}||p) &\le \exp\pa{-\fc{Nh}{4\CLS}}\chi^2(q_0||p) + 
    680 d\smf^2h\CLS + 20\ep_1^2 \CLS
    \le \exp\pa{-\fc{Nh}{4\CLS}}\chi^2(q_0||p) + 1
\end{align*}
\end{restatable}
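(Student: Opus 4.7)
The plan is to run an interpolation argument on the continuous-time SDE $dx_t = s(x_{kh})\,dt + \sqrt{2}\,dw_t$ on each interval $[kh,(k+1)h]$ with $x_{kh}\sim q_{kh}$, let $\rho_t$ be its marginal density, and derive a differential inequality for $\chi^2(\rho_t\|p)$. The Fokker--Planck equation reads $\partial_t\rho_t = -\nabla\cdot(\rho_t b_t)+\Delta\rho_t$ with $b_t(x):=\E[s(x_{kh})\mid x_t = x]$, the conditional expectation of the drift actually used. Setting $f_t := \rho_t/p$ and integrating by parts gives
\[
\partial_t\chi^2(\rho_t\|p) = -2\int|\nabla f_t|^2 p\,dx + 2\int f_t\,\nabla f_t\cdot (b_t-\nabla\ln p)\,p\,dx.
\]

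Applying Young's inequality $2|f_t\nabla f_t\cdot e|\le |\nabla f_t|^2+f_t^2|e|^2$ to the cross term (with $e := b_t-\nabla\ln p$), together with the Poincar\'e inequality (implied by LSI with the same constant) $\int|\nabla f_t|^2 p\ge \chi^2(\rho_t\|p)/\CLS$, yields
\[
\partial_t\chi^2(\rho_t\|p) \le -\frac{1}{\CLS}\chi^2(\rho_t\|p) + \int f_t^2\,|b_t-\nabla\ln p|^2\,p\,dx.
\]
For the error integral I would split $|b_t-\nabla\ln p|^2 \le 2|b_t-s|^2 + 2\ep_1^2$ by the $L^\infty$ score bound; the $\ep_1^2$ piece contributes $2\ep_1^2(1+\chi^2(\rho_t\|p))$ after integration. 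For $|b_t-s|^2$, Jensen plus $\sms$-Lipschitzness of $s$ yields $|b_t(x)-s(x)|^2 \le \sms^2\,\E[|x_{kh}-x|^2\mid x_t=x]$, and the identity $pf_t^2/\rho_t = f_t$ converts
\[
\int f_t^2(x)\,\E[|x_{kh}-x|^2\mid x_t=x]\,p(x)\,dx \;=\; \E\bigl[f_t(x_t)\,|x_{kh}-x_t|^2\bigr].
\]

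The main obstacle is this weighted moment $\E[f_t(x_t)\,|x_{kh}-x_t|^2]$: since $f_t(x_t)$ has no pointwise upper bound when $\chi^2(\rho_t\|p)$ is large, I cannot simply invoke the unconditional bound $\E[|x_{kh}-x_t|^2]\sim dh$. The resolution is to show an estimate of the form $\E[f_t(x_t)|x_{kh}-x_t|^2]\le M(1+\chi^2(\rho_t\|p))$ with $M = O(dh)$, by using the SDE increment $x_t-x_{kh}=(t-kh)\,s(x_{kh})+\sqrt{2}(w_t-w_{kh})$ together with (i) moment bounds on $|s(x_{kh})|^2$ coming from $L$-smoothness of $\ln p$ and the $L^\infty$ score error, and (ii) dimension-dependent Gaussian estimates on $|w_t-w_{kh}|^2$. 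The step-size condition $h\le 1/(4392\, d\CLS\sms^2)$ and accuracy condition $\ep_1^2 \le 1/(48\CLS)$ are chosen precisely so the resulting $\chi^2$-proportional pieces can be absorbed into a fraction of the Poincar\'e dissipation, leaving a clean differential inequality
\[
\partial_t\chi^2(\rho_t\|p)\;\le\; -\frac{1}{4\CLS}\chi^2(\rho_t\|p) + 170\,d\sms^2\,h + 5\ep_1^2.
\]

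Gr\"onwall over $[kh,(k+1)h]$ gives the claimed one-step recursion. Iterating and summing the geometric series $\sum_{k=0}^{N-1}e^{-kh/(4\CLS)}\le 4\CLS/h$ (valid for $h\ll \CLS$) produces the $N$-step bound. The terminal estimate $\le \exp(-Nh/(4\CLS))\chi^2(q_0\|p)+1$ is then immediate from the hypotheses: the step-size condition forces $680\,d\sms^2 h\CLS \le 680/4392 < 1/6$ and the accuracy condition forces $20\ep_1^2\CLS \le 20/48 < 1/2$, which sum to at most $1$.
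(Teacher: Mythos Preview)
Your overall strategy---interpolate the discrete chain by the SDE $dx_t=s(x_{kh})\,dt+\sqrt2\,dw_t$ on $[kh,(k+1)h]$, compute $\partial_t\chi^2$ via Fokker--Planck, and control the cross term by a weighted moment---is the paper's strategy. But two steps in your execution do not go through as written.

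\textbf{Wrong Lipschitz constant.} You bound $|b_t(x)-s(x)|^2\le L_s^2\,\E[|x_{kh}-x_t|^2\mid x_t=x]$ using Lipschitzness of $s$. The theorem does \emph{not} assume $s$ is Lipschitz: the only hypothesis on $s$ is the pointwise bound $\|s-\nabla\ln p\|_\infty\le\ep_1$, and every constant in the conclusion is in terms of $L$ (the smoothness of $\ln p$), not $L_s$. The paper's decomposition is
\[
\|s(z_{kh})-\nabla\ln p(z_t)\|^2 \le 2\|\nabla\ln p(z_{kh})-\nabla\ln p(z_t)\|^2 + 2\ep_1^2,
\]
which uses the $L^\infty$ bound at $z_{kh}$ and then $L$-Lipschitzness of $\nabla\ln p$. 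A further rearrangement (from \cite{chewi2021analysis}) expresses the first piece as $O(L^2h^2)\|\nabla\ln p(z_t)\|^2 + O(L^2)\|w_t-w_{kh}\|^2$, with the gradient evaluated at the \emph{current} time $t$; this is what makes the next step work.

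\textbf{The weighted moment is not Fisher-free.} Your claim $\E[f_t(x_t)|x_{kh}-x_t|^2]\le M(1+\chi^2)$ with $M=O(dh)$ is too optimistic: $f_t(x_t)$ is correlated with the increment, and the correct bounds carry a Fisher information term. Writing $\psi_t=f_t/(1+\chi^2)$, the estimates used in the paper (Lemma~16 and p.~15 of \cite{chewi2021analysis}) are
\[
\E\bigl[\psi_t(z_t)\|\nabla\ln p(z_t)\|^2\bigr]\le \frac{4\,\sE_p(f_t)}{1+\chi^2}+2dL,
\qquad
\E\bigl[\psi_t(z_t)\|w_t-w_{kh}\|^2\bigr]\le O(dh)+O(h\CLS)\,\frac{\sE_p(f_t)}{1+\chi^2}.
\]
These $\sE_p(f_t)$ contributions must be absorbed into the dissipation $-2\sE_p(f_t)$ coming from Fokker--Planck. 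Your ordering---apply Poincar\'e to $-\sE_p(f_t)$ \emph{before} bounding the error integral---discards exactly the resource needed for this absorption. The paper keeps $-\sE_p(f_t)$ intact, shows each of the three error pieces contributes at most $\tfrac12\sE_p(f_t)$ plus additive terms (under the stated step-size and $\ep_1$ conditions), and only then applies the log-Sobolev/Poincar\'e inequality to the surviving $-\tfrac12\sE_p(f_t)$.
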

% \hlnote{Add a summary of how to prove this using ideas from~\cite{chewi2021analysis}.}
Following~\cite{chewi2021analysis}, we prove this by first defining a continuous-time interpolation $q_t$ of the discrete process, and then deriving a differential inequality for $\chi^2(q_t||p)$ using the log-Sobolev inequality for $p$. Compared to~\cite{chewi2021analysis}, we incur an extra error term arising from the inaccurate gradient.
%The exact continuous process exhibits a 

%\end{thm}
This allows us to sketch the proof of Theorem~\ref{t:corrector-tv-chi2}; a complete proof is in Section~\ref{s:c}.
\begin{proof}[Proof sketch of Theorem~\ref{t:corrector-tv-chi2}]
We first define the bad set where the error in the score estimate is large,
\begin{align*}
B:&=\bc{\ve{\nb \ln p(x)-s(x)}>\ep_1}
\end{align*}
for some $\ep_1$ to be chosen. Then by Chebyshev's inequality, $P(B) \le \pf{\ep}{\ep_1}^2=:\de$. Let $\ol q_{nh}$ be the discretized process, but where the score estimate is set to be equal to $\nb \ln p$ on $B$; note it agrees with $q_{nh}$ as long as it has not hit $B$.
Because $\ol q_{nh}$ uses a score estimate that has $L^\iy$-error $\ep_1$, Theorem~\ref{t:lmc-iy} gives a bound for $\chi^2(\ol q_{Nh}||p)$. 
Then Theorem~\ref{t:framework} gives
\begin{align*}
\TV(q_{nh},\ol q_{nh}) 
&\le  
\sum_{k=0}^{n-1} (\chi^2(\ol q_{kh}||p)+1)^{1/2}P(B)^{1/2}\le 
\sumz k{n-1} \pa{\exp\pa{-\fc{kh}{8\CLS}}\chi^2(q_0||p)^{1/2} + 1} \de^{1/2}
\end{align*}
The theorem then follows from choosing parameters so that $\chi^2(\ol q_{T}||p)\le \ep_\chi^2$ and $\TV(q_{T},\ol q_{T})\le \ep_{\TV}$. \end{proof}
We remark that the main inefficiency in the proof comes from the use of Chebyshev's inequality, and a $L^p$ bound on the error for $p>2$ will improve the bound.
%\hlnote{Note that the main inefficiency in the proof comes from the use of Chebyshev's inequality; a $L^p$ bound on the error for $p>2$ will improve the bound. (Not sure where is a good place to mention this. Alternatively, we can change the assumption to be $L^p$ error---very little changes!)}

\begin{proof}[Proof sketch of Theorem~\ref{t:ald}]
Choosing the sequence $\si_1<\cdots<\si_M$ to be geometric with ratio $1+\rc{\sqrt{d}}$ ensures that the $\chi^2$-divergence between successive distributions $p_{\si_m^2}$ is $O(1)$. 
Then, choosing $\si_M^2=\Om(\CLS d)$ ensures we have a warm start for the highest noise level: $\chi^2(\ppr||p_{\si_M^2})=O(1)$. This uses $O\pa{\sqrt d\log \pf{d\CLS}{\si_{\min}^2}}$ noise levels. Chebyshev's inequality can be used to show that the distribution of the final sample $x^{(m)}$ for $p_{\si_m^2}$ is $O(\etv/M)$ close to a distribution that is $O(M/\etv)$ in $\chi^2$-divergence from $p_{\si_{m+1}^2}$. This gives the warm start parameter $K_\chi = (M/\etv)^{1/2}$; substituting into Theorem~\ref{t:corrector-tv-chi2} then gives the required bound for $\ep$. Note that the TV errors accrued from each level add to $O(\etv)$.
\end{proof}

To analyze the predictor-based algorithms, we also first prove convergence bounds under $L^\iy$-accurate score estimate.
\begin{thm}[Predictor steps under $L^\iy$ bound on score estimate, DDPM]
%\hlnote{Rough version of Theorem \ref{t:p-iy}: roughly fill in bounds with $O(\cdot)$}
\label{t:p-iy-simple}
Let $p:\R^d\to \R$ be a  probability density satisfying Assumption~\ref{a:p} 
%\fixme{with some $\CLS\ge 1$ and $\smf=O(1)$} 
%(\ref{a:p-smooth}, \ref{a:p-lsi}) 
and $s(\cdot, t):\R^d\to \R^d$ be a score estimate $s$ %satisfying
with error bounded in $L^\iy$ for each $t\in [0,T]$: 
%for some $\ep_1=O\prc{\sqrt{C_T}}$,
%Assumption~\ref{a:score}(\ref{a:score-smooth}) and the following (instead of (\ref{a:score-error})):
%\begin{enumerate}
%    \item[2$'$.] (Score estimate error bounded in $L^\iy$) For some $\ep_1\le \sfc{1}{48\CLS}$,
    \[
    \ve{\nb \ln p - s(\cdot, t)}_\iy = 
    \max_{x\in \R^d} \ve{\nb \ln \wt p_t(x) - s(x, t)}]\le \ep_1.\]
%\end{enumerate}
%Here $C_t$ is a bound on the log-Sobolev constant for $\Tilde p_{t}$. Suppose moreover that $\E_{\pdata}\ve{x}^2=O(d)$.
%Let $q_{nh}$ denote the $n$th iterate of \eqref{e:P} with step size $h$ and score estimate $s$. Let $N\le T/h$.
%\begin{enumerate}
    % \item Consider \hyperref[i:SMLD]{SMLD}. We can take $C_T=\CLS+T$. Let $g\equiv 1$, $T\ge \CLS d$, and  $h=O\prc{C_T(C_T^2+Td)}$. Then
    % \begin{align*}
    % \chi^2(q_{(k+1)h}||p_{(k+1)h}) &\leq
    % \chi^2(q_{kh}||p_{kh}) e^{-\rc{16C_{T-kh}}} + O(\ep_1^2 h + (C_T^2 + Td)h^2)
    % \end{align*}
    % and letting $t=T-Nh$,
    % \begin{align*}
    % \chi^2(q_{Nh}||p_{Nh}) &\leq
    % \fc{\CLS+t}{\CLS+T} 
    % \chi^2(q_{0}||p_{0})
    % + (\CLS+t)\pa{\ep_1^2 \ln \pf{\CLS+T}{\CLS + t} 
    % + T^2h}.
    % % \\
    % % &\le 
    % % \fc{}{} 
    % \end{align*}
    % Moreover, for
    % $q_0=\ppr$, 
    % %$q_0=\ph_{T}$, 
    % $\chi^2(q_0||p_0)\le \fc{\CLS d}{T}$.
%     \item Consider \hyperref[i:DDPM]{DDPM}. We can take $C_T=\CLS$.  Let $g\equiv 1$, $T\ge 1\wedge \ln(\CLS d)$, and $h=O\prc{\CLS(\CLS^2+d)}$. Then
%     \begin{align*}
%     \chi^2(q_{(k+1)h}||p_{(k+1)h}) \leq
%     \chi^2(q_{kh}||p_{kh}) e^{-\rc{16\CLS}} + O(\ep_1^2 h + (\CLS^2 + d)h^2)
%     \end{align*}
%     and 
%     \begin{align*}
%         \chi^2(q_{Nh}||p_{Nh})
%         &\le 
%         e^{-\fc{Nh}{16\CLS}} \chi^2(q_0||p_0) + O\pa{\CLS\pa{\ep_1^2 + (\CLS^2+d)h}}.
%         % \\
%         % &\le 
%         % e^{-\fc{T}{2}} \CLS d + O\pa{\CLS\pa{\ep_1^2 + (\CLS^2+d)h}}.
%     \end{align*}
%     Moreover, for $q_0 = \ppr$, $\chi^2(q_0||p_0)\le e^{-T/2}\CLS d$.
% \end{enumerate}
%\hyperref[i:SMLD]{SMLD} or~\hyperref[i:DDPM]{DDPM}
Consider \hyperref[i:DDPM]{DDPM} with $g\equiv 1$, $T\ge 1\vee \ln(\CLS d)$, and %$h=O\prc{\CLS(\CLS^2+d)}$. 
\fixme{$h=O\prc{\CLS (d+\CLS)(\smf\vee\sms)^2}$}. (Recall that $p_{kh}$ and $q_{kh}$ are the $k$-th iterate of LMC with step size $h$ and true/estimated score respectively.)
Then
    \begin{align*}
    \chi^2(q_{(k+1)h}||p_{(k+1)h}) \leq
    \chi^2(q_{kh}||p_{kh}) e^{\pa{-\rc{8\CLS} + 8\ep_1^2}\fixme{h}} + O(\ep_1^2 h + %(\CLS^2 + d)h^2
    \fixme{(\sms^2 + \smf^2d)}h^2
    )
    \end{align*}
    and if $\ep_1 <\rc{128\CLS}$,
    \begin{align*}
        \chi^2(q_{Nh}||p_{Nh})
        &\le 
        e^{-\fc{Nh}{16\CLS}} \chi^2(q_0||p_0) + O\pa{\CLS\pa{\ep_1^2 + %(\CLS^2+d)
        \fixme{(\sms^2 + \smf^2d)}
        h}}.
        % \\
        % &\le 
        % e^{-\fc{T}{2}} \CLS d + O\pa{\CLS\pa{\ep_1^2 + (\CLS^2+d)h}}.
    \end{align*}
    %where the $O(\cdot)$ omits polynomial dependence on $\smf,\sms$. 
    Moreover, for $q_0 = \ppr$, $\chi^2(q_0||p_0)\le e^{-T/2}\CLS d$.
\end{thm}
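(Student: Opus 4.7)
The plan is to adapt the continuous-time interpolation argument underlying Theorem~\ref{t:lmc-iy} to the reverse-SDE setting. In the DDPM case with $g\equiv 1$, the true reverse process in forward time satisfies $d\tilde y_t = \bigl(\tfrac12\tilde y_t + \nb \ln \tilde p_{T-t}(\tilde y_t)\bigr)\,dt + d\bar w_t$, with marginal $p_t := \tilde p_{T-t}$; the predictor step~\eqref{e:P} discretizes this SDE by freezing the drift at $z_{kh}$ on each interval $[kh,(k+1)h]$ and substituting the estimated score $s(\cdot,T-kh)$ for $\nb\ln p_{kh}$. First I would define the continuous interpolation $z_t$ on $[kh,(k+1)h]$ driven by the frozen/estimated drift, and call its law $q_t$; by construction $q_{kh}$ matches the discrete iterate, and both $q_t$ and $p_t$ satisfy Fokker--Planck equations.

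Second, I would differentiate $\chi^2(q_t\|p_t)$ using these two Fokker--Planck equations. The diffusion contributes the Dirichlet form $-2\E_{p_t}\|\nb(q_t/p_t)\|^2$, and the drift mismatch $\delta_t := \bigl(s(z_{kh},T-kh)+\tfrac12 z_{kh}\bigr) - \bigl(\nb\ln p_t(z_t)+\tfrac12 z_t\bigr)$ contributes a cross term. I would decompose $\delta_t$ into (i) the score estimation error $s(z_{kh},T-kh)-\nb\ln p_{kh}(z_{kh})$, bounded in $L^\infty$ by $\ep_1$; (ii) the state/time discretization $\nb\ln p_{kh}(z_{kh})-\nb\ln p_t(z_t)$, controlled using $\smf$-smoothness, the SDE increment bound $\E\|z_t-z_{kh}\|^2 = O((d+\CLS)h)$, and the evolution of the score in time; and (iii) the linear-drift discretization $\tfrac12(z_{kh}-z_t)$. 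After using Young's inequality to absorb half of the Dirichlet form and applying the log-Sobolev inequality for $p_t$ (with constant $\CLS$, inherited along the OU semigroup from Assumption~\ref{a:p}), I would obtain a decay term $-\tfrac{1}{\CLS}\chi^2(q_t\|p_t)$, while a second Young split on piece (i) produces the multiplicative factor $e^{8\ep_1^2 h}$ rather than a purely additive contribution. Pieces (ii) and (iii), once integrated over the interval, give the additive $O\bigl(\ep_1^2 h + (\sms^2+\smf^2 d)h^2\bigr)$ error, which yields the one-step recursion.

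Third, iterating the one-step bound produces the cumulative $N$-step estimate, the condition $\ep_1 < 1/(128\CLS)$ ensuring that the $8\ep_1^2$ in the exponent does not overwhelm the $-1/(8\CLS)$ decay rate, so that the geometric sum $\sum_k e^{-kh/(16\CLS)}$ contributes the factor $\CLS$ in front of the additive error. For the initialization $\chi^2(q_0\|p_0) = \chi^2(\ppr\|\tilde p_T)$, I would use exponential $\chi^2$-contraction of the forward OU process toward $N(0,I_d)$: under Assumption~\ref{a:p}, one gets $\chi^2(\tilde p_T\|N(0,I_d)) = O(e^{-T}\CLS d)$ (combining the LSI for $p_{\textup{data}}$ with the second moment bound), while a direct Gaussian calculation yields $\chi^2(\ppr\|N(0,I_d)) = O(de^{-T})$. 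An elementary comparison then gives $\chi^2(\ppr\|\tilde p_T)\le e^{-T/2}\CLS d$ once $T\ge 1\vee \ln(\CLS d)$.

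The main obstacle is the bookkeeping in the per-step analysis: keeping the score-estimation contribution multiplicative (as $e^{8\ep_1^2 h}$) while the time-discretization errors stay additive with the correct $(\sms^2+\smf^2 d)h^2$ scaling requires carefully tuning the Young's inequality constants. Controlling piece (ii) is delicate because it involves both the $\smf$-Lipschitzness of $\nb\ln p_t$ at a fixed point and the in-time evolution of the score along the reverse flow, which in turn requires moment bounds on $p_t$ to tame the linear growth in the reverse drift; these estimates together force the step-size restriction $h = O\bigl(1/(\CLS(d+\CLS)(\smf\vee\sms)^2)\bigr)$ stated in the theorem.
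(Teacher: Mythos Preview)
Your high-level strategy matches the paper: continuous interpolation, differentiate $\chi^2(q_t\|p_t)$ via Fokker--Planck, split the drift mismatch, use Young's inequality to trade cross terms for Dirichlet form, then apply the LSI for $p_t$. There are, however, two genuine gaps.

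\textbf{Change of measure after Young's inequality.} When you apply Young's inequality (Lemma~\ref{l:inp-young} in the paper), the resulting quadratic term is not $\E\|\delta_t\|^2$ under $q_t$ but $\E_{p_t}\phi_t^2\cdot\E\bigl[\|\delta_t\|^2\psi_t(z_t)\bigr]$, i.e.\ an expectation under the tilted measure $\psi_tq_t$ where $\psi_t\propto q_t/p_t$. Your ``SDE increment bound $\E\|z_t-z_{kh}\|^2=O((d+\CLS)h)$'' is stated under the wrong measure. Under $\psi_tq_t$ you have no a priori moment control, and this is precisely where the paper invests most of the work: it uses the Donsker--Varadhan variational principle together with sub-Gaussian concentration from LSI (Herbst) to bound $\E_{\psi_tq_t}\|z_t\|^2$, $\E_{\psi_tq_t}\|\!\int g\,dw\|^2$, and $\E_{\psi_tq_t}\|z_t-z_{kh}\|^2$ in terms of $\sE_{p_t}(q_t/p_t)$ and universal constants (Lemmas~\ref{KL_psi_q&p}--\ref{zt_zkh_difference}). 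These bounds then feed back a fraction of the Dirichlet form, which is why the step-size restriction has the precise form it does. Similarly, your piece (ii) needs a \emph{pointwise} bound on $\|\nb\ln p_{kh}(x)-\nb\ln p_t(x)\|$ so that it survives the tilt; the paper establishes this in Lemma~\ref{l:perturb_DDPM} by analyzing the mode and mean of the strongly log-concave density $p(y)e^{-\|x-y\|^2/2\sigma^2}$. ``Moment bounds on $p_t$'' alone will not do it.

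\textbf{Warm start.} Your proposed route---bound $\chi^2(\tilde p_T\|N(0,I))$ and $\chi^2(\ppr\|N(0,I))$ separately, then combine by ``elementary comparison''---does not work, because $\chi^2$ has no triangle inequality and closeness of two measures to a common reference in $\chi^2$ does not imply closeness to each other. The paper instead lower-bounds the convolution density directly: using sub-Gaussian concentration of $p$ from LSI, it shows $(p*\ph_{\sigma^2})(x)\ge c\,\ph_{\sigma^2/(1-1/2d)}(x)$ pointwise (Lemma~\ref{l:warm}), from which a Gaussian $\chi^2$ computation yields the bound. For DDPM the rescaling $M_{e^{-t/2}}$ shrinks both the mean and the LSI constant, giving the $e^{-T/2}$ factor.

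A minor point on decomposition: the paper splits the score part as $s(z_{kh})-s(z_t)$ (Lipschitz of $s$), $s(z_t)-\nb\ln p_{kh}(z_t)$ (the $L^\infty$ error, evaluated at $z_t$), and $\nb\ln p_{kh}(z_t)-\nb\ln p_t(z_t)$ (time perturbation at a fixed point). Evaluating the $L^\infty$ error at $z_t$ rather than $z_{kh}$ is what cleanly isolates the multiplicative $e^{8\ep_1^2h}$ factor without entangling it with the increment $z_t-z_{kh}$.
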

We give a more precise statement %(including the polynomial dependencies on $\smf$) 
in Section~\ref{s:pc-proof}.
Note that unlike the case for LMC as in Theorem~\ref{t:lmc-iy}, the
base density $p_t$ is also evolving in time, which produces additional error terms  and necessitates a more involved analysis.
%; this produces additional error terms and necessitates a more involved analysis, giving extra factors of $\smf, \sms, \CLS$. 
The additional error terms can be bounded using the  Donsker-Varadhan variational principle, concentration for distributions satisfying LSI, and error bounds between $p_t$ and $p_{t+h}$ for small $h$.

Here, we only state the result about DDPM, which has better bounds than SMLD (when $g\equiv 1$) because both the forward and backwards processes exhibit better mixing properties: the warm start improves exponentially rather than inversely with $T$, and the log-Sobolev constant is uniformly bounded by that of $\pdata$ rather than increasing. However, the analysis in Section~\ref{s:pc-proof} can be directly applied to SMLD and other models as well. \fixme{We also note there is a sense in which DDPM and SMLD are equivalent under a rescaling in time and space (see discussion in Section~\ref{ss:p-1step}).} 
%\hlnote{What about the equivalence?}

Note that the choice of $h$ is necessary for exponential decay of error; as if $h$ is not small enough, we would get an exponential growing instead of decaying factor in the one-step error (See Section~\ref{s:pc-proof} for details). Such an $h$ may however still be a suitable choice when used in conjunction with a corrector step. 
Moreover, as $\ep_1\to 0$, with appropriate choice of $T$ and $h$, $q_{Nh}$ and $p_{Nh}$ can be made arbitrarily close.
%if we have an estimated score which is accurate in $L^\infty$, i.e., $\ep_1= o(\fc{1}{\sqrt{C_T}})$, then we can expect $q_{Nh}$ and $p_{Nh}$ be arbitrarily close by choosing $T$ and $h$ appropriately.

Theorem~\ref{t:p} now follows from the $L^\iy$ result (Theorem~\ref{t:p-iy-simple}) in the same way that Theorem~\ref{t:corrector-tv-chi2} follows from Theorem~\ref{t:lmc-iy}. 

\fixme{To prove Theorem~\ref{t:pc}, it suffices to run the corrector steps only at the lowest noise level, that is, set $N_m=0$ for $1\le m<T/h$, although we note that interleaving the predictor and corrector steps does empirically help with mixing. The proof follows from using the predictor and the corrector theorems in series: first apply Theorem~\ref{t:p} with $\echi=O(1)$ to show that the predictor  results a warm start $\pdata$, then use Theorem~\ref{t:corrector-tv-chi2} to show the corrector reduces the error to the desired $\etv$.
}
% To prove Theorem~\ref{t:pc}, we bound the $\chi^2$-divergence of the iterates by interleaving the one-step $\chi^2$ inequalities in Theorems~\ref{t:lmc-iy} and~\ref{t:p-iy-simple} and choosing $N,h,h_m$ appropriately.

%\hlnote{Proof sketch of Theorem~\ref{t:p} and~\ref{t:pc}}
\section{Conclusion}

%Note we are not giving rigorous separations

We introduced a general framework to analyze SDE-based sampling algorithms given a $L^2$-error score estimate, and used it to obtain the first convergence bounds for several score-based generative models with polynomial complexity in all parameters. 
Our analysis can potentially be adapted to other SDE's and sampling algorithms beyond Langevin Monte Carlo. There is also room for improving our analysis to better use smoothing properties of the SDE's and compare different choices of the diffusion speed $g$.

We present several interesting further directions to explore. In addition to extending the analysis to other SGM's and comparing their theoretical performance (relative to each other as well as other approaches to generative modeling), we propose the following.
%\cite{U}

\paragraph{Analysis for multimodal distributions.}
    Our assumption of a bounded log-Sobolev constant essentially limits the analysis to distributions that are close to unimodal. However, SGM's are empirically successful at modeling multimodal distributions~\cite{song2019generative}, and in fact perform better with multimodal distributions than other approaches such as GAN's. Can we analyze the convergence for simple multimodal distributions, such as a mixture of distributions each with bounded log-Sobolev constant? Positive results on sampling from multimodal distributions such as~\cite{ge2018beyond} suggest this is possible, as the sequence of noised distributions is natural for annealing and tempering methods (see
    ~\cite[Remark 7.2]{ge2018beyond}).
    
\paragraph{Weakening conditions on the score estimate.} 
The assumption that we have a score estimate that is $O(1)$-accurate in $L^2$, although weaker than the usual assumptions for theoretical analysis, is in fact still a strong condition in practice 
that seems unlikely to be satisfied %in practical settings with 
(and difficult to check)
when learning complex distributions such as distributions of images. What would a reasonable weaker condition be, and in what sense can we still obtain reasonable samples?
%[\|s_\te(x,t) - \nb \ln p_{\si^2}(x)\|^2] 

\paragraph{Guarantees for learning the score function.} Our analysis assumes a $L^2$-estimate of the score function is given, but the question remains of when we can find such an estimate. What natural conditions on distributions allow their score functions to be learned by a neural network? Various works have considered the representability of data distributions by diffusion-like processes~\cite{tzen2019theoretical}, but the questions of optimization and generalization appear more challenging.

%usage of Chebyshev's deteriorates the bounds from $L^\iy$; can we obtain better rates (for example, by optimizing $L^p$ objectives for $p>2$?)
\subsection*{Acknowledgements}
We thank Andrej Risteski for helpful conversations. 
This work was done in part while HL was visiting the Simons Institute for the Theory of Computing. The work was supported in part by National Science Foundation via awards DMS-2012286 and CCF-1934964 (Duke Tripods).

\printbibliography

\newpage
\appendix

\section{Computations}

%\hlnote{Collecting/rewriting some calculations to make them more general/reusable}

We start the proofs by collecting some preliminary results. 
In the following, we will consider the SDE 
\begin{align}
\label{e:sde-xt}
    dx_t &= f(x,t)\,dt + G(t)\,dw_t
\end{align}
and the interpolation of the discretization of an approximation
\begin{align}
    \label{e:sde-zt}
    dz_t &= \wh f(z_{t_-},t)\,dt + G(t)\,dw_t
\end{align}
when $t\ge t_-$. 
Let $P_t$ and $Q_t$ denote the law of $x_t$ and $z_t$, respectively. We will take $t_-=kh$ and $t\in [kh,(k+1)h)$. We will assume that $f,\wh f, G$ are continuous and the functions $f(\cdot, t)$, $\wh f(\cdot, t)$ are uniformly Lipschitz for each $t\in [kh, (k+1)h]$.

In this section, we will make some computations that will be used in both Sections~\ref{s:c} and~\ref{s:pc-proof}.
%We note that 
First, we derive how the density evolves in time.
\begin{lem}
\label{l:dqt}
Let $Q_t$ denote the law of the interpolated process~\eqref{e:sde-zt}.  %\fixme{[conditions]}
% \begin{align*}
%     dz_t &= -f(z_0)\,dt + G(t)\,dW_t.
% \end{align*}
Then 
\begin{align*}
    \fc{\partial q_{t}(z)}{\partial t} & = \nb\cdot \ba{-q_{t}(z)\E\ba{\wh f(z_{t_-}, t)|z_t = z} + \fc{ G(t)G(t)^\top}{2}\nb q_t(z)}.
\end{align*}
\end{lem}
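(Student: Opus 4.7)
The plan is to derive the evolution equation for $q_t$ by testing against smooth compactly supported functions $\varphi$ and reading off the PDE from the weak form. The non-Markovian aspect—that the drift depends on $z_{t_-}$ rather than $z_t$—is handled by conditioning on $z_t$, which is precisely what introduces the conditional expectation $\E[\wh f(z_{t_-},t)\mid z_t = z]$ into the formula.

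First I would apply It\^o's formula to $\varphi(z_t)$ along the SDE~\eqref{e:sde-zt}, obtaining
\[
d\varphi(z_t) = \nb\varphi(z_t)^\top \wh f(z_{t_-},t)\,dt + \tfrac12 \tr\bigl(G(t)G(t)^\top \nb^2\varphi(z_t)\bigr)\,dt + \nb\varphi(z_t)^\top G(t)\,dw_t.
\]
Taking expectations kills the martingale term (under mild integrability coming from the assumed continuity and uniform Lipschitz condition on $\wh f(\cdot,t)$), and differentiating in $t$ yields
\[
\ddd{t}\E[\varphi(z_t)] = \E\bigl[\nb\varphi(z_t)^\top \wh f(z_{t_-},t)\bigr] + \tfrac12 \E\bigl[\tr\bigl(G(t)G(t)^\top \nb^2\varphi(z_t)\bigr)\bigr].
\]
For the drift term I would use the tower property, conditioning on $z_t=z$, to write it as $\int \nb\varphi(z)^\top \E[\wh f(z_{t_-},t)\mid z_t=z]\, q_t(z)\,dz$.

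Next I would integrate by parts on both terms. The drift term becomes $-\int \varphi(z)\, \nb\cdot\bigl(q_t(z)\E[\wh f(z_{t_-},t)\mid z_t=z]\bigr)\,dz$, and the diffusion term, after two integrations by parts, becomes $\tfrac12\int \varphi(z)\, \nb\cdot\bigl(G(t)G(t)^\top \nb q_t(z)\bigr)\,dz$. Matching this with $\ddd{t}\E[\varphi(z_t)] = \int \varphi(z)\,\pd{q_t}{t}\,dz$ and invoking the arbitrariness of $\varphi$ delivers the claimed identity.

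The only genuine subtlety is justifying the conditional-expectation step and the boundary terms in the integration by parts; the former is standard (both sides are $\cal F_t$-measurable via the disintegration of the joint law of $(z_{t_-},z_t)$ along the regular conditional density induced by the Gaussian transition), and the latter follows because, conditional on $z_{t_-}$, the law of $z_t$ is Gaussian, so $q_t$ and its derivatives decay fast enough to kill boundary contributions. Everything else is just the usual derivation of a Fokker--Planck equation, with the twist that the state-dependence of the drift only enters through $\E[\,\cdot\mid z_t=z]$.
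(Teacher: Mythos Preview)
Your proof is correct and takes a genuinely different route from the paper's. The paper conditions on the \emph{starting point} $z_{t_-}$: given $z_{t_-}=y$, the drift $\wh f(y,t)$ is deterministic in $z$, so the conditional density $q_{t|t_-}(\cdot\,|\,y)$ satisfies the ordinary Fokker--Planck equation; the paper then integrates over $y$ against $q_{t_-}(y)$ and uses Bayes' rule $q_{t|t_-}(z|y)q_{t_-}(y)=q_{t_-|t}(y|z)q_t(z)$ together with $\int q_{t_-|t}(y|z)\,dy=1$ to turn the integral $\int \wh f(y,t)\,q_{t_-|t}(y|z)\,dy$ into the conditional expectation $\E[\wh f(z_{t_-},t)\mid z_t=z]$. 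You instead work in weak form: apply It\^o to a test function, take expectations, and invoke the tower property conditioning on $z_t$ to pull out $\E[\wh f(z_{t_-},t)\mid z_t=z]$ before integrating by parts. Your approach is arguably more direct and avoids manipulating conditional densities explicitly; the paper's approach stays entirely at the level of densities and yields the strong-form PDE without passing through test functions. Both are standard and equally valid here.
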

\begin{proof}
Let $q_{t|t_-}$ denote the distribution of $z_t$ conditioned on $z_{t_-}$. Then the Fokker-Planck equation gives
\begin{align*}
    \fc{\partial q_{t|t_-}(z|z_{t_-})}{\partial t} = -\nb q_{t|t_-}(z|z_{t_-}) \cdot [\wh f(z_{t_-}, t)] + \fc{G(t)G(t)^\top}{2}\Delta q_{t|t_-}(z|z_{t_-}) 
\end{align*}
Taking expectation with respect to $z_{t_-}$ we get
\begin{align*}
    \fc{\partial q_{t}(z)}{\partial t} & = \nb\cdot \int -q_{t|t_-}(z)\wh f(y, t)q_{t_-}(y)dy + \nb\cdot \ba{\fc{G(t)G(t)^\top}{2}\nb\int  q_{t|t_-}(z|y)q_{t_-}(y)dy}\\
    & = \nb\cdot q_t(z)\int \ba{-\wh f(y, t)q_{k|t}(y|z)dy + \fc{G(t)G(t)^\top}{2}\nb \int q_{t_-|t}(y|z)q_t(z)dy}.
\end{align*}
Note that for fixed $z$, $\int q_{t_-|t}(y|z)dy=1$. Hence
\begin{align*}
    \fc{\partial q_{t}(z)}{\partial t} & = \nb\cdot \ba{-q_{t}(z)\E\ba{\wh f(z_{t_-}, t)|z_t = z} + \fc{G(t)G(t)^\top}{2}\nb q_t(z)}.
\end{align*}
\end{proof}
%%%%%%%%%%%

% Define the relative Fisher information of $q$ with respect to $p$ by 
% \begin{align}
%     \sE_p\pf{q}{p} &:= \E_q\ba{ \ve{\nb \pf{q}{p}}^2}. 
% \end{align}
We now use Lemma~\ref{l:dqt} to compute how the $\chi^2$-divergence between the approximate and exact densities changes. The following generalizes the calculation of~\cite{erdogdu2021convergence} in the case where $x_t$ is a non-stationary stochastic process. For simplicity of notation, from now on, we wil consider the case $G(t)$ being a scalar. 
\begin{lem}
\label{l:d-chi2}
Let $P_t$ and $Q_t$ be the laws of~\eqref{e:sde-xt} and~\eqref{e:sde-zt} for $G(t)=g(t)I_d$. Then
\[\pdd t\chi^2(q_t||p_t) = -g(t)^2\sE_{p_t}\pf{q_t}{p_t} + 2\E\ba{\an{\wh f(z_{t_-}, t) - f(z_t, t)  ,\nb\fc{q_t(z_t)}{p_t(z_t)}}}.\]
\end{lem}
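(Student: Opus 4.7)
The plan is a direct Fokker--Planck/integration-by-parts computation in the spirit of~\cite{erdogdu2021convergence}, but tracking the extra error term arising from the fact that $\wh f$ is evaluated at $z_{t_-}$ instead of $z_t$. Set $r_t := q_t/p_t$ so that
\[
\chi^2(q_t \| p_t) = \int \frac{q_t^2}{p_t}\,dx - 1 = \int r_t\, q_t \, dx - 1.
\]
Differentiating in time and using the quotient rule gives two contributions,
\[
\pdd t \chi^2(q_t\|p_t) = \int \frac{2q_t}{p_t} \pdd t q_t \, dx - \int \frac{q_t^2}{p_t^2}\pdd t p_t \, dx.
\]
For $\partial_t q_t$ I will substitute Lemma~\ref{l:dqt}, and for $\partial_t p_t$ the usual Fokker--Planck equation for~\eqref{e:sde-xt} (which is just the special case of Lemma~\ref{l:dqt} where $\wh f(\cdot, t)$ is replaced by $f(\cdot, t)$ with no conditioning, since the true process is not a discretization).

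Next I integrate by parts in both terms (boundary terms vanish under the mild regularity on $f,\wh f,G$ and standard tail decay of the densities). The first term becomes
\[
-2\int \nabla r_t \cdot \bigl[-q_t\, \E[\wh f(z_{t_-},t)\mid z_t=\cdot] + \tfrac{g(t)^2}{2}\nabla q_t\bigr]\,dx,
\]
and the second term becomes
\[
\int \nabla(r_t^2) \cdot \bigl[-p_t f(\cdot,t) + \tfrac{g(t)^2}{2}\nabla p_t\bigr]\,dx = \int 2 r_t \nabla r_t \cdot \bigl[-p_t f + \tfrac{g(t)^2}{2}\nabla p_t\bigr]\,dx.
\]
For the drift pieces, combining the two integrals and recognizing $q_t = r_t p_t$ yields
\[
2\int q_t\, \nabla r_t \cdot \E[\wh f(z_{t_-},t) - f(z_t,t) \mid z_t=\cdot] \,dx = 2\,\E\bigl[\langle \wh f(z_{t_-},t)-f(z_t,t),\, \nabla r_t(z_t)\rangle\bigr],
\]
where the last equality is the tower property together with the fact that $f(z_t,t)$ is $\sigma(z_t)$-measurable.

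It remains to collect the two diffusion pieces,
\[
-g(t)^2\int \nabla r_t \cdot \nabla q_t\,dx + g(t)^2\int r_t \nabla r_t \cdot \nabla p_t\,dx,
\]
and simplify them using the product rule $\nabla q_t = p_t \nabla r_t + r_t \nabla p_t$, which gives
$\nabla r_t \cdot \nabla q_t = p_t\|\nabla r_t\|^2 + r_t\nabla r_t \cdot \nabla p_t$. Substituting, the mixed terms cancel exactly and what remains is $-g(t)^2 \int p_t \|\nabla r_t\|^2\,dx = -g(t)^2\, \sE_{p_t}(q_t/p_t)$. Adding this to the drift contribution gives the claimed formula. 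The only real obstacle is the bookkeeping in the integration by parts and ensuring that the conditional expectation is handled correctly; both are routine under the continuity and Lipschitz hypotheses already in force on $f,\wh f,G$.
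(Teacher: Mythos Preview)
Your proposal is correct and follows essentially the same route as the paper: differentiate $\chi^2(q_t\|p_t)$ via the quotient rule, plug in Lemma~\ref{l:dqt} for $\partial_t q_t$ and the Fokker--Planck equation for $\partial_t p_t$, integrate by parts, and combine the drift pieces via the tower property while the diffusion cross-terms cancel to leave $-g(t)^2\int p_t\|\nabla r_t\|^2\,dx$. The only cosmetic difference is that the paper carries the two pieces in terms of $\nabla q_t$ and $\nabla p_t$ separately before invoking the identity $\nabla q_t - r_t\nabla p_t = p_t\nabla r_t$, whereas you apply the product rule $\nabla q_t = p_t\nabla r_t + r_t\nabla p_t$ directly; the algebra is the same.
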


\begin{proof}
The Fokker-Planck equation gives
\begin{align*}
    \fc{\partial p_t(x)}{\partial t} = \nb\cdot\ba{-f(x,t) p_t(x) + \fc{g(t)^2}{2}\nb p_t(x) }.
\end{align*}
We have
\begin{align*}
    \fc{d}{dt}\chi^2(q_t||p_t)
     = \fc{d}{dt}\int \fc{q_t(x)^2}{p_t(x)}dx = \int\ba{2\fc{\partial q_t(x)}{\partial t}\fc{q_t(x)}{p_t(x)} - \fc{\partial p_t(x)}{\partial t}\fc{q_t(x)^2}{p_t(x)^2}}dx.
\end{align*}
For the first term, by Lemma~\ref{l:dqt},
\begin{align}
\nonumber
    2\int \fc{\partial q_t(x)}{\partial t}\fc{q_t(x)}{p_t(x)} dx& = 2\int \nb\cdot \ba{-q_{t}(x)\E\ba{\wh f(z_0,t) |z_t = x} + \fc{g(t)^2}{2}\nb q_t(x)}\cdot \fc{q_t(x)}{p_t(x)}dx\\
    & = 2\int q_t(x) \an{\E\ba{\wh f(z_{0}, t)|z_t = x}, \nb \fc{q_t(x)}{p_t(x)}}dx-g(t)^2\int \an{\nb q_t(x), \nb \fc{q_t(x)}{p_t(x)}}dx.
    \label{e:comp1}
\end{align}
For the second term, using integration by parts,
\begin{align}
\nonumber
    -\int \fc{\partial p_t(x)}{\partial t}\fc{q_t(x)^2}{p_t(x)^2} dx & = \int \nb\cdot\ba{ f(x,t)p_t(x)- \fc{g(t)^2}{2}\nb p_t(x)}\cdot \fc{q_t(x)^2}{p_t(x)^2}dx\\
    \nonumber
    & = \int  -f(x,t)p_t(x)\nb \fc{q_t(x)^2}{p_t(x)^2} + \fc{g(t)^2}{2}\an{\nb p_t(x), \nb \fc{q_t(x)^2}{p_t(x)^2}} \dx\\
    \nonumber
    & = -2\int q_t(x)\an{f(x,t) , \nb\fc{q_t(x)}{p_t(x)}}dx\\
    &\ \ \ + g(t)^2\int \fc{q_t(x)}{p_t(x)} \an{\nb p_t(x), \nb \fc{q_t(x)}{p_t(x)}}dx .
    \label{e:comp2}
\end{align}
Note that
\begin{align*}
    \int \an{\nb q_t(x), \nb \fc{q_t(x)}{p_t(x)}} - \fc{q_t(x)}{p_t(x)}\an{\nb p_t(x), \nb \fc{q_t(x)}{p_t(x)}} &= 
    \int \an{\nb \fc{q_t(x)}{p_t(x)}, \nb \fc{q_t(x)}{p_t(x)}} q_t(x)\dx = \sE_{p_t}\pf{q_t}{p_t}.
\end{align*}
Combining~\eqref{e:comp1} and~\eqref{e:comp2},
\begin{align*}
    \fc{d}{dt}\chi^2(q_t||p_t)
    & = -g(t)^2\sE_{p_t}\pf{q_t}{p_t} + 2\int q_t(x)\an{\E\ba{\wh f(z_{t_-},t) - f(x, t)|z_t = x}, \nb\fc{q_t(x)}{p_t(x)}}dx\\
    & = -g(t)^2\sE_{p_t}\pf{q_t}{p_t} + 2\E\ba{\an{\wh f(z_{t_-}, t) - f(z_t, t)  ,\nb\fc{q_t(z_t)}{p_t(z_t)}}}.
\end{align*}
\end{proof}

Finally, we will make good use of the following lemma to bound the second term in Lemma~\ref{l:d-chi2}. 

\begin{lem}[cf. {\cite[Lemma 1]{erdogdu2021convergence}}]
\label{l:inp-young}
Let $\phi_t(x) = \fc{q_t(x)}{p_t(x)}$ and $\psi_t(x) = \phi_t(x)/\E_{p_t}\phi_t^2$. For any $c$ and any $\R^d$-valued random variable $u$, we have
\begin{align*}
\E\ba{\an{u ,\nb\fc{q_t(z_t)}{p_t(z_t)}}}
\le \E\ba{\ve{u}\ve{\nb\fc{q_t(z_t)}{p_t(z_t)}}}
   & \leq 
   C\cdot\E_{p_t}\phi_t^2\cdot  \E\ba{\ve{u}^2\psi_t(z_t)} + \fc{1}{4C} \sE_{p_t}\pf{q_t}{p_t}.
\end{align*}
\end{lem}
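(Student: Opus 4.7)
The first inequality $\E\ba{\an{u,\nb\fc{q_t(z_t)}{p_t(z_t)}}}\le \E\ba{\ve{u}\ve{\nb\fc{q_t(z_t)}{p_t(z_t)}}}$ is immediate from pointwise Cauchy--Schwarz $\an{u,v}\le\ve{u}\ve{v}$ combined with monotonicity of expectation, so I would dispatch it in one line.

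For the main inequality, writing $\phi_t(z_t):=q_t(z_t)/p_t(z_t)$, my plan is a single application of weighted Young's inequality, with the weight $\sqrt{\phi_t(z_t)}$ folded into the first factor. Splitting
\[
\ve{u}\,\ve{\nb\phi_t(z_t)} = \pa{\ve{u}\sqrt{\phi_t(z_t)}}\cdot\pa{\ve{\nb\phi_t(z_t)}/\sqrt{\phi_t(z_t)}}
\]
and applying $ab\le Ca^2 + b^2/(4C)$ pointwise gives
\[
\ve{u}\,\ve{\nb\phi_t(z_t)}\le C\ve{u}^2\phi_t(z_t) + \fc{1}{4C}\cdot\fc{\ve{\nb\phi_t(z_t)}^2}{\phi_t(z_t)}.
\]
Taking the joint expectation (whose $z_t$-marginal is $q_t$), the first term becomes $C\,\E_{p_t}[\phi_t^2]\cdot\E\ba{\ve{u}^2\psi_t(z_t)}$ by the defining identity $\phi_t = \E_{p_t}[\phi_t^2]\cdot\psi_t$; for the second, the identity $q_t/\phi_t = p_t$ gives $\E\ba{\ve{\nb\phi_t(z_t)}^2/\phi_t(z_t)} = \int p_t\ve{\nb\phi_t}^2\,dx$, which is exactly the Fisher-information-type quantity $\sE_{p_t}(q_t/p_t)$ appearing in Lemma~\ref{l:d-chi2}. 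Summing these two pieces gives the claimed bound.

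The only non-routine point, which I would verify carefully, is the choice of weight: taking no weight would land on $\E[\ve{u}^2]$ in the first arm, failing to match the stated $\E_{p_t}[\phi_t^2]\cdot\E[\ve{u}^2\psi_t(z_t)]$; taking the weight $\phi_t(z_t)$ instead of $\sqrt{\phi_t(z_t)}$ would mismatch the second arm, producing $\int p_t\ve{\nb\phi_t}^2/\phi_t\,dx$. The weight $\sqrt{\phi_t(z_t)}$ is the unique choice that simultaneously lands on both pieces of the lemma's right-hand side. Once this is fixed, the whole argument collapses to the two-line computation above; that $u$ may depend on additional randomness (e.g., $z_{t_-}$) causes no difficulty because the Young step is purely pointwise and the joint expectation absorbs everything at the end.
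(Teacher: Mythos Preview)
Your proposal is correct and matches the paper's proof essentially line for line: the paper also inserts the weight $\sqrt{\phi_t(z_t)}=\sqrt{q_t(z_t)/p_t(z_t)}$, applies Young's inequality $ab\le Ca^2+b^2/(4C)$, and then identifies the two resulting expectations as $C\,\E_{p_t}[\phi_t^2]\cdot\E[\ve{u}^2\psi_t(z_t)]$ and $\frac{1}{4C}\sE_{p_t}(q_t/p_t)$ via the same changes of measure you describe.
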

\begin{proof}
Note that $\E\psi_t(z_t)=1$ and the normalizing factor is $\E_{p_t}\phi_t^2 = \chi^2(q_t||p_t)+1$.
By Young's inequality,
\begin{align*}
    \E\ba{\an{u ,\nb\fc{q_t(z_t)}{p_t(z_t)}}}
    &= \E\ba{\an{
    u\sfc{q_t(z_t)}{p_t(z_t)}
    ,\sfc{p_t(z_t)}{q_t(z_t)} \nb\fc{q_t(z_t)}{p_t(z_t)}}}\\
    &\le C\E\ba{\ve{u}^2 \fc{q_t(z_t)}{p_t(z_t)}
    } + \rc{4C}\E_{p_t}\ba{\ve{\nb \fc{q_t(x)}{p_t(x)}}^2
    }\\
    & = C\E\ba{\ve{u}^2 \fc{q_t(z_t)}{p_t(z_t)}
    } + \fc{1}{4C} \sE_{p_t}\pf{q_t}{p_t}.
\end{align*}
% \begin{align*}
%     \int q_t(x)\an{u(x), \nb \fc{q_t(x)}{p_t(x)}}dx & = \int \an{q_t(x)u(x)\cdot \fc{1}{\sqrt{p_t(x)}}, \nb \fc{q_t(x)}{p_t(x)}\cdot \sqrt{p_t(x)}}dx\\
%     & \leq c\int \fc{q_t(x)}{p_t(x)}\ve{u(x)}^2q_t(x)dx + \fc{1}{4c} \int p_t(x)\ve{\nb\fc{q_t(x)}{p_t(x)}}^2 dx\\
%     & = c\int \fc{q_t(x)}{p_t(x)}\ve{u(x)}^2q_t(x)dx + \fc{1}{4c} \sE\pf{q_t}{p_t}.
% \end{align*}
\end{proof}

\section{Analysis for LMC}
\label{s:c}

Let $p$ be the probability density we wish to sample from. 
Suppose that we have an estimate $s$ of the score $\nb \ln p$. Our main theorem says that if the $L^2$ error $\E_p \ve{\nb \ln p - s}^2$ is small enough, then running LMC with $s$ for an \emph{appropriate} time results in a density that is close in \emph{TV distance} to a density that is close in \emph{$\chi^2$-divergence} to $p$. The following is a more precise version of Theorem~\ref{t:corrector-tv-chi2}.

%Note that if we consider 

%Suppose that $P$ is the probability measure with density $p\propto e^{-f}$ and our estimate of $\nb f$ is $\nb g=-\nb \ln q$ where $q\propto e^{-g}$. (For now, we assume that it is a gradient, so that we have an easy handle on the stationary distribution. We can perhaps also have a handle on the stationary distribution in the general case by some perturbation argument.) Our goal is to show that conditioned on not hitting a bad set, if we run Langevin with the estimated score, the %KL-divergence 
%$\chi^2$-divergence to the desired stationary distribution decreases in the short-term. Note that if we consider TV-distance to the stationary distribution, it will decrease on the short-term but increase in the long-term (so this does not contradict the fact that the stationary distribution is different). 
%By running for a moderate amount of time at each step (just enough for mixing), we can ensure that the hitting probabilities are small enough, and incur a TV-error at the end that is the sum of these probabilities over all steps.

%\correctorthm*

\begin{thm}[LMC with $L^2$-accurate score estimate]\label{t:corrector-tv-chi2-precise}
Let $p:\R^d\to \R$ be a  probability density satisfying Assumption~\ref{a:p} with $\smf \ge 1$ and $s:\R^d\to \R^d$ be a score estimate  satisfying Assumption~\ref{a:score}(\ref{a:score-error}). %\fixme{some latex typo here}
Consider the accuracy requirement in $\TV$ and $\chi^2$:
$0<\ep_{\textup{TV}}<1$,
$0<\ep_{\chi}<1$, 
% $\smf,\sms\ge 1$, $\CLS\ge 1$. 
and suppose furthermore the starting distribution satisfies $\chi^2(p_0||p)\le K_\chi^2$. 
Then if
\[
\ep \le 
    \fc{\ep_{\TV}\ep_\chi^3}{174080\sqrt 5 d\smf^2\CLS^{5/2} (C_T\ln (2K_\chi/\ep_\chi^2)\vee 2K_\chi) },
\]
then running %LMC 
\eqref{e:lmc-se}
with score estimate $s$ and step size 
$
h=\fc{\ep_\chi^2}{2720d\smf^2 \CLS}
$
for any time $T\in [T_{\min},C_TT_{\min}]$,
where $T_{\min}=4\CLS \ln\pf{2K_\chi}{\ep_\chi^2}$, 
results in a distribution $p_T$ such that 
$p_T$ is $\ep_{\TV}$-far in TV distance from a distribution $\ol p_T$, 
where $\ol p_T$ satisfies
    $
        \chi^2(\ol p_T || p) \le \ep_\chi^2.
    $
% \begin{itemize}
%     \item $p_T$ is $\ep_{\TV}$-far in TV distance from a distribution $\ol p_T$,
%     \item where $\ol p_T$ satisfies
%     $
%         \chi^2(\ol p_T || p) \le \ep_\chi^2.
%     $
% \end{itemize}
In particular, taking $\ep_\chi=\ep_{\TV}$, we have the error guarantee that $\TV(p_T,p)=2\ep_{\TV}$.
\end{thm}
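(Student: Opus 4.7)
The plan is to execute the framework sketched in Section~\ref{s:sketch}. First I introduce the bad set
\[
B := \{x \in \R^d : \ve{\nb \ln p(x) - s(x)} > \ep_1\}
\]
for a threshold $\ep_1$ to be tuned later, and note that Markov's inequality applied to Assumption~\ref{a:score}(\ref{a:score-error}) gives $p(B) \le (\ep/\ep_1)^2 =: \de$. I then define a surrogate score $\bar s$ that equals $s$ on $B^c$ and $\nb \ln p$ on $B$, and let $\bar q_{nh}$ denote the LMC iterates using $\bar s$ from the same starting distribution. By construction $\ve{\nb \ln p - \bar s}_\iy \le \ep_1$, and the trajectories of $q_{nh}$ and $\bar q_{nh}$ agree until one lands in $B$, so Theorem~\ref{t:framework} applies with $p_n = p$, $q_n = q_{nh}$, $\bar q_n = \bar q_{nh}$, $B_n = B$, and $\de_n = \de$.

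Next I apply Theorem~\ref{t:lmc-iy} to the surrogate process. Choosing $h = \Theta(\ep_\chi^2/(d\smf^2 \CLS))$ and $\ep_1 = \Theta(\ep_\chi/\sqrt{\CLS})$ so that both the $d\smf^2 h^2$ and $\ep_1^2 h$ error terms are suitably dominated, iterating the one-step bound
\[
\chi^2(\bar q_{(k+1)h}\|p) \le e^{-h/(4\CLS)} \chi^2(\bar q_{kh}\|p) + O(d\smf^2 h^2 + \ep_1^2 h)
\]
over $N = T/h$ steps with $T \ge 4\CLS \ln(2K_\chi^2/\ep_\chi^2)$ yields $\chi^2(\bar q_T\|p)\le \ep_\chi^2$, together with the uniform transient bound $D_k^2 := \chi^2(\bar q_{kh}\|p) \le K_\chi^2 e^{-kh/(4\CLS)} + O(1)$ for all $k\le N$.

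Theorem~\ref{t:framework} and the triangle inequality then give
\[
\TV(q_T,p) \le \sqrt{\chi^2(\bar q_T\|p)} + \sum_{k=0}^{N-1} (D_k^2+1)^{1/2} \de^{1/2}.
\]
I split the sum into a transient piece bounded by $\sum_k K_\chi e^{-kh/(8\CLS)} = O(K_\chi \CLS/h)$ and a steady-state piece of size $O(N)$. After substituting the chosen $h$ and $N$, the prefactor multiplying $\de^{1/2}$ becomes $O\pa{(K_\chi \vee \ln(2K_\chi/\ep_\chi^2))\, d\smf^2 \CLS^2/\ep_\chi^2}$. Writing $\de^{1/2} = \ep/\ep_1$ with $\ep_1 = \Theta(\ep_\chi/\sqrt{\CLS})$ and demanding the TV piece be $\le \ep_{\TV}$ forces exactly the stated bound on $\ep$; the extra range $T\in[T_{\min}, C_T T_{\min}]$ is absorbed into the $\ln$ factor.

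I expect the main obstacle to be the careful balancing of $\ep_1$, $h$, $T$, and $N$: $\ep_1$ must be small enough that the $L^\iy$ result in Theorem~\ref{t:lmc-iy} drives $\chi^2(\bar q_T\|p)$ below $\ep_\chi^2$, yet large enough that $\de = (\ep/\ep_1)^2$ survives multiplication by the transient factor $K_\chi \CLS/h$ produced by the warm start. This tension is precisely what forces the $K_\chi$ dependence in the final bound and the $\chi^2$ (rather than merely TV) warm-start hypothesis; carrying it through rigorously, while also controlling that the step-size bound from Theorem~\ref{t:lmc-iy} is satisfied, is the most delicate part of the argument.
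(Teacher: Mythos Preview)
Your proposal is correct and follows essentially the same route as the paper's proof: define the bad set $B$ via a threshold $\ep_1$, bound $p(B)$ by Chebyshev, run the surrogate process with the $L^\infty$-bounded score and apply Theorem~\ref{t:lmc-iy} to control $\chi^2(\bar q_{kh}\|p)$, then invoke Theorem~\ref{t:framework} and split the resulting sum into a geometric transient piece $O(K_\chi \CLS/h)$ and a steady-state piece $O(N)$ before solving for $\ep$. The parameter choices $\ep_1=\Theta(\ep_\chi/\sqrt{\CLS})$, $h=\Theta(\ep_\chi^2/(d\smf^2\CLS))$, and $T_{\min}=\Theta(\CLS\ln(K_\chi/\ep_\chi^2))$ match the paper's, and your identification of the $K_\chi$-dependence as arising from the tension between the $L^\infty$ threshold and the transient factor is exactly the point; the only cosmetic difference is that the paper states the conclusion as $\TV(q_T,\bar q_T)\le\ep_{\TV}$ with $\chi^2(\bar q_T\|p)\le\ep_\chi^2$ (taking $\bar p_T=\bar q_T$), whereas you jump straight to the combined $\TV(q_T,p)$ bound.
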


The main difficulty is that the stationary distribution of LMC using the score estimate may be arbitrarily far from $p$, even if the $L^2$ error of the score estimate is bounded. (See Section~\ref{a:far}.) 
Thus, a long-time convergence result does not hold, and an upper bound on $T$ is required, as in the theorem statement.%\footnote{}

We instead proceed by showing that \emph{conditioned on not hitting a bad set}, if we run LMC using $s$, the $\chi^2$-divergence to the stationary distribution will decrease. This means that the closeness of the overall distribution (in TV distance, say) will decrease in the short term, despite it will increase in the long term, as the probability of hitting the bad set increases. This does not contradict the fact that the stationary distribution is different from $p$. By running for a moderate amount of time  (just enough for mixing), we can ensure that the probability of hitting the bad set is small, so that the resulting distribution is close to $p$. 
Note that we state the theorem with a $C_T$ parameter to allow a range of times that we can run LMC for.

%we can ensure that the hitting probabilities are small enough, and incur a TV-error at the end that is the sum of these probabilities over all steps.

%KL-divergence 
%$\chi^2$-divergence to the desired stationary distribution decreases

More precisely, we prove Theorem~\ref{t:corrector-tv-chi2-precise} in two steps.

\paragraph{LMC under $L^\iy$ gradient error (Section~\ref{s:lmc-liy}, Theorem~\ref{t:lmc-iy}).} First, consider a simpler problem: proving a bound for $\chi^2$ divergence for LMC with score estimate $s$, when $\ve{s-\nb \ln p}$ is bounded everywhere, not just on average. For this, we follow the argument in~\cite{chewi2021analysis} for showing convergence of LMC in R\'enyi divergence; this also gives a bound in $\chi^2$-divergence. We define an interpolation of the discrete process and derive a upper bound for the derivative of R\'enyi divergence, $\pl_t\cR_q(q_t||p)$, using the log-Sobolev inequality for $p$. In the original proof, the error comes from the discretization error; here we have an additional error term coming from an inaccurate gradient, which is bounded by assumption. Note that a $L^2$ bound on $\nb f-s$ is insufficient to give an upper bound, as we need to bound $\E_{q_t\psi_t}[\|\nb f-s\|^2]$ for a different measure $q_t\psi_t$ that we do not have good control over. An $L^\iy$ bound works regardless of the measure.

%\hlnote{Rewrite the sketch now that the proof is simplified.}

\paragraph{Defining a bad set and bounding the hitting time (Section~\ref{s:hit}).} 
The idea is now to reduce to the case of $L^\iy$ error by defining the ``bad set" $B$ to be the set where $\ve{s-\nb f}\ge \ep_1$, where $\ep\ll \ep_1\ll 1$. This set has small measure by Chebyshev's inequality.
Away from the bad set, Theorem~\ref{t:lmc-iy} applies; it then suffices to bound the probability of hitting $B$. Technically, we define a coupling with a hypothetical process where the $L^\iy$ error is always bounded, and note that the processes disagree exactly when it hits $B$; this is the source of the TV error. %We also need to define a larger bad set $B'$ that contains a neighborhood of $B$.

We consider the probability of being in $B$ at times $0, h, 2h,\ldots$. we note that Theorem~\ref{t:corrector-tv-chi2-precise} bounds the $\chi^2$-divergence of this hypothetical process $X_t$ at time $t$ to $p$. If the distribution were actually $p$, then the probability $X_t\in B'$ is exactly $p(B')$; we expect the probability to be small even if the distribution is close to $p$. Indeed, by Cauchy-Schwarz, we can bound the probability $X\in B$ in terms of $P(B)$ and $\chi^2(q_t||p)$; this bound is given in Theorem~\ref{t:framework}.
%We bound the hitting probability in two steps: first, the probability of being in $B'\supeq B$ at a discrete set of points $\{0,\De t, 2\De t,\ldots\}$, and the probability of going from $B^{\prime c}$ to $B$ (penetrating the ``buffer zone" around $B$) within time $\De t$. For the first, we note that Theorem~\ref{t:corrector-tv-chi2-precise} bounds the $\chi^2$-divergence of this hypothetical process $X_t$ at time $t$ to $p$. If the distribution were actually $p$, then the probability $X_t\in B'$ is exactly $p(B')$; we expect the probability to be small even if the distribution is close to $p$. Indeed, by Cauchy-Schwarz, we can bound the probability $X\in B'$ in terms of $P(B')$ and $\chi^2(q_t||p)$.
%For the second part, we use a bound on how far a diffusion process can travel within a given time, Lemma~\ref{l:hit}.
Note that the eventual bound depends on $\chi^2(q_t||p)$, so we have to assume a warm start, that is, a reasonable bound on $\chi^2(q_0||p)$. %We can obtain such a warm start through annealing.

\subsection{LMC under $L^\iy$ gradient error}
\label{s:lmc-liy}

The following gives a long-time convergence bound for LMC with inaccurate gradient, with error bounded in $L^\iy$; this may be of independent interest.

\lmcliy*
% \begin{thm}[LMC under $L^\iy$ bound on gradient error]\label{t:lmc-iy}
% Let $p:\R^d\to \R$ be a  probability density satisfying Assumption~\ref{a:p}(\ref{a:p-smooth}, \ref{a:p-lsi}) and $s:\R^d\to \R^d$ be a score estimate $s$ satisfying Assumption~\ref{a:score}(\ref{a:score-smooth}) and the following (instead of (\ref{a:score-error})):
% \begin{enumerate}
%     \item[2$'$.] (Score estimate error bounded in $L^\iy$) For some $\ep_1\le \sfc{1}{48\CLS}$,
%     \[
%     \ve{\nb \ln p - s}_\iy = 
%     \max_{x\in \R^d} \ve{\nb \ln p(x) - s(x)}]\le \ep_1.\]
% \end{enumerate}
% Let $N\in \N_0$ and $0<h\le \rc{4392d\CLS\smf^2}$, and assume $\smf\ge 1$. Let $q_{nh}$ denote the $n$th iterate of LMC with step size $h$ score estimate $s$. Then 
% \begin{align*}
%   \chi^2(q_{(k+1)h}||p)&\le 
%     \exp\pa{-\fc{h}{4\CLS}}\chi^2(q_{kh}||p) +
%     170 d\smf^2h^2 + 5\ep_1^2h 
% \end{align*}
% and
% \begin{align*}
%     % \chi^2(q_{Nh}||p)
%     % &\le 2\exp\pa{-\fc{Nh}{4\CLS}}\chi^2(q_0||p)+ 1376dh\CLS \smf^2 + 24\CLS\ep_1^2
%     % \le  2\exp\pa{-\fc{Nh}{4\CLS}}\chi^2(q_0||p)+1
%     \chi^2(q_{Nh}||p) &\le \exp\pa{-\fc{Nh}{4\CLS}}\chi^2(q_0||p) + 
%     680 d\smf^2h\CLS + 20\ep_1^2 \CLS\le \exp\pa{-\fc{Nh}{4\CLS}}\chi^2(q_0||p) + 1
% \end{align*}
% \end{thm}
Following~\cite{chewi2021analysis}, convergence in R\'enyi divergence can also be derived; we only consider $\chi^2$-divergence because we will need a warm start in $\chi^2$-divergence for our application.
Note that by letting $N\to \iy$ and $h\to 0$, we obtain the following.
\begin{cor}
Keep the assumptions in Theorem~\ref{t:lmc-iy}. The stationary distribution $q$ of Langevin diffusion with score estimate $s$ satisfies
\[
\chi^2(q||p) \le 20\CLS \ep_1^2.
\]
\end{cor}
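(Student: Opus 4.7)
The plan is to take the finite-time bound in Theorem~\ref{t:lmc-iy} and pass to a double limit, first $N\to\infty$ and then $h\to 0$. To remove the dependence on the starting distribution, I would initialize with $q_0 = p$ so that $\chi^2(q_0\|p)=0$. For any admissible step size $h \le 1/(4392\,d\CLS\,\smf^2)$ and any $N\in\N_0$, Theorem~\ref{t:lmc-iy} then reduces to
\begin{align*}
\chi^2(q_{Nh}\|p) \le 680\, d \smf^2 h\, \CLS + 20\,\ep_1^2\,\CLS.
\end{align*}

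First I would send $N\to\infty$ with $h$ fixed. Since $s$ is $\sms$-Lipschitz by Assumption~\ref{a:score}, the LMC chain with step size $h$ and drift $s$ admits a unique stationary distribution $q^{(h)}$, and $q_{Nh}\to q^{(h)}$ weakly (and in total variation) by geometric ergodicity. Lower semicontinuity of $\chi^2$-divergence under weak convergence --- which follows from its variational representation as a supremum of continuous functionals --- then yields
\begin{align*}
\chi^2(q^{(h)}\|p) \le 680\, d \smf^2 h\, \CLS + 20\,\ep_1^2\,\CLS.
\end{align*}
Next, send $h\to 0$. The classical theory of the Euler--Maruyama scheme with Lipschitz drift implies that the discrete invariant measures $q^{(h)}$ converge weakly to the invariant distribution $q$ of the continuous-time SDE $dx_t = s(x_t)\,dt + \sqrt{2}\,dw_t$. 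Applying lower semicontinuity once more, the $\smf^2 h$ term drops out and I obtain $\chi^2(q\|p)\le 20\,\ep_1^2\,\CLS$, as desired.

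The main obstacle is the rigorous justification of the two weak-convergence steps. Existence and uniqueness of $q^{(h)}$ (and of $q$) can be argued from the dissipativity implicit in the hypotheses: since $\ve{s - \nb\ln p}_\infty \le \ep_1$ and $p$ satisfies a log-Sobolev inequality, $p$ has Gaussian tails, so $s$ inherits a quadratic drift-towards-origin up to an $O(\ep_1)$ perturbation, providing the needed Lyapunov condition. Weak convergence $q^{(h)} \to q$ as $h\to 0$ for globally Lipschitz drifts is standard. An alternative, cleaner formulation that sidesteps the continuous-time limit altogether would be to state the corollary as a bound on $\chi^2(q^{(h)}\|p)$ that degrades only through the explicit $680\,d\smf^2 h\CLS$ term, which is what the finite-step analysis directly yields.
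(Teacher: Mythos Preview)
Your approach is essentially the same as the paper's, which is literally the one-line remark ``by letting $N\to\infty$ and $h\to 0$'' preceding the corollary. You have simply fleshed out what those limits entail: initialize at $q_0=p$, pass to the discrete stationary law $q^{(h)}$ via ergodicity and lower semicontinuity of $\chi^2$, then let $h\to 0$.

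Two small remarks. First, you invoke Assumption~\ref{a:score} to get $s$ Lipschitz, but Theorem~\ref{t:lmc-iy} (and hence the corollary) only assumes the $L^\infty$ error bound; some additional regularity on $s$ is indeed needed for the SDE to be well-posed and have a unique invariant law, so this is a reasonable implicit hypothesis, but it is worth flagging that you are adding it. Second, a marginally more direct route avoids the intermediate $q^{(h)}$ altogether: the proof of Theorem~\ref{t:lmc-iy} actually derives a differential inequality for the interpolated process, and for the genuine continuous-time diffusion (i.e.\ $t_-=t$) the discretization terms $A_1,A_2$ vanish. This yields $\chi^2(q_t\|p)\le 20\ep_1^2\CLS$ for all $t\ge 0$ when $q_0=p$, and then a single limit $t\to\infty$ plus lower semicontinuity gives the claim. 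This replaces your two-limit argument with one, and sidesteps the Euler--Maruyama invariant-measure convergence step entirely.
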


\begin{proof}[Proof of Theorem~\ref{t:lmc-iy}]
We follow the proof of~\cite[Theorem 4]{chewi2021analysis}, except that we work with the $\chi^2$ divergence directly, rather than the R\'enyi divergence, and have an extra term from the inaccurate gradient~\eqref{e:lmc-A3}.
Given $t\ge 0$, let $t_- = h\ff th$. %Given a bad set $B$, 
Define the interpolated process by
\begin{align}
\label{e:interp2}
    dz_t &= s(z_{t_-}) \,dt + \sqrt 2\,dw_t,
\end{align}
and let $q_t$ denote the distribution of $X_t$ at time $t$, when $X_0\sim q_0$. 

%Let $V(x) = -\ln p(x)$. 
%Negatives are confusing here
By Lemma~\ref{l:d-chi2},
\begin{align}
\label{e:lmc-d-chi2}
    \pdd t\chi^2(q_t||p) = -2\sE_p\pf{q_t}{p} + 2\E\ba{\an{s(z_{t_-}) - \nb \ln p(z_t) ,\nb\fc{q_t(z_t)}{p(z_t)}}}.
\end{align}
%\paragraph{Convergence in R\'enyi divergence.}
By the proof of Theorem 4 in~\cite{chewi2021analysis}, 
%\hlnote{[Add details here to make it self-contained]}
%for $x_{t_-} \nin B$,
\begin{align*}
    \ve{\nb \ln p(x_t) - \nb \ln p(x_{t_-})}^2 &\le 9 \smf^2(t-t_-)^2 \ve{\nb \ln p(x_t)}^2 + 6\smf^2 \ve{B_t - B_{t_-}}^2.
\end{align*}
Then 
%by Theorem~\ref{l:}
%for $x_{t_-} \nin B$,
\begin{align}
\nonumber
    \ve{s(z_{t_-}) - \nb \ln p(z_t)}^2 &\le  
    2\ve{\nb \ln p(z_{t_-}) - \nb \ln p(z_{t})}^2 + 
    2\ve{s(z_{t_-}) - \nb \ln p(z_{t_-})}^2 \\
    &\le 
    18 \smf^2(t-t_-)^2 \ve{\nb \ln p(z_t)}^2 + 12 \smf^2 \ve{B_t - B_{t_-}}^2 + 
    2\ep_1^2.
    \label{e:lmc-3terms}
\end{align}
Let $\phi_t :=q_t/p %\dd{q_t}{p}
$ and $\psi_t := \fc{\phi_t}{\E_p (\phi_t^2)}$.
By Lemma~\ref{l:inp-young}, 
\begin{align}
\nonumber
    2\E\ba{\an{s(z_{t_-}) - \nb \ln p(z_t) ,\nb\fc{q_t(z_t)}{p(z_t)}}}
    &\le 2\E_{p}\phi_t^2\cdot  \E\ba{\ve{s(z_{t_-}) - \nb \ln p(z_t)}^2\psi_t(z_t)} + \fc{1}{2} \sE_p\pf{q_t}{p}\\
    &\le 
    A_1 + A_2 + A_3 + \rc 2\sE_{p}(\phi_t)
\label{e:lmc-As}
\end{align}
where $A_1,A_2,A_3$ 
are obtained by substituting in the 3 terms in~\eqref{e:lmc-3terms}, and given in~\eqref{e:lmc-A1}, \eqref{e:lmc-A2}, and \eqref{e:lmc-A3}. Let $V(x) = -\ln p(x)$. We consider each term in turn.
\begin{align}
\label{e:lmc-A1}
    A_1 :&= 36 \smf^2 (t-t_-)^2 \E_{p}\phi_t^2\cdot 
    \E\ba{\ve{\nb V(z_t)}^2\psi_t(z_t)}\\
    \nonumber
    &\le 
     36 \smf^2 (t-t_-)^2 \E_{p}\phi_t^2\cdot 
    \pa{
    \fc{4\sE_{p}(\phi_t)}{\E_p \phi_t^2} + 2d\smf
    } & \text{by~\cite[Lemma 16]{chewi2021analysis}}\\
    \nonumber 
    &\le \rc 2 \sE_p(\phi_t) + 72d\smf^3(t-t_-)^2(\chi^2(q_t||p)+1)
\end{align}
when $h^2\le \rc{288\smf^2}$. By~\cite[p. 15]{chewi2021analysis}
\begin{align}
\label{e:lmc-A2}
    A_2:&= 
    24\smf^2\E_p\phi_t^2 \cdot \E\ba{\ve{B_t-B_{t_-}}^2\psi_t(z_t)}\\
    \nonumber
    &\le 24\smf^2\E_p\phi_t^2 \cdot
    \pa{
        14d\smf^2(t-t_-) + 32h\CLS \fc{\sE_p(\phi_t)}{\E_p\phi_t^2}
    }\\
    \nonumber
    &\le 336d\smf^2(t-t_-)(\chi^2(q_t||p)+1)
    + \rc 2 \sE_p(\phi_t)
\end{align}
%672
when $h\le \rc{1536\smf^2\CLS}$. Finally, 
\begin{align}
\label{e:lmc-A3}
    A_3:&=4\ep_1^2 \E_{p}\phi_t^2 = 4\ep_1^2(\chi^2(q_t||p)+1).
\end{align}
Combining~\eqref{e:lmc-d-chi2},~\eqref{e:lmc-As},~\eqref{e:lmc-A1},~\eqref{e:lmc-A2}, and~\eqref{e:lmc-A3} gives
\begin{align*}
    \pdd t\chi^2(q_t||p) &\le -\rc 2 \sE_p(\phi_t) + (\chi^2(q_t||p)+1) (72d\smf^3(t-t_-)^2 + 336 d\smf^2(t-t_-) + 4\ep_1^2)\\
    &\le -\rc{2\CLS} \chi^2(q_t||p) + 
     (\chi^2(q_t||p)+1) (72d\smf^3(t-t_-)^2 + 336 \smf^2d(t-t_-) + 4\ep_1^2)\\
    &\le -\rc{4\CLS} \chi^2(q_t||p) + (72d\smf^3(t-t_-)^2 + 336 d\smf^2(t-t_-) + 4\ep_1^2)
\end{align*}
if $h\le \prc{12\cdot 72d\smf^3\CLS}^{1/2}\wedge \rc{12\cdot 336d\CLS}$ and $\ep_1\le \prc{48\CLS}^{1/2}$.
Then for $t\in [kh,(k+1)h)$,
\begin{align*}
    \pdd t\pa{\chi^2(q_t||p) \exp\pf{t-t_-}{4\CLS}} &= \exp\pf{t-t_-}{4\CLS}(72d\smf^3(t-t_-)^2 + 336 d\smf^2(t-t_-) + 4\ep_1^2)\\
    &\le 73d\smf^3(t-t_-)^2 + 337 d\smf^2(t-t_-) + 5\ep_1^2.
\end{align*}
Integrating over $t\in [kh,(k+1)h)$ gives
\begin{align*}
    \chi^2(q_{(k+1)h}||p)
    &\le \exp\pa{-\fc{h}{4\CLS}}\chi^2(q_{kh}||p) + \fc{73}3d\smf^3h^3 + \fc{337}2 d\smf^2h^2 + 5\ep_1^2h\\
    &\le 
    \exp\pa{-\fc{h}{4\CLS}}\chi^2(q_{kh}||p) +
    170 d\smf^2h^2 + 5\ep_1^2h
\end{align*}
using $h\le \rc{12\sqrt 2\smf}$. Unfolding the recurrence and summing the geometric series gives
\begin{align*}
    \chi^2(q_{kh}||p) &\le \exp\pa{-\fc{kh}{4\CLS}}\chi^2(q_0||p) + 
    680 d\smf^2h\CLS + 20\ep_1^2 \CLS\\
    &\le \exp\pa{-\fc{kh}{4\CLS}}\chi^2(q_0||p) + 1
\end{align*}
when $h\le \rc{1360 d\smf^2\CLS}$ and $\ep_1^2\le \rc{40\CLS}$. We can check that the given condition on $h$ and the fact that $\smf \CLS\ge 1$ (Lemma~\ref{l:lc}) imply all the required inequalities on $h$.
\end{proof}

\subsection{Proof of Theorem~\ref{t:corrector-tv-chi2-precise}}
\label{ss:lmc-l2-pf}
\label{s:hit}
%\hlnote{We can actually take $\De t=h$, which simplifies the proof a lot.}
\begin{proof}[Proof of Theorem~\ref{t:corrector-tv-chi2-precise}]
We first define the bad set where the error in the score estimate is large,
\begin{align*}
B:&=\bc{\ve{\nb \ln p (x)-s(x)}>\ep_1}
\end{align*}
for some $\ep_1$ to be chosen. 

Given $t\ge 0$, let $t_- = h\ff th$. Given a bad set $B$, define the interpolated process by
\begin{align}
\label{e:interp}
    d\ol z_t &= b(\ol z_{t_-}) \,dt + \sqrt 2\,dw_t,\\
    \nonumber
    \text{where }
    b(z) &= \begin{cases}
    s(z),& z\nin B\\
    \nb \ln p(z), &z\in B
    \end{cases}.
\end{align}
In other words, run LMC using the score estimate as long as the point is in the
%asymp good
good set at the previous discretization step, and otherwise use the actual gradient $\nb \ln p$. Let $\ol q_t$ denote the distribution of $\ol z_t$ when $\ol z_0\sim q_0$; note that $q_{nh}$ is the distribution resulting from running LMC with estimate $b$ for $n$ steps and step size $h$. Note that this auxiliary process is defined only for purposes of analysis; it cannot be used for practical algorithm as we do not have access to $\nb f$.

We can couple this process with LMC using $s$ so that as long as $X_t$ does not hit $B$, the processes agree, thus satisfying condition~\ref{i:couple} of Theorem~\ref{t:framework}.

Then by Chebyshev's inequality,
\begin{align*}
    P(B) \le \pf{\ep}{\ep_1}^2=:\de.
\end{align*}
Let $T=Nh$.
Then by Theorem~\ref{t:lmc-iy},
\begin{align*}
\chi^2(\wt q_{kh}||p) &\le \exp\pa{-\fc{kh}{4\CLS}}\chi^2(q_0||p) + 
    680 d\smf^2h\CLS + 20\ep_1^2 \CLS\le \exp\pa{-\fc{kh}{4\CLS}}\chi^2(q_0||p) + 1.
\end{align*}
For this to be bounded by $\ep_\chi^2$, it suffices for the terms to be bounded by  $\fc{\ep_\chi^2}2, \fc{\ep_\chi^2}4, \fc{\ep_\chi^2}4$; this is implied by
\begin{align*}
    T &\ge 4\CLS \ln \pf{2K_\chi}{\ep_\chi^2}=:T_{\min}\\
    h &= \fc{\ep_\chi^2}{%2720
    4392d\smf^2 \CLS}\\
    \ep_1 &= \fc{\ep_\chi}{4\sqrt {5\CLS}}.
\end{align*}
(We choose $h$ so that the condition in Theorem~\ref{t:lmc-iy} is satisfied; note $\ep_\chi\le 1$.)
By Theorem~\ref{t:framework}, \begin{align*}
\TV(q_{Nh},\ol q_{Nh}) 
&\le  
\sum_{k=0}^{N-1} (1+\chi^2(q_{kh}||p))^{1/2}P(B)^{1/2}\\
&\le 
\pa{\sumz k{N-1} \exp\pa{-\fc{kh}{8\CLS}}\chi^2(q_0||p)^{1/2} + 2} \de^{1/2}\\
&\le 
\pa{\pa{\sumz k{\iy} \exp\pa{-\fc{kh}{8\CLS}}K_\chi} + 2N}\fc{\ep}{\ep_1}\\
&\le 
\fc{\ep}{\ep_1}\pa{\fc{16\CLS}hK_\chi + 2N}. 
\end{align*}
In order for this to be $\le \ep_{\TV}$, it suffices for
\begin{align*}
    \ep &\le \ep_1\ep_{\TV} \pa{\rc{4N} \wedge \fc{h}{32\CLS K_\chi}}. 
\end{align*}
Supposing that we run for time $T$ where $T_{\min}\le T\le C_TT_{\min}$, we have that $N=\fc{T}h\le \fc{C_TT_{\min}}h$. Thus it suffices for 
\begin{align*}
    \ep &\le \ep_1\ep_{\TV} \pa{\fc{h}{4C_TT_{\min}}\wedge \fc{h}{32\CLS K_\chi}}\\
    &= \fc{\ep_\chi}{4\sqrt{5\CLS}} \cdot \ep_{\TV} \cdot 
    \fc{\ep_\chi^2}{2720d\smf^2\CLS} 
    \pa{\rc{16C_T\CLS \ln (2K_\chi/\ep_\chi^2)}\wedge \rc{32\CLS K_\chi} } \\
    &= 
    \fc{\ep_{\TV}\ep_\chi^3}{174080\sqrt 5 d\smf^2\CLS^{5/2} (C_T\ln (2K_\chi/\ep_\chi^2)\vee 2K_\chi) }. \qedhere
\end{align*}
\end{proof}

\subsection{Proof of Theorem~\ref{t:ald}}

We restate the theorem for convenience.
\aldtheorem*

\begin{proof}%[Proof of Theorem~\ref{t:ald}]
We choose 
\begin{align*}
    h_M=\cdots=h_2 & = \Te\prc{dL^2\CLS} & 
    h_1 &= \Te\pf{dL^2\CLS}{\etv^2}\\
    T_{M-1}=\cdots = T_2&=\Te\pa{\CLS \ln \pf{M}{\etv}} & 
    T_1&=\Te\pa{\CLS \ln \pf{1}{\etv}},
\end{align*}
and $T_M=0$, $N_m= T_m/h$.

Choose the sequence $\si_{\min}^2=\si_1^2<\cdots<\si_M^2$ to be geometric with ratio $1+\Te\prc{\sqrt d}$.
Note that 
\begin{align*}
    \chi^2(N(0,\si_2^2I_d) || N(0,\si_1^2 I_d))
    &=
    \fc{\si_1^{d}}{\si_2^{2d}} (2\si_2^{-2} - \si_1^{-2})^{-d/2}-1
    = \pf{\si_2^2}{\si_1^2}^{-d/2} \pa{2-\pf{\si_2}{\si_1}^2}^{-\fc d2}. 
\end{align*}
For $\si_2^2 = (1+\ep)\si_1^2$, this equals $(1+\ep)^{-d/2} (1-\ep)^{-d/2} = (1-\ep^2)^{-d/2}-1$. For $\ep=\Te\prc{\sqrt d}$, this is $d\cdot O\prc{d} = O(1)$. Hence, the $\chi^2$-divergence between successive distributions $p_{\si_m^2}$ is $O(1)$. 
Choosing $\si_M^2=\Om(d(M_1+\CLS))$ ensures we have a warm start for the highest noise level by Lemma~\ref{l:warm}: $\chi^2(\ppr||p_{\si_M^2})=O(1)$. This uses $O\pa{\sqrt d\log \pf{d\CLS}{\si_{\min}^2}}$ noise levels.  

Write $p_m=p_{\si_m^2}$ for short. 
Let $q_m$ be the distribution of the final sample $x^{(m)}$. 
We show by downwards induction on $m$ that there is $\ol q_m$ such that 
\begin{align*}
    \TV(q_m, \ol q_m) &\le \fc{(M+1)-m}{M+1} \ep_{\TV} \\
    \chi^2(\ol q_m || p_m) &\le %1.
    %\fixme
    {\pf{\etv}{4(M+1)}^2}.
\end{align*}
For $m=M$, this follows from the assumption on $\ep$ %~\eqref{e:lmc-ep} 
and Theorem~\ref{t:corrector-tv-chi2} with $K_\chi=O(1)$ (given by the warm start). 

Fix $m<M$ and suppose it holds for $m+1$. We use the closeness between $q_{m+1}$ and $p_{m+1}$ combined with $\chi^2(p_{m+1}||p_m)=O(1)$ to obtain %the warm start for $p_m$. 
compute how close $q_{m+1}$ and $p_m$ are.
Because the triangle inequality does not hold for $\chi^2$, we will incur an extra TV error. 

Let $\ol q_{m,m+1}$ be the distribution of the final sample if $x^{(m+1)}_0\sim \ol q_m$. We have $\TV(q_{m+1}, \ol q_{m,m+1}) \le \TV(q_m, \ol q_m) \le \fc{(M+1)-m}{M+1} \etv$.

% By Chebyshev's inequality, with appropriate choice of constants, 
% \begin{align*}
%     \Pj_{p_m} \pa{\fc{p_{m+1}^2}{p_m^2} \ge \fc{4M}{\etv}} &
%     \le \fc{
%     %\E_{p_m} \fc{p_{m+1}^2}{p_m^2} 
%     \chi^2(p_{m+1}||p_m)+1
%     }{4M/\etv} 
%     \le 
%     \fc{\etv}{2M}.
% \end{align*}
By Markov's inequality, 
when $\chi^2(p_{m+1}||p_m)\le 1$,
%with appropriate choice of constants, 
\begin{align*}
    \Pj_{p_{m+1}} \pa{\fc{p_{m+1}}{p_m} \ge \fc{8(M+1)}{\etv}} &
    \le \fc{
    %\E_{p_m} \fc{p_{m+1}^2}{p_m^2} 
    \chi^2(p_{m+1}||p_m)+1
    }{8(M+1)/\etv} 
    \le 
    \fc{\etv}{4(M+1)}.
\end{align*}
%Let $\ol q_{m}'$ be the distribution given by Theorem~\ref{t:corrector-tv-chi2} applied to LMC with the score estimate of $p_m$. 
Let $\ol q_{m+1,m} = \one_{\bc{\fc{p_{m+1}}{p_m}\le \fc{8(M+1)}{\etv}}}\ol q_{m+1}\Big/{\int_{\bc{\fc{p_{m+1}}{p_m}\le \fc{8(M+1)}{\etv}}}\ol q_{m+1}}$.
Note that (using $\TV(\ol q_{m+1},p_{m+1})\le \sqrt{\chi^2(\ol q_{m+1}||p_{m+1})}\le\fc{\etv}{4(M+1)}$)
\begin{align}
\nonumber
    \Pj_{\ol q_{m+1}}\pa{\fc{p_{m+1}}{p_m} \ge \fc{8(M+1)}{\etv}}
    & \le \Pj_{p_{m+1}}\pa{\fc{p_{m+1}}{p_m} \ge \fc{8(M+1)}{\etv}} + \TV(\ol q_{m+1} , p_{m+1}) \\
    &\le \fc{\etv}{4(M+1)} + %\redd
    {\fc{\etv}{4(M+1)}} \le \rc 2.
    \label{e:pratio-large}
\end{align}
so $\ol q_{m+1,m}\le 2\ol q_{m+1}$ and
\begin{align*}
    \chi^2(\ol q_{m+1,m}||p_m)+1
    &\le 
    2(\chi^2(\ol q_{m+1}||p_m)+1)
    \\
    &=\int_{\bc{\fc{p_{m+1}}{p_m}\le \fc{8(M+1)}{\etv}}} \fc{\ol q_{m+1}(x)^2}{p_{m+1}(x)^2}\cdot \fc{p_{m+1}(x)}{p_m(x)}p_{m+1}(x)\dx\\
    &\le \fc{8(M+1)}{\etv} (\chi^2(\ol q_{m+1}||p_{m+1})+1) \le 
    \fc{16(M+1)}{\etv}. 
\end{align*}
Let $\ol q_{m+1,m}'$ be the distribution of $x_{N_m}^{(m)}$ when $x_0^{(m)}\sim \ol q_{m+1,m}$. 
Then by assumption on $\ep$~\eqref{e:ep-almc} and  Theorem~\ref{t:corrector-tv-chi2} (with $K_\chi = %\fixme
{4\sfc{M+1}{\etv}}$, $\echi = \fc{\etv}{4(M+1)}$, and $\etv \leftarrow \fc{\etv}{2(M+1)}$), there is $\ol q_m$ such that $\TV(\ol q_{m,m+1}',\ol q_m)\le \fc{\etv}{2(M+1)}$ and $\chi^2(\ol q_{m}||p_m)\le %\fixme
{\fc{\etv}{4(M+1)}}$.
It remains to bound
\begin{align*}
    \TV(q_{m}, \ol q_{m})
    &\le 
    \TV(q_{m}, \ol q_{m,m+1}') + \TV(\ol q_{m,m+1}', \ol q_{m})\\
    &\le 
    \TV(q_{m+1}, \ol q_{m,m+1})
    +\fc{\etv}{2(M+1)}\\
    &\le \TV(q_{m+1}, \ol q_{m+1}) + \TV(\ol q_{m+1}, \ol q_{m+1,m})+ \fc{\etv}{2(M+1)}\\
    &\le \fc{(M+1)-(m+1)}{M+1}\etv + 
    %\fixme
    {\Pj_{\ol q_{m+1}} \pa{\fc{p_{m+1}}{p_m} \ge \fc{8(M+1)}{\etv}}} + \fc{\etv}{2(M+1)}\\
    &\le \fc{(M+1)-(m+1)}{M+1}\etv + \fc{\etv}{2(M+1)} + \fc{\etv}{2(M+1)} 
    = 
    \fc{(M+1)-m}{M+1}\etv,
\end{align*}
where we use~\eqref{e:pratio-large} in the last line.
This finishes the induction step.

Finally, the theorem follows by taking $m=1$ and noting
\begin{align*}
    \TV(q_1, p_1)
    &\le 
    \TV(q_1,\ol q_1) + \TV(\ol q_1, p_1) \\
    &\le \TV(q_1,\ol q_1) + \sqrt{\chi^2(\ol q_1 || p_1)} \le \fc{M\etv}{M+1} + \fc{\etv}{4(M+1)}\le \etv. \qedhere
\end{align*}
%\fixme{$O\pa{d \log \pf{d\CLS}{\si_{\min}^2}}$ to get $1+O\prc d$.}

% \fixme{By Chebyshev's inequality, with appropriate choice of constants, 
% \begin{align*}
%     \Pj_{p_m} \pa{\fc{(p_{m+1}-p_m)^2}{p_m^2} \ge \fc{M}{\etv}} &
%     \le \fc{
%     %\E_{p_m} \fc{p_{m+1}^2}{p_m^2} 
%     \chi^2(p_{m+1}||p_m)
%     }{M/\etv} 
%     \le 
%     \fc{\etv}M.
% \end{align*}
% When $\fc{\etv}M\le \fc14$, we have that
% \begin{align*}
%     \Pj_{p_{m+1}}\pa{\fc{p_{m+1}-p_m}{p_m}<2 \sfc{M}{\etv}} \ge 
%     \Pj_{p_{m+1}}\pa{\fc{p_{m+1}-p_m}{p_{m+1}}< \sfc{M}{\etv}} \ge 1-\fc{\etv}M,
% \end{align*}
% because the second condition implies $\fc{p_m}{p_{m+1}}<2$ and hence the first condition.

% Then
% \begin{align}
%     \chi^2(\ol q_{m+1}||p_m) - 
%     \chi^2(\ol q_{m+1}||p_{m+1})
%     &= \int_{\R^d} \fc{q_{m+1}(x)^2}{p_m(x)}\dx
%     - \int_{\R^d} \fc{q_{m+1}(x)^2}{p_{m+1}(x)}\dx\\
%     &= \int_{\R^d} 
%     \fc{q_{m+1}(x)^2}{p_m(x)^2} \fc{p_{m+1}(x)-p_m(x)}{p_{m+1}(x)} p_m(x)\dx.
% \end{align}}

% Hence by Theorem~\ref{t:corrector-tv-chi2}, substituting $K_\chi^2 = \fc{4\etv}M$ into~\eqref{e:lmc-ep} gives that the condition in $\ep$ implies that there is $\ol q_m$ such that 
% \begin{align*}
%     \TV(q_m,\ol q_m) &\le 
%     \TV(q_m, 
% \end{align*}

% $\echi = \etv$ and ...

% in~\eqref{e:lmc-ep} gives that with 
% \begin{align*}
% .    
% \end{align*}
\end{proof}

\section{Analysis for SGM based on reverse SDE's}
\label{s:pc-proof}

In this section, we analyze score-based generative models based on reverse SDE's. 
In Section~\ref{ss:p-1step}, we prove %the inequality for one-step error 
convergence of the predictor algorithm under $L^\iy$-accurate score estimate (Theorem~\ref{t:p-iy-simple}, restated as~\ref{t:p-iy}) using lemmas proved in Section~\ref{ss:diff-ineq},~\ref{ss:com}, \ref{ss:pert}, and \ref{ss:aux}.
In Section~\ref{ss:p-pf}, we prove convergence of the predictor algorithm under $L^2$-accurate score estimate
(Theorem~\ref{t:p}, restated as~\ref{t:p-precise}). In Section~\ref{ss:pc-pf}, we prove convergence of the predictor-corrector algorithm (Theorem~\ref{t:pc}).  

%\hlnote{Moved exposition to Section~\ref{s:pc}}
% Let $\Tilde{x}_t$ satisfies the (exact) forward SDE:
% \begin{equation}\label{general_forward_sde}
%     d\Tilde{x}_t 
%     = f(\Tilde{x}_t, t)dt + g(t)dw_t,\ t\in[0, T],
% \end{equation}
% where $w_t$ is a standard Brownian Motion and $\Tilde{x}_0\sim\Tilde{p}_0 = p_{data}$. Let $\Tilde{p}_t$ denote the distribution of $\Tilde{x}_t$, then $\Tilde{x}_t$ also satisfies the reverse-time SDE:
% \begin{equation}\label{general_reverse_sde}
%     d\Tilde{x}_t = [f(\Tilde{x}_t,t) - g(t)^2\nb\ln \Tilde{p}_t(\Tilde{x}_t)]dt + g(t)d\Tilde{w}_t,\ t\in[0, T],
% \end{equation}
% where $\Tilde{w}_t$ is a backward Brownian Motion. By carefully choosing $f$ and $g$, we can expect that $\Tilde{p}_T$ is approximately equal to some prior distribution (e.g. a centered Gaussian) which we can accurately sample from. Then we hope that starting with some $\Tilde{y}_T\sim \Tilde{q}_T\approx \Tilde{p}_T$ and running the reverse-time process, we will get a good sample $\Tilde{y}_0\sim\Tilde{q}_0\approx p_{data}$.

\subsection{Discretization and Score Estimation}\label{setting:C}
With a change of variable in~\eqref{general_reverse_sde}, we define the sampling process $x_t$ on $[0,T]$ by
\begin{align*}
    dx_t = [-f(x_t, T-t) + g(T-t)^2\nb\ln \Tilde{p}_{T-t}(x_t)]\,dt + g(T-t)\,dw_t,\ \ x_0\sim\Tilde{p}_T.
\end{align*}
Denoting the distribution of $x_t$ by $p_t$ and running the process from $0$ to $T$, we will exactly obtain  $p_T=\Tilde{p}_0$, which is the data distribution. In practice, we need to discretize this process and replace the score function $\nb\ln\Tilde{p}_{T-t}$ with the estimated score $\score$. With a general Euler-Maruyama method, we would obtain $\{z_k\}_{k=0}^N$ defined by
\begin{equation}\label{reverse_time_estimate_score}
   %\Tilde{z}_k = \Tilde{z}_{k+1} -
%    [f(\Tilde{z}_{k+1}, T-kh) - g(T-kh)^2\score (\Tilde{z}_{k+1}, T-kh)]h + \sqrt{h}g(T-kh)\eta_{k+1},
z_{(k+1)h} = z_{kh} -
    h\cdot [f(z_{kh}, T-kh) - g(T-kh)^2\score (z_{kh}, T-kh)] + \sqrt{h}\cdot g(T-kh)\eta_{k+1},
\end{equation}
where $h = T/N$ is the step size and $\eta_k$ is a sequence of independent Gaussian random vectors. As we run~\eqref{reverse_time_estimate_score} from $0$ to $N$ with $h$ small enough, we should expect that the distribution of $z_T$ is close to that of $x_T$. However, in both SMLD or DDPM models, for fixed $z_k$, the integration 
\begin{align*}
    \int_{kh}^{(k+1)h}f(z_{kh}, T-t)\,dt\ \ \text{and}\ \   \score(z_{kh}, T-kh)\cdot\int_{kh}^{(k+1)h}g(T-t)^2\,dt
\end{align*}
can be exactly computed, as can the diffusion term. Therefore, we can consider the following process $z_t$ as an ``interpolation'' of~\eqref{reverse_time_estimate_score}:
\begin{align}
    dz_t = [-f(z_{kh}, T-t) + g(T-t)^2 \score(z_{kh}, T-kh)]\,dt + g(T-t)\,dw_t,\ \ t\in[kh, (k+1)h].\label{z_t}
\end{align}
Note that by running this process instead, we can reduce the discretization error. Now if we denote the distribution of $z_t$ by $q_t$, with $q_0\approx p_0$, we can expect that $q_T$ is close to $p_T$. Here the estimated score $\score$ satisfies for all $x$
\begin{equation}
\label{e:Mkh}
    \ve{\score(x, T-kh)-\nb\ln \Tilde{p}_{T-kh}(x)} \leq \Mkh, \quad k=0,1,\ldots,N.
\end{equation}
Observe that in either SMLD or DDPM, the function $g(t)^2$ is Lipschitz on $[0,T]$. So in the following sections, we will assume that $g(t)^2$ is $L_g$-Lipschitz on $[0,T]$.

\subsection{Predictor}
\label{ss:p-1step}
In this section, we present the main result (Theorem~\ref{t:p-iy}) on the one-step error of the predictor in $\chi^2$-divergence, which can be obtained by directly applying the Gronwall's inequality to the differential inequality derived in Lemma~\ref{d_chi2}. Note that Theorem~\ref{t:p-iy} is a more precise version of Theorem~\ref{t:p-iy-simple}; see the remark following the theorem.
\newcommand{\CtLeq}[0]{C_{t, L} &= \left\{
    \begin{array}{cc}
         32L^2\ \ \ \hfill\text{in SMLD},  \\
         (88C_t^2+400)L^2\ \ \ \text{in DDPM},
    \end{array}
\right.}
\newcommand{\tildeRteq}[0]{\Tilde{R}_{t} &= 9(C_t+1)}
%8C_{t,L} + 5C_t^2 + 20
\newcommand{\Rtkheq}[0]{R_d &= 300d+12} 
%\fc14(\E_{p_t}\ve{x}^2 +C_t)+ 8(\Mkh^2 + C_{d,L}+2dL)  + 2d + 16\ln 2
\newcommand{\CdLeq}[0]{C_{d, L} &= \left\{
    \begin{array}{cc}
         76L^2d\ \ \ \hfill\text{in SMLD},  \\
         6+94L^2d\ \ \ \text{in DDPM}
    \end{array}
\right. \le 100L^2d} 
\newcommand{\Rtkh}[0]{R_d}
%{R_{t,kh}}

\newcommand{\Etkh}[0]{E}
\newcommand{\Etkheq}[0]{\Etkh  &= 9(4\sms^2+1) + 8C_{d,L}}

%\hlnote{$C_t$ or $C_{kh}$ here?}
\begin{thm}\label{t:p-iy}
With the setting in Section~\ref{setting:C}, assume $g$ is non-decreasing and let
\begin{align*}
    0 < h \leq \min_{kh\le t\le (k+1)h}
    \rc{g(T-kh)^2 (28L^2 + 10C_t + \E_{p_t}\ve{x}^2 + 64 C_{t,L} + 128 C_{d,L} + 360\sms^2 (\Tilde{R}_t + 2C_t \Rtkh))}
\end{align*}
where $C_t$ is the log-Sobolev constant of $p_t$, bounded in Lemma~\ref{l:lsi-noise}. Suppose that $\nb \ln p_t$ is $L$-Lipschitz for all $t\in [kh,(k+1)h]$, $s(\cdot, kh)$ is $\sms$-Lipschitz, $L,\sms\ge 1$, and $\Mkh$ is such that~\eqref{e:Mkh} holds.
%the Lipschitz constant of both true and estimated score $L, L_s\geq 1$, we have
Then 
\begin{align*}
    \chi^2(q_{(k+1)h}||p_{(k+1)h}) \leq \ba{\chi^2(q_{kh}||p_{kh}) + \int_{kh}^{(k+1)h}C_{t,kh}\,dt}e^{\int_{kh}^{(k+1)h}(-\fc{1}{8C_t} + 8\Mkh^2)g(T-t)^2\,dt}
\end{align*}
Here, 
\begin{align*}
    C_{t,kh} = \ba{8\Mkh^2 + \Etkh \cdot(t-kh)g(T-kh)^2}g(T-t)^2
\end{align*}
and 
\begin{align*}
\Etkheq\\
    \CtLeq \\
    \CdLeq \\
    \tildeRteq\\
    \Rtkheq 
\end{align*} 
are defined in~\eqref{constant:E_t_kh},~\eqref{constant:ctl},~\eqref{constant:cdl},~
\eqref{R_t} and~\eqref{R_t_kh}, respectively.
\end{thm}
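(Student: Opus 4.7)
The plan is to derive a differential inequality for $\chi^2(q_t\|p_t)$ on $[kh,(k+1)h]$ and then integrate via Gronwall. Applying Lemma~\ref{l:d-chi2} with the true reverse drift $f(x,t)=-f(x,T-t)+g(T-t)^2\nb\ln\tilde p_{T-t}(x)$ and the interpolated drift $\wh f(z_{kh},t)=-f(z_{kh},T-t)+g(T-t)^2 s(z_{kh},T-kh)$ gives
\[
\pdd{t}\chi^2(q_t\|p_t)=-g(T-t)^2\sE_{p_t}\!\pf{q_t}{p_t}+2\E\!\left[\left\langle \wh f(z_{kh},t)-f(z_t,t),\nb\tfrac{q_t(z_t)}{p_t(z_t)}\right\rangle\right].
\]
The Fisher information will be used twice: half to absorb the Young-remainders from the cross term, and the other half to invoke the log-Sobolev inequality $\sE_{p_t}(\phi_t)\ge \tfrac{1}{C_t}\chi^2(q_t\|p_t)$.

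Next I would decompose the drift gap $\wh f(z_{kh},t)-f(z_t,t)$ into four contributions: (i) the score-estimation error $g(T-t)^2[s(z_{kh},T-kh)-\nb\ln\tilde p_{T-kh}(z_{kh})]$, which is uniformly bounded by $g(T-t)^2\Mkh$; (ii) the score time-regularity error $g(T-t)^2[\nb\ln\tilde p_{T-kh}(z_{kh})-\nb\ln\tilde p_{T-t}(z_{kh})]$, controlled by a parabolic-type bound on $\pl_t\nb\ln\tilde p_t$ (giving rise to the $C_{t,L}$ constant); (iii) the score spatial-discretization error $g(T-t)^2[\nb\ln\tilde p_{T-t}(z_{kh})-\nb\ln\tilde p_{T-t}(z_t)]$, which is bounded by $L\|z_t-z_{kh}\|$; and (iv) the drift-discretization error $f(z_t,T-t)-f(z_{kh},T-t)$ which in DDPM contributes $\tfrac12 g(T-t)^2\|z_t-z_{kh}\|$. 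For each term I apply Lemma~\ref{l:inp-young} with an appropriately chosen constant so that the resulting Fisher-information remainders telescope to at most $\tfrac{g(T-t)^2}{2}\sE_{p_t}(\phi_t)$, matching the bookkeeping advertised by $\Etkh$, $C_{t,L}$, $C_{d,L}$, $\Tilde R_t$, and $R_d$ in the theorem statement.

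The main obstacle will be estimating $\E[\|u\|^2\psi_t(z_t)]$ where $\psi_t=\phi_t/\E_{p_t}\phi_t^2$ is an auxiliary density and $u$ ranges over $\|z_t-z_{kh}\|$, $\nb\ln\tilde p_{T-t}(z_t)$, and the time-regularity residual. For $u=z_t-z_{kh}$ I will use a one-step SDE estimate (drift plus Brownian increment, applied to \eqref{z_t}) together with a second-moment bound for $p_t$ coming from its log-Sobolev constant $C_t$ and the moment assumption; for $u=\nb\ln\tilde p_{T-t}(z_t)$ I expect an analog of \cite[Lemma~16]{chewi2021analysis} relating $\E[\|\nb\ln p_t\|^2\psi_t]$ to $\sE_{p_t}(\phi_t)/\E_{p_t}\phi_t^2$ and $dL$; and the time-regularity term will require controlling how $\nb\ln\tilde p_t$ depends on $t$, which for DDPM follows from the identity $\tilde p_t=M_{-\frac12\int g^2\sharp}\tilde p_0 * \ph_{1-e^{-\int g^2}}$. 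The step-size hypothesis on $h$ is precisely calibrated so that the Young-remainders that carry a factor of $(t-kh)g(T-kh)^2$ can be absorbed into the reserved half of the Fisher information.

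Putting the four bounds together produces a differential inequality of the form
\[
\pdd{t}\chi^2(q_t\|p_t)\le \Bigl(-\tfrac{1}{8C_t}+8\Mkh^2\Bigr)g(T-t)^2\,\chi^2(q_t\|p_t)+C_{t,kh},
\]
with $C_{t,kh}=\bigl[8\Mkh^2+\Etkh\cdot(t-kh)g(T-kh)^2\bigr]g(T-t)^2$. Gronwall's inequality (via the integrating factor $\exp(-\int_{kh}^{t}(-\tfrac{1}{8C_s}+8\Mkh^2)g(T-s)^2\,ds)$), combined with the crude bookkeeping that $e^{\int_s^{(k+1)h}}\le e^{\int_{kh}^{(k+1)h}}$ when the exponent is decreasing in $s$, then yields the stated one-step bound.
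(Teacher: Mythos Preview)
Your overall plan---derive a differential inequality via Lemma~\ref{l:d-chi2}, split the drift mismatch, control each piece with Lemma~\ref{l:inp-young} and change-of-measure estimates, then apply Gronwall---is exactly the paper's strategy (the proof of Theorem~\ref{t:p-iy} is literally ``apply Gronwall to Lemma~\ref{d_chi2}'', and Lemma~\ref{d_chi2} does the work you describe). However, your four-term decomposition is \emph{not} the one the paper uses, and the difference matters for both the constants and the cleanliness of the argument.

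The paper telescopes as $s(z_{kh},T-kh)-\nb\ln p_t(z_t) = [s(z_{kh})-s(z_t)] + [s(z_t)-\nb\ln p_{kh}(z_t)] + [\nb\ln p_{kh}(z_t)-\nb\ln p_t(z_t)]$: it moves $z_{kh}\to z_t$ \emph{first} using the $\sms$-Lipschitz property of the \emph{estimated} score, so that the score-estimation error and the time-regularity error are both evaluated at $z_t$. You instead evaluate (i) and (ii) at $z_{kh}$ and move $z_{kh}\to z_t$ last via the $L$-Lipschitz property of the \emph{true} score. Two consequences: first, the constant $\Etkh = 9(4\sms^2+1)+8C_{d,L}$ carries $\sms^2$ precisely because the spatial shift went through $s$; your route would produce $L^2$ there, so your claim that the bookkeeping ``matches'' the stated constants is not right. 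Second, and more seriously, your term (ii) lives at $z_{kh}$, but after Lemma~\ref{l:inp-young} you must bound $\E[\|\nb\ln p_{kh}(z_{kh})-\nb\ln p_t(z_{kh})\|^2\psi_t(z_t)]$. The paper's change-of-measure lemmas (Corollary~\ref{l:DV_nb_ln_p}, Lemma~\ref{second_moment}, Lemma~\ref{l:perturb_error}) all work with functionals of $z_t$ weighted by $\psi_t(z_t)$, i.e.\ expectations under the density $\psi_t q_t$; with your ordering you have a genuine joint expectation in $(z_{kh},z_t)$ that these lemmas do not directly address. You can repair this by writing $\|\nb\ln p_t(z_{kh})\|\le \|\nb\ln p_t(z_t)\|+L\|z_t-z_{kh}\|$ and $\|z_{kh}\|\le\|z_t\|+\|z_t-z_{kh}\|$ to push everything to $z_t$, but that is extra work you did not flag, and it further perturbs the constants. (Your own later paragraph, where you speak of bounding $\E[\|\nb\ln\tilde p_{T-t}(z_t)\|^2\psi_t(z_t)]$, already silently reverts to evaluation at $z_t$, which is inconsistent with your decomposition.) The paper's ordering avoids all of this: once $B_1$ has absorbed the spatial shift, $B_2$ and $B_3$ are pure $z_t$-functionals and the change-of-measure machinery applies verbatim.
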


% \begin{thm}\label{t:p-iy}
% With the setting in Section~\ref{setting:C}, assume $g$ is non-decreasing and let
% \begin{align*}
%     0 < h \leq \fc{1}{8g(T-kh)^2} %\textit{min}
%     \pa{
%     %\fc{1}{9\Tilde{R}_t}
%     %\wedge
%     \fc{1}{\max\{C_t,1\} (8C_{t,L} + 36L_s^2\Tilde{R}_t + 9\Rtkh ) + \E_{p_t}\ve{x}^2}
%     },
% \end{align*}
% where $C_t$ is the log-Sobolev constant of $p_t$, bounded in Lemma~\ref{l:lsi-noise}. Suppose that $\nb \ln p_t$ is $L$-Lipschitz for all $t\in [kh,(k+1)h]$, $s(\cdot, kh)$ is $\sms$-Lipschitz, $L,\sms\ge 1$, and $\Mkh$ is such that~\eqref{e:Mkh} holds.
% %the Lipschitz constant of both true and estimated score $L, L_s\geq 1$, we have
% Then 
% \begin{align*}
%     \chi^2(q_{(k+1)h}||p_{(k+1)h}) \leq \ba{\chi^2(q_{kh}||p_{kh}) + \int_{kh}^{(k+1)h}C_{t,kh}dt}e^{\int_{kh}^{(k+1)h}(-\fc{1}{8C_t} + 8\Mkh^2)g(T-t)^2\,dt},
% \end{align*}
% where
% \begin{align*}
%     C_{t,kh} = \ba{\Mkh^2 + \Etkh \cdot(t-kh)}g(T-t)^2
% \end{align*}
% and 
% \begin{align*}
%     \Etkh  &= (8C_{t,L} + 36\sms^2\Tilde{R}_t + 9\Rtkh ) g(T-kh)^2\\
%     \CtLeq \\
%     %\CdLeq \\
%     \tildeRteq\\
%     \Rtkheq 
% \end{align*} 
% are defined in~\eqref{constant:E_t_kh},~\eqref{constant:ctl}, %~\eqref{constant:cdl},~
% \eqref{R_t} and~\eqref{R_t_kh}, respectively.
% \end{thm}

% \newcommand{\testeq}[0]{a&=b}
% \begin{align*}
%     \testeq \\
%     \testeq 
% \end{align*}

\begin{proof}
The theorem follows from applying Gronwall's inequality to the result of Lemma~\ref{d_chi2}.
\end{proof}
\textbf{Remark.} %\hlnote{FIX: rewrite main theorems with updated bounds.} 
Note that in \hyperref[i:DDPM]{DDPM}, $\Etkh =O(\sms^2 + \smf^2d)$. Therefore, when $g\equiv 1$,  $C_{t,kh} = O(\ep_1^2 + (\sms^2 + \smf^2d)h)$, where we denote the upper bound of $\Mkh$ for all $k\in\{0,...,N\}$ by $\ep_1$. 
Using the bound on the log-Sobolev constant  (Lemma~\ref{l:lsi-noise}) and second moment (Lemma~\ref{2ed_moment}) for DDPM, we note that the restriction on $h$ for all steps is implied by
\[
h = O\prc{\E_{\pdata}\ve{x}^2 + \CLS(\CLS+d)(\smf \vee \sms)^2}
\]
with appropriate constants.
Then we can conclude the first inequality in Theorem~\ref{t:p-iy-simple} by combining Theorem~\ref{t:p-iy} and Lemma~\ref{l:lsi-noise} and the second inequality from unfolding the first one and evaluating the geometric series.
Likewise, we have the following analogue for \hyperref[i:SMLD]{SMLD}, for which we omit the proof.

% \hlnote{
% [These were some back-of-the envelope calculations I made for SMLD, you can check if you get the same thing.]
% Consider \hyperref[i:SMLD]{SMLD}. We can take $C_T=\CLS+T$. Let $g\equiv 1$, $T\ge \CLS d$, and  $h=O\prc{C_T(C_T^2+Td)}$. Then
%     \begin{align*}
%     \chi^2(q_{(k+1)h}||p_{(k+1)h}) &\leq
%     \chi^2(q_{kh}||p_{kh}) e^{-\rc{16C_{T-kh}}} + O(\ep_1^2 h + (C_T^2 + Td)h^2)
%     \end{align*}
%     and letting $t=T-Nh$,
%     \begin{align*}
%     \chi^2(q_{Nh}||p_{Nh}) &\leq
%     \fc{\CLS+t}{\CLS+T} 
%     \chi^2(q_{0}||p_{0})
%     + (\CLS+t)\pa{\ep_1^2 \ln \pf{\CLS+T}{\CLS + t} 
%     + T^2h}.
%     % \\
%     % &\le 
%     % \fc{}{} 
%     \end{align*}
%     Moreover, for
%     $q_0=\ppr$, 
%     %$q_0=\ph_{T}$, 
%     $\chi^2(q_0||p_0)\le \fc{\CLS d}{T}$.
% }
% \newline
% \yt{Not exactly the same. How do you get the factor $\fc{\CLS +t}{\CLS+ T}$?}

%\hlnote{Update using updated bounds in C.1}
\begin{thm}[Predictor steps under $L^\iy$ bound on score estimate, SMLD]
\label{t:p-smld-simple}
Let $p:\R^d\to \R$ be a  probability density satisfying Assumption~\ref{a:p} 
and $s(\cdot, t):\R^d\to \R^d$ be a score estimate $s$ %satisfying
with error bounded in $L^\iy$ for each $t\in [0,T]$: 
    \[
    \ve{\nb \ln p - s(\cdot, t)}_\iy = 
    \max_{x\in \R^d} \ve{\nb \ln \wt p_t(x) - s(x, t)}]\le \ep_1.\]
%\end{enumerate}
%Here $C_t$ is a bound on the log-Sobolev constant for $\Tilde p_{t}$. Suppose moreover that $\E_{\pdata}\ve{x}^2=O(d)$.
%Let $q_{nh}$ denote the $n$th iterate of \eqref{e:P} with step size $h$ and score estimate $s$. Let $N\le T/h$.
Consider \hyperref[i:SMLD]{SMLD}. Let $C_T=\CLS+T$. Let $g\equiv 1$, $T\ge \CLS d$, and  $h=O\prc{\E_{p_0}\ve{x}^2 + C_Td(\smf\vee \sms)^2}$. Then
\begin{align*}
     \chi^2(q_{(k+1)h}||p_{(k+1)h}) &\leq
     \chi^2(q_{kh}||p_{kh}) e^{(-\rc{8C_{T-kh}}+8\ep_1^2)h} + O(\ep_1^2 h + (\sms^2 + \smf^2d)h^2)
\end{align*}
%\yt{In Theorem~\ref{t:p-iy-simple}, do we miss $h$ in the first term?}
%HL: fixed
and letting $t=T-Nh$, if $\ep_1<\fc{1}{128C_T}$,
\begin{align*}
    \chi^2(q_{Nh}||p_{Nh}) &\leq
    % \pf{\CLS+ t}{\CLS + T}^\fc{1}{16}
    %  \chi^2(q_{0}||p_{0})
    %  + O\pa{(\CLS+t)\pa{\ep_1^2 \ln \pf{\CLS+T}{\CLS + t} 
    %  + T^2h}},
    \pf{\CLS+ t}{\CLS + T}^\fc{1}{16}
     \chi^2(q_{0}||p_{0})
     + O\pa{ \ln \pf{\CLS+T}{\CLS + t} \pa{\ep_1^2
     + (\sms^2+\smf^2d)h}}.
\end{align*}
%where the $O(\cdot)$ omits polynomial dependence on $\smf,\sms$. 
Moreover, for $q_0=\ppr$, $q_0=\ph_{T}$, $\chi^2(q_0||p_0)\le \fc{\CLS d}{T}$.
\end{thm}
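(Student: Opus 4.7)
The plan is to mirror the proof of the DDPM analogue (Theorem~\ref{t:p-iy-simple}), with one essential modification: the log-Sobolev constant of the target density $p_{kh}$ now varies with the step index. For SMLD with $g\equiv 1$, the reverse process at reverse-time $kh$ tracks $\wt p_{T-kh} = \pdata * \ph_{T-kh}$, which by Lemma~\ref{l:lsi-noise} has log-Sobolev constant $\CLS + (T-kh) =: C_{T-kh}$. This is largest at early reverse-time iterates and shrinks monotonically to $\CLS+t$, so the contraction is weakest at the start of the reverse run and strongest at the end.

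First I would verify the step-size hypothesis of Theorem~\ref{t:p-iy} uniformly in $k$: its dominant terms scale with $C_t$, $\E_{p_t}\|x\|^2$, and $L^2 d$, and since $\E_{p_{kh}}\|x\|^2 \le \E_{p_0}\|x\|^2 + (T-kh)d$ and $C_t \le \CLS+T$, the stated $h = O(1/(\E_{p_0}\|x\|^2 + C_T d(\smf\vee\sms)^2))$ absorbs all of them. Specializing Theorem~\ref{t:p-iy} to $f\equiv 0$, $g\equiv 1$ yields the one-step recurrence
\[
\chi^2(q_{(k+1)h}\|p_{(k+1)h}) \le e^{h\bigl(-\tfrac{1}{8C_{T-kh}}+8\ep_1^2\bigr)}\chi^2(q_{kh}\|p_{kh}) + O\bigl(\ep_1^2 h + (\sms^2+\smf^2 d)h^2\bigr),
\]
which is the first inequality in the statement. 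The hypothesis $\ep_1<1/(128C_T)$ absorbs $8\ep_1^2$ into half of $1/(8 C_{T-kh})$, so the effective rate at step $k$ is at most $-1/(16C_{T-kh})$. Unfolding and bounding the contraction product by an integral comparison,
\[
\sum_{k=0}^{N-1}\tfrac{h}{16(\CLS+T-kh)} \;\ge\; \tfrac1{16}\int_0^{Nh}\tfrac{ds}{\CLS+T-s} \;=\; \tfrac1{16}\ln\tfrac{\CLS+T}{\CLS+t},
\]
yields the factor $((\CLS+t)/(\CLS+T))^{1/16}$ on the initial $\chi^2$. The error-accumulation sum $\beta\sum_k \prod_{j>k}\alpha_j$ is estimated by a similar integral comparison after reindexing, matching the stated $O(\ln(\cdot))$ dependence. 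Finally, the warm start $\chi^2(\ph_T\|p_0)\le \CLS d/T$ is a short Gaussian calculation: writing $p_0(x) = \E_{Y\sim\pdata}\ph_T(x-Y)$ and Taylor-expanding $\ph_T(x-Y)/\ph_T(x)$ to second order in $Y/T$, one sees that $\ph_T/p_0 - 1$ has variance under $\ph_T$ controlled by $\E_{\pdata}\|Y\|^2/T$, and $\E_{\pdata}\|Y\|^2 = O(\CLS d)$ via the LSI-moment bound on $\pdata$.

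The main obstacle is the accurate estimation of the error-accumulation sum $\sum_k \prod_{j>k}\alpha_j$. Unlike the constant-rate DDPM geometric series that sums cleanly to $O(\CLS/h)$, here the rate $1/(16 C_{T-kh})$ varies with $k$, and one must carefully isolate the dominant contribution so that the sum reduces to the $O(\ln((\CLS+T)/(\CLS+t))/h)$ form in the statement rather than the naive $O((\CLS+T)/h)$ bound. I would also verify that the constraint $T\ge\CLS d$ makes both the warm-start term $(\CLS d/T)\cdot((\CLS+t)/(\CLS+T))^{1/16}$ and the error term small enough to drive the TV error below $\ep_{\TV}$ in the end-to-end statement.
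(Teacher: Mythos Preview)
Your approach matches the paper's: it explicitly omits the proof, saying only that the SMLD result follows by analogy with the DDPM case (Theorem~\ref{t:p-iy-simple}), and your plan---specialize Theorem~\ref{t:p-iy} to $f\equiv 0$, $g\equiv 1$, track the time-varying log-Sobolev constant $C_{T-kh}=\CLS+(T-kh)$, and unfold---is exactly that.

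Two points need correction. First, the integral-comparison inequality you wrote is in the wrong direction: $s\mapsto 1/(\CLS+T-s)$ is increasing, so the left Riemann sum $\sum_k h/(\CLS+T-kh)$ \emph{underestimates} the integral. The cleanest fix is to avoid the discretized one-step form altogether and telescope the exact exponent $\int_{kh}^{(k+1)h}(-1/(8C_t)+8\ep_1^2)\,dt$ from Theorem~\ref{t:p-iy}; summing over $k$ gives $-\tfrac18\ln\frac{\CLS+T}{\CLS+t}+8\ep_1^2 Nh$, and the hypothesis $\ep_1<1/(128C_T)$ absorbs the second term into half of the first, yielding the stated exponent $1/16$ with no Riemann-sum loss. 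Second, the warm start $\chi^2(\ph_T\|\pdata*\ph_T)\le \CLS d/T$ does not follow from a second-order Taylor expansion of $\ph_T(x-Y)/\ph_T(x)$ as you suggest, nor from Lemma~\ref{l:warm} (which only gives an $O(1)$ bound for $T\ge \CLS d$). The correct route is convexity of $\chi^2$ in its second argument: $\chi^2(\ph_T\|\E_Y\ph_T(\cdot-Y))\le \E_Y\chi^2(\ph_T\|\ph_T(\cdot-Y))=\E_Y[e^{\|Y\|^2/T}]-1$, and then Herbst's inequality (Lemma~\ref{l:herbst}, part~2) applied to the $1$-Lipschitz function $\|Y\|$ gives the $O(\CLS d/T)$ rate for $T$ large.

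Your caution about the error-accumulation sum is well placed. Unfolding the recurrence with variable rates and estimating $\sum_k\prod_{j>k}\alpha_j$ via the same integral comparison yields a prefactor of order $(\CLS+t)^{1/16}(\CLS+T)^{15/16}\le C_T$, not the $\ln\frac{\CLS+T}{\CLS+t}$ written in the statement; you should either find a sharper argument or be prepared for the possibility that the stated logarithm is overly optimistic and the honest bound carries a $C_T$ factor, as in the DDPM analogue.
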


\noindent \textbf{Remark.} We note that in a sense \hyperref[i:SMLD]{SMLD} and \hyperref[i:DDPM]{DDPM} are equivalent, as we can get from one to the other by rescaling in time and space. First we recall that, as discussed in Section~\ref{s:pc}, all the SMLD models are equivalent under rescaling in time. Therefore we can assume $g(t) = e^{t/2}$ and consider the forward SDE for SMLD
\begin{align*}
    d x_t = e^{t/2} dw_t,
\end{align*}
where $w_t$ is a standard Brownian Motion. Now let $y_t = e^{-t/2}x_t$; then
\begin{align*}
    dy_t = -\fc12 y_t dt + dw_t,
\end{align*}
which is exactly \hyperref[i:DDPM]{DDPM} with $g(t) = 1$. Note that Theorem~\ref{t:p-smld-simple} uses a different parameterization for SMLD and the resulting complexity is slightly worse. 

\subsection{Differential Inequality}
\label{ss:diff-ineq}
Now we prove a differential inequality involving $\chi^2(q_t||p_t)$. As in \cite{chewi2021analysis}, the key difficulty is to bound the discretization error. We decompose it into two error terms and bound them in Lemma~\ref{l:error_A} and Lemma~\ref{l:error_B} separately.

In the following, we will let
\begin{align}
\label{e:Gkht}
    G_{kh,t}:&= \int_{kh}^t g(T-s)^2\,ds.
\end{align}

\begin{lem}\label{d_chi2}
Let $(q_t)_{0\leq t\leq T}$ denote the law of the interpolation~\eqref{z_t}. With the setting in Lemma~\ref{t:p-iy}, we have for $t\in[kh, (k+1)h]$,
\begin{align*}
    \fc{d}{dt}\chi^2(q_t||p_t)
    & \leq g(T-t)^2\ba{\pa{-\fc{1}{8C_t} + 8\Mkh^2}\chi^2(q_t||p_t) + \ba{8\Mkh^2 + \Etkh \cdot(t-kh)g(T-kh)^2}},
\end{align*}
where $C_t$ is the LSI constant of $p_t$, $\Mkh$ is the $L^\infty$-score estimation error at time $kh$ and $\Etkh $ is defined in~\eqref{constant:E_t_kh}.
\end{lem}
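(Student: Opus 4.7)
The plan is to apply the general formula from Lemma~\ref{l:d-chi2} to the pair consisting of the true reverse process (drift $-f(x_t,T-t) + g(T-t)^2\nabla\ln\wt p_{T-t}(x_t)$) and the interpolated discretization~\eqref{z_t} (frozen drift $\widehat f(z_{kh},t) = -f(z_{kh},T-t) + g(T-t)^2 s(z_{kh},T-kh)$), with diffusion coefficient $g(T-t)I_d$. This yields
\[
\frac{d}{dt}\chi^2(q_t\|p_t) = -g(T-t)^2\, \sE_{p_t}\!\left(\frac{q_t}{p_t}\right) + 2\,\E\!\left[\left\langle \mathcal A_t + \mathcal B_t,\; \nabla\frac{q_t(z_t)}{p_t(z_t)}\right\rangle\right],
\]
where $\mathcal A_t := f(z_t,T-t) - f(z_{kh},T-t)$ is the drift discretization error and $\mathcal B_t := g(T-t)^2\bigl[s(z_{kh},T-kh) - \nabla\ln\wt p_{T-t}(z_t)\bigr]$ combines score-estimation error with spatial and temporal drift discrepancies.

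Next I would telescope $\mathcal B_t$ into three natural pieces by inserting $\nabla\ln\wt p_{T-kh}(z_{kh})$ and $\nabla\ln\wt p_{T-kh}(z_t)$: the pure score-estimation error $s(z_{kh},T-kh) - \nabla\ln\wt p_{T-kh}(z_{kh})$, bounded in $L^\infty$ by $\Mkh$; a spatial difference $\nabla\ln\wt p_{T-kh}(z_t) - \nabla\ln\wt p_{T-kh}(z_{kh})$, bounded via $\smf$-Lipschitzness by $\smf\|z_t - z_{kh}\|$; and a temporal difference $\nabla\ln\wt p_{T-t}(z_t) - \nabla\ln\wt p_{T-kh}(z_t)$, for which I would invoke the Fokker--Planck equation for the forward process $\wt p_t$ to express the time derivative of the score in terms of $x$, $\nabla \ln \wt p_t$, and $\nabla^2 \ln \wt p_t$, giving a bound of size $(t-kh)\,g(T-kh)^2$ times moment factors. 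Control of $\|z_t - z_{kh}\|^2$ follows from the explicit representation of~\eqref{z_t}: drift contributions are bounded by $(t-kh)^2$ times squared norms of $f(z_{kh},\cdot)$ and $s(z_{kh},T-kh)$, and the Brownian increment contributes $O((t-kh)\,g(T-kh)^2\, d)$ in expectation.

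Each of $\mathcal A_t$ and the three pieces of $\mathcal B_t$ is then fed into Lemma~\ref{l:inp-young} with a carefully chosen constant $C$, splitting off a $\tfrac{1}{4C}\sE_{p_t}(q_t/p_t)$ piece that is absorbed into the leading Fisher-information term and leaving an expectation $C\,\E_{p_t}\phi_t^2 \cdot \E[\|\cdot\|^2\psi_t(z_t)]$. The pure score-error piece produces exactly the $8\Mkh^2\, g(T-t)^2(\chi^2(q_t\|p_t)+1)$ contribution appearing on the right-hand side, while the remaining pieces contribute $(t-kh)\,g(T-kh)^2\,g(T-t)^2$ times the constants $\Etkh$, $C_{t,L}$, $C_{d,L}$, $\tilde R_t$, $R_d$ of the theorem statement. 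The residual Fisher information $-\tfrac12 g(T-t)^2 \sE_{p_t}(\phi_t)$ is then converted to $-\tfrac{1}{8C_t}g(T-t)^2\chi^2(q_t\|p_t)$ via the LSI for $p_t$ (by Rothaus' comparison, LSI implies $\sE_{p_t}(\phi_t) \ge \tfrac{2}{C_t}\chi^2(q_t\|p_t)$), delivering the stated differential inequality.

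The main technical obstacle will be controlling the expectations $\E[\|u\|^2\psi_t(z_t)]$, since $\psi_t = \phi_t/\E_{p_t}\phi_t^2$ is a reweighting and a bare second-moment bound under $q_t$ is unavailable. I would handle these exactly as in~\cite[Lemma~16]{chewi2021analysis} and~\cite[Lemma~1]{erdogdu2021convergence}: for $\|\nabla\ln\wt p_t(z_t)\|^2$-type terms, combine $\smf$-smoothness with the LSI-based sub-Gaussian concentration of $p_t$ to transfer the bound to the $\psi_t$-weighted measure, paying a $\sE_{p_t}(\phi_t)/\E_{p_t}\phi_t^2$ factor that can itself be absorbed into the Fisher budget; for Brownian-increment and $\|z_t - z_{kh}\|^2$ factors, use Gaussian concentration of Brownian increments under LSI. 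Bundling the resulting constants into $\Etkh$, $C_{t,L}$, $C_{d,L}$, $\tilde R_t$, and $R_d$ yields the stated bound, with the smallness condition on $h$ precisely enforcing that all the absorbed Fisher-information pieces remain within the $-\tfrac12 g(T-t)^2 \sE_{p_t}(\phi_t)$ budget.
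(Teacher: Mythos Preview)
Your overall architecture matches the paper's: apply Lemma~\ref{l:d-chi2}, split the cross term into a drift piece $\mathcal A_t$ and a score piece $\mathcal B_t$, use Lemma~\ref{l:inp-young} repeatedly to trade inner products for weighted second moments plus fractions of $\sE_{p_t}(\phi_t)$, then invoke the LSI/Poincar\'e inequality to convert the remaining Fisher information to $-\frac{1}{8C_t}\chi^2(q_t\|p_t)$. Your telescoping of $\mathcal B_t$ differs slightly from the paper's (you pivot through $\nabla\ln\wt p_{T-kh}(z_{kh})$ and use $L$-Lipschitzness of the true score for the spatial piece, whereas the paper pivots through $s(z_t,T-kh)$ and uses $\sms$-Lipschitzness of the estimate; see Lemma~\ref{l:error_B}). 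Both decompositions are valid.

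There is, however, a genuine gap in your handling of the temporal piece $\nabla\ln\wt p_{T-kh}(z_t)-\nabla\ln\wt p_{T-t}(z_t)$. You propose to use the Fokker--Planck equation to write $\partial_t\nabla\ln\wt p_t$ ``in terms of $x$, $\nabla\ln\wt p_t$, and $\nabla^2\ln\wt p_t$.'' This is not possible: for the heat semigroup one has $\partial_t\ln\wt p_t=\tfrac{g^2}{2}\bigl(\Delta\ln\wt p_t+|\nabla\ln\wt p_t|^2\bigr)$, and taking a further gradient produces $\nabla\Delta\ln\wt p_t$, a third-order quantity. The paper assumes only that $\nabla\ln\wt p_t$ is $L$-Lipschitz (bounded Hessian), with no control on third derivatives, so this route does not close.

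The paper instead proves pointwise perturbation estimates (Lemmas~\ref{l:perturb_SMLD} and~\ref{l:perturb_DDPM}) of the form
\[
\Bigl\|\nabla\ln\frac{p(x)}{(p*\ph_{\sigma^2})(x)}\Bigr\|\lesssim L\sigma\sqrt d + L\sigma^2\|\nabla\ln p(x)\|,
\]
obtained not via time-differentiation but by writing $\nabla\ln(p*\ph_{\sigma^2})(x)=-\E_{p_{x,\sigma^2}}\nabla V$ for the tilted density $p_{x,\sigma^2}(y)\propto p(y)e^{-\|y-x\|^2/2\sigma^2}$, and then bounding $\E_{p_{x,\sigma^2}}\|y-x\|$ using that $p_{x,\sigma^2}$ is $(\sigma^{-2}-L)$-strongly log-concave (Poincar\'e for the variance, Lemma~\ref{l:mean-mode} for the mean--mode gap, and a direct estimate for the mode). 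This uses only $L$-smoothness. The resulting bound is then pushed through the $\psi_tq_t$-reweighting in Lemma~\ref{l:perturb_error} via Corollary~\ref{l:DV_nb_ln_p} and Lemma~\ref{second_moment}, exactly the change-of-measure machinery you anticipated in your last paragraph. Replacing your Fokker--Planck step with this tilted-distribution argument fixes the gap; the rest of your plan is sound.
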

\begin{proof}
By Lemma~\ref{l:d-chi2} with
\begin{align*}
    \wh f(z_{kh},t) &\leftarrow -f(z_{kh}, T-t) + g(T-t)^2 s(z_{kh},T-kh)\\
    f(z,t) &\leftarrow -f(z,T-t) + g(T-t)^2 \nb \ln \wt p_{T-t}(z),
\end{align*}
we have
\begin{align*}
    \ddd t\chi^2(q_t||p_t)
    &= -g(T-t)^2 \sE_{p_t}\pf{q_t}{p_t} + 
    2\E\Big[\Big\langle 
    \pa{-f(z_{kh}, T-t) + g(T-t)^2 s(z_{kh}, T-kh)} \\
    &\quad - 
    \pa{-f(z,T-t) + g(T-t)^2 \nb \ln \wt p_{T-t}(z)}, \nb \fc{q_t}{p_t}\Big\rangle\Big]\\
    & = -g(T-t)^2\sE_{p_t}\pf{q_t}{p_t} + 2\E\ba{\an{f(z_t, T-t) - f(z_{kh}, T-t) ,\nb\fc{q_t(z_t)}{p_t(z_t)}}}\\
    &\ \ \ + 2g(T-t)^2\E\ba{\an{\score(z_{kh}, T-kh) -  \nb\ln\Tilde{p}_{T-t}(z_t), \nb\fc{q_t(z_t)}{p_t(z_t)}}}\\
    & =: -g(T-t)^2\sE_{p_t}\pf{q_t}{p_t} + A + B.
\end{align*}
By Lemma~\ref{l:error_A},
\begin{align*}
    A & \leq
        g(T-t)^2 \ba{2(\chi^2(q_t||p_t)+1) 
    \E\ba{\ve{z_t-z_{kh}}^2\psi_t(z_t)} + \rc{8} \sE_{p_t}\pf{q_t}{p_t}},
\end{align*}
while by Lemma~\ref{l:error_B},
\begin{align*}
    B 
    &\le \fc{1}{2}g(T-t)^2\sE_{p_t}\pf{q_t}{p_t} + 
    8g(T-t)^2\sms^2(\chi^2(q_t||p_t)+1)  \E\ba{\ve{z_t-z_{kh}}^2\psi_t(z_t)}\\
    &\quad + 8 \ba{\Mkh^2 + %\pa{\int_{kh}^t g(T-s)^2\,ds}
    \Gkht
    C_{d,L}}g(T-t)^2(\chi^2(q_t||p_t)+1).
\end{align*}
Therefore, for $h\leq \fc{1}{72g(T-kh)^2(4\sms^2+1)(\Tilde{R}_t\vee 2C_t \Rtkh)}\wedge \rc{128 g(T-kh)^2 C_{d,L}}$, using Lemma~\ref{zt_zkh_difference},
\begin{align*}
    \fc{d}{dt}\chi^2(q_t||p_t)
    & \leq
    -\fc38 g(T-t)^2 \sE_{p_t}\pf{q_t}{p_t}
    + 
    g(T-t)^2 (8\sms^2+2) (\chi^2(q_t||p_t)+1) 
    \E\ba{\ve{z_t-z_{kh}}^2\psi_t(z_t)}\\
    &\quad +8 \ba{\Mkh^2 + \Gkht C_{d,L}}g(T-t)^2(\chi^2(q_t||p_t)+1)\\
    &\le -\fc 38 g(T-t)^2\sE_{p_t}\pf{q_t}{p_t}\\
    &\quad + 
    9g(T-t)^2 (4L_s^2+1)
    \Gkht
    \ba{\Tilde{R}_{t}\sE_{p_t}\pf{q_t}{p_t} + R_{t, kh}(\chi^2(q_t||p_t)+1) }\\
    &\quad +8 \ba{\Mkh^2 + \Gkht  C_{d,L}}g(T-t)^2(\chi^2(q_t||p_t)+1)\\
    &\le -\fc 28 g(T-t)^2\sE_{p_t}\pf{q_t}{p_t}+ 
    g(T-t)^2 \rc{8C_t} \chi^2(q_t||p_t) +  8g(T-t)^2\Mkh^2\chi^2(q_t||p_t)
    \\
    &\quad + g(T-t)^2\ba{8\Mkh^2 + 8C_{d,L}\Gkht 
    + 9(4L_s^2+1)
    \Gkht \Rtkh}. 
\end{align*}
Using the fact that $p_t$ satisfies a log-Sobolev inequality with constant $C_t$,
\begin{align*}
    \ddd t\chi^2(q_t||p_t)
    &\le -\fc 2{8C_t} g(T-t)^2 \chi^2(q_t||p_t) + 
    \rc{8C_t} g(T-t)^2 \chi^2(q_t||p_t)+  8g(T-t)^2\Mkh^2\chi^2(q_t||p_t) \\
    &\quad + g(T-t)^2\ba{8\Mkh^2 + 8C_{d,L}\Gkht 
    + 9(4L_s^2+1)
    \Gkht \Rtkh}\\
    % &\le \pa{-\rc{8C_t} + 8\Mkh^2}g(T-t)^2 \chi^2(q_t||p_t) + 
    % g(T-t)^2\ba{9(4\sms^2+1) \Gkht \Rtkh + 8\Mkh^2}\\
    &\le  \pa{-\rc{8C_t} + 8\Mkh^2}g(T-t)^2 \chi^2(q_t||p_t) + 
    g(T-t)^2 [8\Mkh^2 + \Etkh (t-kh)g(T-kh)^2].
\end{align*}
where 
\begin{align}\label{constant:E_t_kh}
    \Etkheq.
\end{align}
% Now if $h\leq \fc{1}{8C_t \Etkh }$, then
% \begin{align*}
%     \fc{d}{dt}\chi^2(q_t||p_t)
%     & \leq \ba{-\fc{1}{8C_t} + 8\Mkh^2}g(T-kh)^2\chi^2(q_t||p_t) + \ba{\Mkh^2 + \Etkh \cdot(t-kh)}g(T-kh)^2.
% \end{align*}
\end{proof}
In order to bound the error terms $A$ and $B$, we will use Lemma~\ref{l:inp-young}.
Let $\phi_t(x) = \fc{q_t(x)}{p_t(x)}$ and $\psi_t(x) = \phi_t(x)/\E_{p_t}\phi_t^2$. Then $\E\psi_t(z_t)=1$ and in fact the normalizing factor $\E_{p_t}\phi_t^2 = \chi^2(q_t||p_t)+1$. 
We first deal with error term $A$.
\begin{lem}\label{l:error_A}
%Theorem {t:p-iy}
In the setting of Lemma~\ref{d_chi2}, we have the following bound for term $A$:
\begin{align*}
   &2\E\ba{\an{f(z_t, T-t) - f(z_{kh}, T-t) ,\nb\fc{q_t(z_t)}{p_t(z_t)}}}\\
    & \leq
    g(T-t)^2 \ba{2(\chi^2(q_t||p_t)+1) 
    \E\ba{\ve{z_t-z_{kh}}^2\psi_t(z_t)} + \rc{8} \sE_{p_t}\pf{q_t}{p_t}}.
\end{align*}
\end{lem}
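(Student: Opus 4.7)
The plan is to apply Lemma~\ref{l:inp-young} to the random variable $u := f(z_t, T-t) - f(z_{kh}, T-t)$ with a weight $C$ chosen to balance the two resulting terms.

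First I would bound $\ve{u}^2$ using the specific form of the forward-SDE drift from Section~\ref{setting:C}. In SMLD $f\equiv 0$, so $u = 0$ and the claim holds trivially. In DDPM $f(x,s) = -\frac12 g(s)^2 x$, so
\[ \ve{u}^2 = \tfrac14 g(T-t)^4 \ve{z_t - z_{kh}}^2. \]

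Next, applying Lemma~\ref{l:inp-young} with this $u$ and then multiplying the resulting inequality by $2$, I would obtain, for any $C > 0$,
\[ 2\E\ba{\an{u, \nb\fc{q_t(z_t)}{p_t(z_t)}}} \le 2C(\chi^2(q_t||p_t)+1)\E\ba{\ve{u}^2 \psi_t(z_t)} + \fc{1}{2C} \sE_{p_t}\pf{q_t}{p_t}, \]
using $\E_{p_t}\phi_t^2 = \chi^2(q_t||p_t)+1$. Substituting the $\ve{u}^2$ bound yields
\[ 2\E\ba{\an{u, \nb\fc{q_t(z_t)}{p_t(z_t)}}} \le \fc{C}{2}\, g(T-t)^4(\chi^2(q_t||p_t)+1)\E\ba{\ve{z_t - z_{kh}}^2 \psi_t(z_t)} + \fc{1}{2C} \sE_{p_t}\pf{q_t}{p_t}. \]

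Finally, choosing $C = 4/g(T-t)^2$ makes the first coefficient equal to $2g(T-t)^2$ and the second equal to $g(T-t)^2/8$, which is precisely the claimed inequality. There is no real obstacle here; the only subtlety is that the common factor $g(T-t)^2$ appearing in the final bound traces back to the fact that the DDPM drift's Lipschitz constant in the state scales as $\frac12 g(s)^2$, which is what lets both coefficients in the bound be proportional to $g(T-t)^2$ in a unified way across SMLD and DDPM.
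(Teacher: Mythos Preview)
Your proof is correct and essentially identical to the paper's. The paper first computes $f(z_t,T-t)-f(z_{kh},T-t)=-\tfrac12 g(T-t)^2(z_t-z_{kh})$, factors out $g(T-t)^2$, and then applies Lemma~\ref{l:inp-young} with $u=z_t-z_{kh}$ and the fixed constant $C=2$; you instead keep $g(T-t)^2$ inside $u$ and absorb it into a $t$-dependent choice $C=4/g(T-t)^2$, which is a cosmetic reordering of the same computation.
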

\begin{proof}
In SMLD, $f(x,t)=0$ and hence $A=0$; while in DDPM, $f(x,t) %=-\fc12 (b+\alpha t)x
= -\fc12 g(t)^2x$. Therefore, %for a constant $C_1>0$ to be chosen later, 
by Lemma~\ref{l:inp-young},
\begin{align*}
    &2\E\ba{\an{f(z_t, T-t) - f(z_{kh}, T-t) ,\nb\fc{q_t(z_t)}{p_t(z_t)}}}\\
    & = -g(T-t)^2\E\ba{\an{z_t - z_{kh} ,\nb\fc{q_t(z_t)}{p_t(z_t)}}}\\
    & \leq g(T-t)^2\ba{ %C_1
    2\cdot\E_{p_t}\phi_t^2\cdot  \E\ba{\ve{z_t-z_{kh}}^2\psi_t(z_t)} + \fc{1}{%4C_1
    8} \sE_{p_t}\pf{q_t}{p_t}}\\
    &= g(T-t)^2\ba{ %C_1
    2(\chi^2(q_t||p_t)+1) \E\ba{\ve{z_t-z_{kh}}^2\psi_t(z_t)} + \fc{1}{%4C_1
    8} \sE_{p_t}\pf{q_t}{p_t}}.
    \qedhere
\end{align*}
\end{proof}

Now we bound error term $B$.
\begin{lem}\label{l:error_B}
In the setting of Lemma~\ref{d_chi2}, we have the following bound for term $B$:
\begin{align*}
    B 
    &  \leq \fc{1}{2}g(T-t)^2\sE_{p_t}\pf{q_t}{p_t} + 
    8g(T-t)^2\sms^2(\chi^2(q_t||p_t)+1)  \E\ba{\ve{z_t-z_{kh}}^2\psi_t(z_t)}\\
    &\quad + 8 \ba{\Mkh^2 + \Gkht  C_{d,L}}g(T-t)^2(\chi^2(q_t||p_t)+1).
%    \fc{1}{2}g(T-kh)^2\sE_{p_t}\pf{q_t}{p_t} + \ba{8\Mkh^2 + \Tilde{B}_{t,kh}\cdot(t-kh)}g(T-kh)^2\cdot(\chi^2(q_t||p_t)+1),
\end{align*}
%where $\Tilde{B}_{t,kh}$ is defined in~\eqref{constant:b_t_kh}.
\end{lem}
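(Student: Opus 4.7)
The plan is to decompose the argument inside term $B$ using a telescoping identity at three distinct ``times/points''
\[
\score(z_{kh},T-kh)-\nb\ln\wt p_{T-t}(z_t)
= \underbrace{\bigl[\score(z_{kh},T-kh)-\score(z_t,T-kh)\bigr]}_{(\mathrm{I})}
+ \underbrace{\bigl[\score(z_t,T-kh)-\nb\ln\wt p_{T-kh}(z_t)\bigr]}_{(\mathrm{II})}
+ \underbrace{\bigl[\nb\ln\wt p_{T-kh}(z_t)-\nb\ln\wt p_{T-t}(z_t)\bigr]}_{(\mathrm{III})},
\]
and then to apply Lemma~\ref{l:inp-young} separately to the three resulting inner products so that the three Fisher-information contributions sum to $\tfrac14 g(T-t)^2\sE_{p_t}(q_t/p_t)$ (leaving the $\tfrac12$ in the statement after multiplication by the outer factor $2$). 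Concretely, I would apply Lemma~\ref{l:inp-young} with constants chosen so that the coefficients $1/(4C_i)$ of the Fisher-information terms are $\tfrac{1}{12},\tfrac{1}{12},\tfrac{1}{12}$ (or any similar split), giving corresponding multipliers $C_i$ in front of the quadratic terms.

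The three quadratic terms are then estimated as follows. For $(\mathrm{I})$, Lipschitzness of $s(\cdot,T-kh)$ gives $\|(\mathrm{I})\|^2\le \sms^2\|z_t-z_{kh}\|^2$, which after absorbing the $C_i$ factor is what produces the $\sms^2$ term in the statement. For $(\mathrm{II})$ the hypothesis $\|s(x,T-kh)-\nb\ln\wt p_{T-kh}(x)\|\le \Mkh$ for every $x$ yields a pointwise bound $\|(\mathrm{II})\|^2\le \Mkh^2$, which after taking expectation against $\psi_t$ (and using $\E\psi_t(z_t)=1$) gives the $\Mkh^2(\chi^2(q_t\|p_t)+1)$ contribution.

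The main obstacle is controlling $(\mathrm{III})$, i.e.\ bounding
\[
\E\bigl[\|\nb\ln\wt p_{T-kh}(z_t)-\nb\ln\wt p_{T-t}(z_t)\|^2\,\psi_t(z_t)\bigr].
\]
The expected form of this bound is $C_{d,L}\,\Gkht$, which is what produces the last term in the lemma. My plan is to write $\nb\ln\wt p_{T-kh}(z_t)-\nb\ln\wt p_{T-t}(z_t)=\int_{T-t}^{T-kh}\pd{}{s}\nb\ln\wt p_s(z_t)\,ds$, differentiate the Fokker--Planck equation for the forward SDE (either SMLD or DDPM, both of which have $f$ and $g$ of the simple forms given in Section~\ref{s:pc}) to express $\pd{}{s}\nb\ln\wt p_s$ as a combination involving $\nb^2\ln\wt p_s$ and, in the DDPM case, the affine drift, and then use $L$-smoothness of $\nb\ln\wt p_s$ together with the identity $\E_{\wt p_s}\|\nb\ln\wt p_s\|^2\le Ld$ (or a comparable second-moment bound) to bound the integrand by a constant times $L^2 d\cdot g(T-s)^2$. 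After Cauchy--Schwarz in $s$ this yields a bound of order $L^2 d\cdot \Gkht\cdot(t-kh)\cdot\sup g^2$, from which the stated $C_{d,L}\Gkht$ factor emerges once the expectation against $\psi_t$ is handled. The expectation against $\psi_t$ (rather than $\wt p_{T-t}$) is the subtle point; I would handle it by the Donsker--Varadhan variational bound mentioned in Section~\ref{s:sketch}, which trades $\psi_t$ for $p_t$ at the cost of a $\chi^2(q_t\|p_t)+1$ factor, thereby producing the $(\chi^2(q_t\|p_t)+1)$ coefficient in the final expression.

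Finally, summing the three estimates and multiplying by the outer $2g(T-t)^2$ from the definition of $B$ gives the stated inequality, with the Fisher-information contributions combining to $\tfrac12 g(T-t)^2\sE_{p_t}(q_t/p_t)$ as planned. Auxiliary smoothness bounds on $\wt p_t$ and its log-Sobolev constant, which are needed both to execute the Donsker--Varadhan step and to control $\E_{\wt p_s}\|\nb\ln\wt p_s\|^2$, come from Section~\ref{ss:aux} (Lemma~\ref{l:lsi-noise} and the moment lemmas), and the constant $C_{d,L}=O(L^2 d)$ is obtained after tracking the numerical factors through the above splitting.
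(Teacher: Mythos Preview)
Your decomposition into $(\mathrm{I})$, $(\mathrm{II})$, $(\mathrm{III})$ and the treatment of $(\mathrm{I})$ and $(\mathrm{II})$ via Lemma~\ref{l:inp-young} are exactly what the paper does. The gap is in your treatment of $(\mathrm{III})$.

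First, the time-differentiation route does not go through with only $L$-smoothness. Writing $\partial_s\ln\wt p_s=\tfrac{g(s)^2}{2}\bigl(\Delta\ln\wt p_s+\|\nb\ln\wt p_s\|^2\bigr)$ from Fokker--Planck and then taking the gradient produces a term $\nb\Delta\ln\wt p_s$, i.e.\ third derivatives of $\ln\wt p_s$, which are nowhere assumed bounded. The paper avoids this entirely: it exploits the explicit relation $p_{kh}=(p_t)_\alpha*\ph_{\sigma^2}$ and derives a \emph{pointwise} bound of the form
\[
\|\nb\ln p_{kh}(x)-\nb\ln p_t(x)\|\lesssim L\sigma d^{1/2}+L\sigma^2\|\nb\ln p_t(x)\|+(\text{DDPM: }L(\alpha-1)\|x\|)
\]
(Lemmas~\ref{l:perturb_SMLD}--\ref{l:perturb_DDPM}), using only second-derivative information via strong log-concavity of the tilted measure $p_t(y)e^{-\|y-x\|^2/2\sigma^2}$.

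Second, your description of the change of measure is off. Donsker--Varadhan does not give a multiplicative $\chi^2+1$ factor; it gives an additive $\KL(\psi_t q_t\|p_t)$, which by Lemma~\ref{KL_psi_q&p} is bounded by a multiple of the Fisher information $\sE_{p_t}(q_t/p_t)$. The paper in fact controls $\E_{\psi_t q_t}\|\nb\ln p_t\|^2$ and $\E_{\psi_t q_t}\|x\|^2$ via Corollary~\ref{l:DV_nb_ln_p} and Lemma~\ref{second_moment}, both of which output a Fisher-information term. Consequently the bound on $(\mathrm{III})$ in Lemma~\ref{l:perturb_error} has the form $\Gkht\bigl[\tfrac{C_{t,L}}{\chi^2+1}\sE_{p_t}(q_t/p_t)+C_{d,L}\bigr]$, so after Young's inequality $B_3$ contributes an \emph{extra} Fisher-information piece $8C_{t,L}\Gkht g(T-t)^2\sE_{p_t}(q_t/p_t)$. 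This must then be absorbed into the main $-\sE_{p_t}(q_t/p_t)$ term using the step-size restriction $h\le 1/\bigl(64C_{t,L}g(T-kh)^2\bigr)$, which is how the paper arrives at the final coefficient $\tfrac12$. Your outline misses both the mechanism producing this extra Fisher term and the step-size condition needed to dispose of it.
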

\begin{proof}
We first decompose the error:
\begin{align*}
     \E\ba{\an{\score(z_{kh}, T-kh) -  \nb\ln\Tilde{p}_{T-t}(z_t), \nb\fc{q_t(z_t)}{p_t(z_t)}}} 
    & = \E\ba{\an{\score(z_{kh}, T-kh) -   \score(z_t, T-kh), \nb\fc{q_t(z_t)}{p_t(z_t)}}}\\
    &\ \ \ + \E\ba{\an{ \score(z_t, T-kh) - \nb\ln p_{kh}(z_t), \nb\fc{q_t(z_t)}{p_t(z_t)}}}\\
    &\ \ \ + \E\ba{\an{ \nb\ln p_{kh}(z_t) - \nb\ln p_t(z_t), \nb\fc{q_t(z_t)}{p_t(z_t)}}}\\
    & =: B_1 + B_2 + B_3.
\end{align*}
Now we bound these error terms separately. For $B_1$, by the Lipschitz assumption, we have by Lemma~\ref{l:inp-young}, for a constant $C_2>0$ to be chosen later,
\begin{align*}
    B_1 & \leq \E\ba{\sms\ve{z_{kh}-z_t}\cdot\ve{\nb\fc{q_t(z_t)}{p_t(z_t)}}}\\
    & \leq  4\sms^2\cdot\E_{p_t}\phi_t^2\cdot  \E\ba{\ve{z_t-z_{kh}}^2\psi_t(z_t)} + \fc{1}{16} \sE_{p_t}\pf{q_t}{p_t}\\
    & = 4\sms^2(\chi^2(q_t||p_t)+1)  \E\ba{\ve{z_t-z_{kh}}^2\psi_t(z_t)} + \fc{1}{16} \sE_{p_t}\pf{q_t}{p_t}.
\end{align*}
% Note that by Lemma~\ref{zt_zkh_difference},
% \begin{align*}
%     \E\ba{\psi_t(z_t)\ve{z_t-z_{kh}}^2} & \leq \fc92 \Gkht  \ba{\fc{\Tilde{R}_{t}}{\chi^2(q_t||p_t)+1}\cdot\sE_{p_t}\pf{q_t}{p_t} + R_{t, kh} },
% \end{align*}
% and hence by setting $C_2 = 4$, we get
% \begin{align}
% \label{e:B1}
%     B_1 \leq 18\Gkht \sms^2\cdot\ba{\Tilde{R}_t\sE_{p_t}\pf{q_t}{p_t} + R_{t, kh}(\chi^2(q_t||p_t) + 1) } + \fc{1}{16}\sE_{p_t}\pf{q_t}{p_t}.
% \end{align}
For $B_2$, recalling the assumption that $\ve{\score(x, T-kh)-\nb\ln p_{kh}(x)}\leq \Mkh$ for all $x$, we have 
%the trivial bound that 
% \begin{align*}
%     \int \fc{q_t(x)}{p_t(x)}\ve{\score(x, T-kh) - \nb\ln p_{kh}(x)}^2 q_t(x)dx \leq \Mkh^2(\chi^2(q_t||p_t) + 1).
% \end{align*}
% Hence
by Lemma~\ref{l:inp-young}
\begin{align}
\nonumber
    B_2 & \leq 4\E\ba{\ve{\score(z_t, T-kh)-\nb\ln p_{kh}(z_t)}^2\psi_t(z_t)} \cdot \E_{p_t}[\phi_t^2] + \fc{1}{16} \sE_{p_t}\pf{q_t}{p_t}\\
    & \leq 4\Mkh^2(\chi^2(q_t||p_t) + 1) + \fc{1}{16} \sE_{p_t}\pf{q_t}{p_t}.
    \label{e:B2}
\end{align}
Now for the last error term $B_3$, we have by Lemma~\ref{l:inp-young} that
\begin{align}
\nonumber
    B_3 
    & \leq 4\E_{p_t}\phi_t^2\cdot \E\ba{\ve{\nb\ln p_{kh}(z_t) - \nb\ln p_t(z_t)}^2\psi_t(z_t)} + \fc{1}{16} \sE_{p_t}\pf{q_t}{p_t}\\
    & \leq 4K_{t,kh}(\chi^2(q_t||p_t) + 1) + \fc{1}{16} \sE_{p_t}\pf{q_t}{p_t}.
    \label{e:B3}
\end{align}
Here $K_{t,kh}$ is the bound for $\E\ba{\psi_t(z_t)\ve{\nb\ln p_{kh}(z_t) - \nb\ln p_{t}(z_t)}^2}$ obtained in Lemma~\ref{l:perturb_error}:
\begin{align*}
    K_{t,kh} := \Gkht \ba{\fc{C_{t,L}}{\chi^2(q_t||p_t) + 1}\cdot\sE_{p_t}\pf{q_t}{p_t} + C_{d,L}}
\end{align*}
where $C_{t,L}$ and $C_{d, L}$ are constants defined in~\eqref{constant:ctl} and~\eqref{constant:cdl} respectively. Hence
\begin{align*}
    B_3 & \leq 4\Gkht \ba{C_{t,L}\sE_{p_t}\pf{q_t}{p_t} + C_{d,L}(\chi^2(q_t||p_t)+1)}+ \fc{1}{16}\sE_{p_t}\pf{q_t}{p_t}.
\end{align*}
Combining all these results, we finally obtain the bound for error term $B$ in Lemma~\ref{d_chi2}: for $h\leq\fc{1}{64 C_{t,L} g(T-kh)^2}$,
\begin{align*}
    B & = 2g(T-t)^2 (B_1+B_2+B_3)\\
    & \leq \fc{3}{8}g(T-t)^2\sE_{p_t}\pf{q_t}{p_t} + 
    8g(T-t)^2\sms^2(\chi^2(q_t||p_t)+1)  \E\ba{\ve{z_t-z_{kh}}^2\psi_t(z_t)}\\
    &\quad + 8 C_{t,L} g(T-t)^2\Gkht  \sE_{p_t}\pf{q_t}{p_t}\\
    &\quad + 8 \ba{\Mkh^2 + \Gkht  C_{d,L}}g(T-t)^2(\chi^2(q_t||p_t)+1)\\
    &  \leq \fc{1}{2}g(T-t)^2\sE_{p_t}\pf{q_t}{p_t} + 
    8g(T-t)^2\sms^2(\chi^2(q_t||p_t)+1)  \E\ba{\ve{z_t-z_{kh}}^2\psi_t(z_t)}\\
    &\quad + 8 \ba{\Mkh^2 + \Gkht  C_{d,L}}g(T-t)^2(\chi^2(q_t||p_t)+1). \qedhere
\end{align*}
% where
% \begin{align}\label{constant:b_t_kh}
%     \Tilde{B}_{t,kh} = (8 C_{t,L}+36\sms^2\Tilde{R}_t)g(T-kh)^2.
% \end{align}
\end{proof}

\subsection{Change of Measure}
\label{ss:com}
As shown in Lemma~\ref{l:error_A} and Lemma~\ref{l:error_B}, the key to the proof of Lemma~\ref{d_chi2} is bounding the discretization error $A$ and $B$. The difficulty is that these errors usually have the form of $\E_{\psi_t q_t}\ba{\ve{u(x)}^2}$ for some function $u:\mathbb R^d\rightarrow \mathbb R^d$, while it is usually easier to bound those expectations over the original probability measure or our target distribution $p_t$. Therefore, as discussed in \cite[Section 5.1]{chewi2021analysis}, our task is to bound these error terms under a complicated change of measure. We first state such a result with respect to the gradient of the potential.
\begin{lem}~\cite[Lemma 16]{chewi2021analysis}\label{l:trivial_bound}
Assume that $p(x)\propto e^{-V(x)}$ is a density in $\mathbb R^d$ and $\nb V(x)$ is $L$-Lipschitz. Then for any probability density $q$, it holds that
\[
\E_q\ba{\ve{\nb V}^2}\leq 4\E_p\ba{\ve{\nb \sqrt{\fc{q(x)}{p(x)}}}^2} + 2dL =\E_q\ba{\ve{\nb \ln{\fc{q(x)}{p(x)}}}^2} +2dL.
\]
\end{lem}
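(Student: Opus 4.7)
The plan is to write $q=f^2p$ with $f=\sqrt{q/p}$, and then use integration by parts against $\nabla p$ to trade a factor of $\nabla V$ for a factor of $\nabla f$ plus an $f^2\Delta V$ term that is controlled by smoothness. Since $p\propto e^{-V}$, we have the pointwise identity $p\,\nabla V=-\nabla p$, so
\[
\E_q[\|\nabla V\|^2]=\int f^2\,\nabla V\cdot(p\,\nabla V)\,dx=-\int f^2\,\nabla V\cdot\nabla p\,dx.
\]
Integration by parts (assuming the standard decay so boundary terms vanish, which we justify below) converts this into
\[
\int \nabla\cdot(f^2\nabla V)\,p\,dx = 2\int f\,\nabla f\cdot \nabla V\,p\,dx+\int f^2\,\Delta V\,p\,dx.
\]
The $L$-Lipschitzness of $\nabla V$ gives $\nabla^2V\preceq L\,I$, hence $\Delta V\leq dL$ pointwise; since $\int f^2 p=\int q=1$, the second term is at most $dL$.

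Next I would apply Young's inequality $2ab\leq \tfrac12 a^2+2b^2$ with $a=f\|\nabla V\|$ and $b=\|\nabla f\|$ to the cross term, yielding
\[
2\int f\,\nabla f\cdot\nabla V\,p\,dx\leq \tfrac12\int f^2\|\nabla V\|^2\,p\,dx+2\int\|\nabla f\|^2\,p\,dx = \tfrac12\E_q[\|\nabla V\|^2]+2\E_p[\|\nabla\sqrt{q/p}\|^2].
\]
Rearranging the resulting inequality $\E_q[\|\nabla V\|^2]\leq \tfrac12\E_q[\|\nabla V\|^2]+2\E_p[\|\nabla\sqrt{q/p}\|^2]+dL$ gives the first stated bound. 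For the identity, note $\nabla\ln(q/p)=\nabla\ln f^2=2\nabla f/f$, so $\|\nabla\ln(q/p)\|^2=4\|\nabla f\|^2/f^2$ and hence
\[
\E_q[\|\nabla\ln(q/p)\|^2]=\int f^2 p\cdot\frac{4\|\nabla f\|^2}{f^2}\,dx=4\E_p[\|\nabla\sqrt{q/p}\|^2],
\]
matching the second expression in the claim.

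The only real subtlety is justifying the integration by parts, since $f=\sqrt{q/p}$ need not decay fast enough in general. The standard remedy is to prove the bound first for smooth, compactly supported $f$ (so that $q/p$ is bounded and supported on a compact set) and then extend by a truncation/approximation argument, using that both sides of the inequality are lower semicontinuous under a natural mode of convergence (the right side can even be taken to be $+\infty$ when $q$ is not absolutely continuous or $\sqrt{q/p}\notin H^1(p)$, in which case the inequality is vacuous). This is standard and causes no essential difficulty; the heart of the lemma is the three-line integration-by-parts plus Young argument above.
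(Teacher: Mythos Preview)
Your proof is correct and is essentially the same as the paper's: the paper phrases the integration by parts via the Langevin generator identity $\E_p\mathcal{L}V=0$ (with $\mathcal{L}V=\|\nabla V\|^2-\Delta V$) to arrive at $\E_q[\|\nabla V\|^2]\le \int\langle\nabla V,\nabla(q/p)\rangle p\,dx + dL$, and then applies the identical Young's inequality and rearrangement you use. The only cosmetic difference is that you make the integration by parts explicit rather than packaging it as $\E_p\mathcal{L}f=0$.
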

\begin{proof}
Define the Langevin diffusion w.r.t. $p(x)$:
\begin{align*}
    d x_t = -\nb V(x_t) \,dt + \sqrt{2}\,dw_t,
\end{align*}
where $B_t$ is a standard Brownian Motion in $\mathbb R^d$. Let $\mathcal{L}$ be the corresponding infinitesimal generator, i.e., $\mathcal{L}f =\an{\nb V, \nb f} - \Delta f$. Observe that $\mathcal{L}V = \ve{\nb V}^2 - \Delta V$ and $\E_p\mathcal{L}f = 0$ for any $f$, so
\begin{align*}
    \E_q\ba{\ve{\nb V}^2} & = \E_q\mathcal{L}V +  \E_q \Delta V \\
    &\leq \int \mathcal{L}V\pa{\fc{q(x)}{p(x)} -1}p(x)dx + dL = \int\an{\nb V,\nb\fc{q(x)}{p(x)}}p(x)dx + dL\\
    & = 2\int\an{\sqrt{\fc{q(x)}{p(x)}}\nb V, \nb \sqrt{\fc{q(x)}{p(x)}}}p(x)dx + dL \\
    &\leq \fc12 \E_q\ba{\ve{\nb V}^2} + 2\E_p\ba{\ve{\nb \sqrt{\fc{q(x)}{p(x)}}}^2} + dL.
\end{align*}
Rearrange this inequality to obtain the desired result.
\end{proof}
Now applying this Lemma to $p = p_t$ and $q=\psi_t q_t$, we get immediately the following corollary. Note that $\psi_tq_t$ is a density function because $\int \psi_t(x)q_t(x)\dx = \int \fc{q_t(x)}{p_t(x)}q_t(x)\dx/\E_{p_t}\phi_t^2 = 1$ and $\psi_t(x) q_t(x)\geq 0$ for any $x\in \R^d$.
\begin{cor}\label{l:DV_nb_ln_p}
In the setting of Lemma~\ref{d_chi2}, it holds that
\begin{align*}
    \E\ba{\psi_t(z_t)\ve{\nb \ln p_t(z_t)}^2} &\leq
    \fc{4}{\chi^2(q_t||p_t)+1} \cdot\sE_{p_t}\pf{q_t}{p_t}+2dL.
\end{align*}
\end{cor}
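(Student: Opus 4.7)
The plan is to apply Lemma~\ref{l:trivial_bound} directly with the substitutions $p\leftarrow p_t$ and $q\leftarrow \psi_t q_t$. The main task is bookkeeping: rewriting the left-hand side as an expectation against $\psi_t q_t$ (which is a valid density), invoking the lemma, and simplifying the resulting Fisher-type term.

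First I would verify that $\psi_t q_t$ is a probability density. With $\phi_t = q_t/p_t$ and $\psi_t = \phi_t/\E_{p_t}\phi_t^2$, we have $\psi_t q_t = \phi_t^2 p_t/\E_{p_t}\phi_t^2$, so it is nonnegative and integrates to $\E_{p_t}\phi_t^2/\E_{p_t}\phi_t^2 = 1$. The left-hand side of the corollary then becomes, for $z_t\sim q_t$,
\begin{align*}
    \E\ba{\psi_t(z_t)\ve{\nb \ln p_t(z_t)}^2} = \int \ve{\nb \ln p_t}^2\, \psi_t q_t\, dz = \E_{\psi_t q_t}\ba{\ve{\nb V_t}^2},
\end{align*}
with $V_t = -\ln p_t$. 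Under the standing hypothesis of Lemma~\ref{d_chi2} that $\nb \ln p_t$ is $L$-Lipschitz, Lemma~\ref{l:trivial_bound} applies and yields
\begin{align*}
    \E_{\psi_t q_t}\ba{\ve{\nb V_t}^2} \le 4\,\E_{p_t}\ba{\ve{\nb \sqrt{\psi_t q_t/p_t}}^2} + 2dL.
\end{align*}

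For the final simplification, note that $\psi_t q_t/p_t = \phi_t^2/\E_{p_t}\phi_t^2$ and $\phi_t\ge 0$, so $\sqrt{\psi_t q_t/p_t} = \phi_t/(\E_{p_t}\phi_t^2)^{1/2}$ and $\nb \sqrt{\psi_t q_t/p_t} = \nb \phi_t/(\E_{p_t}\phi_t^2)^{1/2}$. Substituting and using $\E_{p_t}\phi_t^2 = \chi^2(q_t||p_t)+1$, the right-hand side becomes $\fc{4\,\E_{p_t}\ba{\ve{\nb \phi_t}^2}}{\chi^2(q_t||p_t)+1} + 2dL = \fc{4}{\chi^2(q_t||p_t)+1}\sE_{p_t}\pf{q_t}{p_t} + 2dL$, matching the stated inequality. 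Since Lemma~\ref{l:trivial_bound} has already been established, there is no genuine obstacle here; the only point requiring any care is interpreting $\sE_{p_t}(q_t/p_t)$ consistently with the way it is used in Lemma~\ref{d_chi2}, namely as $\E_{p_t}\ba{\ve{\nb \phi_t}^2}$.
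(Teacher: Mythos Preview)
Your proof is correct and follows essentially the same approach as the paper: apply Lemma~\ref{l:trivial_bound} with $p\leftarrow p_t$, $q\leftarrow \psi_t q_t$, and simplify. The only cosmetic difference is that you invoke the $4\E_{p_t}\bigl[\ve{\nb\sqrt{\psi_t q_t/p_t}}^2\bigr]$ form of the bound while the paper uses the equivalent $\E_{\psi_t q_t}\bigl[\ve{\nb\ln(\psi_t q_t/p_t)}^2\bigr]$ form; both reduce to $\fc{4}{\chi^2(q_t||p_t)+1}\,\E_{p_t}\bigl[\ve{\nb\phi_t}^2\bigr]$ in the same way.
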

\begin{proof}
Applying Lemma~\ref{l:trivial_bound} to the density $\psi_t q_t$ yields
\begin{align*}
    \E_{\psi_t q_t}\ba{\ve{\nb\ln p_t(x)}^2}\leq \E_{\psi_t q_t}\ba{\ve{\nb\ln\fc{\psi_t(x)q_t(x)}{p_t(x)}}^2} + 2dL = \fc{4}{\chi^2(q_t||p_t)+1} \cdot\sE_{p_t}\pf{q_t}{p_t}+2dL.
\end{align*}
\end{proof}
Note that we cannot expect analogous results for a general $u(x)$ as in Lemma~\ref{l:trivial_bound}. In the general case, we apply the Donsker-Varadhan variational principle, which states that for probability measures $p$ and $q$,
\begin{align*}
    \E_q \ve{u(x)}^2 \leq \KL(q||p) + \ln\E_p\exp{\ve{u(x)}^2 }.
\end{align*}
Towards this end, we first need to analyze $\KL(\psi_t q_t ||p_t)$.
\begin{lem}\label{KL_psi_q&p}
Let $\phi_t(x) = \fc{q_t(x)}{p_t(x)}$ and $\psi_t(x) = \phi_t(x)/\E_{p_t}\phi_t^2$. %Then $\psi_t(x)q_t(x)$ is a density function. %already stated
If $p_t$ satisfies a LSI with constant $C_t$, then
\begin{align*}
    \KL(\psi_t q_t||p_t) & \leq \fc{2C_t}{\chi^2(q_t||p_t)+1}\cdot\sE_{p_t}\pf{q_t}{p_t}.
\end{align*}
\end{lem}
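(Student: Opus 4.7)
The first step is to identify the Radon--Nikodym derivative of $\psi_t q_t$ with respect to $p_t$ in a clean form. Writing $\phi_t = q_t/p_t$, we have $\psi_t q_t = \phi_t q_t / \E_{p_t}[\phi_t^2] = \phi_t^2\, p_t / \E_{p_t}[\phi_t^2]$, so the density of $\psi_t q_t$ against $p_t$ is simply $\phi_t^2 / \E_{p_t}[\phi_t^2]$, and the normalizing constant equals $\chi^2(q_t \| p_t)+1$.

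Given this, I would rewrite the KL divergence as a normalized entropy. Using the scaling identity $\Ent_{p_t}(cf) = c\,\Ent_{p_t}(f)$ for any positive constant $c$ (applied with $c = 1/\E_{p_t}[\phi_t^2]$), a direct expansion gives
\begin{align*}
\KL(\psi_t q_t \| p_t)
= \E_{p_t}\!\left[\frac{\phi_t^2}{\E_{p_t}[\phi_t^2]}\ln \frac{\phi_t^2}{\E_{p_t}[\phi_t^2]}\right]
= \frac{\Ent_{p_t}(\phi_t^2)}{\E_{p_t}[\phi_t^2]}
= \frac{\Ent_{p_t}(\phi_t^2)}{\chi^2(q_t\|p_t)+1}.
\end{align*}

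Next, I would invoke the log-Sobolev inequality for $p_t$ in its standard form $\Ent_{p_t}(f^2) \le 2C_t\,\E_{p_t}[\|\nabla f\|^2]$, applied with the nonnegative function $f = \phi_t$. This yields $\Ent_{p_t}(\phi_t^2) \le 2C_t \,\E_{p_t}[\|\nabla \phi_t\|^2]$. Combining with the identity $\int \|\nabla(q_t/p_t)\|^2 p_t\,dx = \sE_{p_t}(q_t/p_t)$ coming from the computation already established in Lemma~\ref{l:d-chi2}, dividing both sides by $\E_{p_t}[\phi_t^2] = \chi^2(q_t\|p_t)+1$ produces the stated bound
\begin{align*}
\KL(\psi_t q_t\|p_t) \le \frac{2C_t}{\chi^2(q_t\|p_t)+1}\cdot \sE_{p_t}\!\left(\frac{q_t}{p_t}\right).
\end{align*}

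There is no serious obstacle here: the only subtlety is keeping track of the $1/\E_{p_t}[\phi_t^2]$ normalization, which is precisely what converts an LSI on $p_t$ into a bound weighted by $1/(\chi^2+1)$. The point of this lemma, in turn, is to prepare for a Donsker--Varadhan application to the tilted measure $\psi_t q_t$ when bounding error terms like $B_2$ and $B_3$ in the predictor analysis.
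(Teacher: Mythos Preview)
Your proof is correct and essentially the same as the paper's. The only cosmetic difference is that you apply the log-Sobolev inequality in its entropy form $\Ent_{p_t}(f^2)\le 2C_t\,\E_{p_t}\|\nabla f\|^2$ with $f=\phi_t$, whereas the paper applies the equivalent KL--Fisher form $\KL(\mu\|p_t)\le \tfrac{C_t}{2}\int\|\nabla\ln\tfrac{d\mu}{dp_t}\|^2\,d\mu$ directly to $\mu=\psi_t q_t$ and then simplifies $\nabla\ln(\phi_t^2/\E_{p_t}\phi_t^2)=2\nabla\ln\phi_t$; the algebra and the final bound are identical.
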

\begin{proof}
% Directly compute that
% \begin{align*}
%     \int \psi_t(x)q_t(x)dx = \int \fc{q_t(x)}{p_t(x)}q_t(x)dx/\E_{p_t}\phi_t^2 = 1.
% \end{align*}
%Clearly, $\psi_t(x) q_t(x)\geq 0$ for any $x\in\mathbb R^d$. 
%Hence $\psi_t(x)q_t(x)$ is a density function in $\mathbb R^d$. Now 
Since $p_t$ satisfies LSI with constant $C_t$,
\begin{align*}
    \KL(\psi_t q_t||p_t) & \leq \fc{C_t}{2}\int \ve{\nb\ln\fc{\psi_t(x) q_t(x)}{p_t(x)}}^2\psi_t(x)q_t(x)dx\\
    & = 2C_t\int \ve{\nb \ln\fc{q_t(x)}{p_t(x)}}^2\psi_t(x)q_t(x)dx\\
    & = 2C_t\int \ve{\nb\fc{q_t(x)}{p_t(x)}}^2\fc{\psi_t(x)p_t(x)^2}{q_t(x)}dx\\
    & = \fc{2C_t}{\chi^2(q_t||p_t)+1}\cdot \int\ve{\nb\fc{q_t(x)}{p_t(x)}}^2 p_t(x)dx\\
    & = \fc{2C_t}{\chi^2(q_t||p_t)+1}\cdot\sE_{p_t}\pf{q_t}{p_t}.\qedhere
\end{align*}
\end{proof}
With this in hand, we are ready to bound the second moment of $\psi_t q_t$ as well as the variance of a Gaussian random vector with respect to this measure:
\begin{lem}\label{second_moment}
With the setting of Lemma~\ref{d_chi2}, we have
\begin{align*}
    \E\ba{\psi_t(z_t)\ve{z_t}^2} & \leq  \fc{2C_t^2}{\chi^2(q_t||p_t)+1}\cdot\sE_{p_t}\pf{q_t}{p_t} + \fc12 \E_{p_t}\ba{\ve{x}^2} + \fc12 C_t,
\end{align*}
where $C_t$ is the LSI constant of $p_t$, which is bounded in Lemma~\ref{l:lsi-conv}, and the second moment of $p_t$ is bounded in Lemma~\ref{2ed_moment}.
\end{lem}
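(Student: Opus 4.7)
The bound has the shape ``change-of-measure term plus second-moment term plus LSI-constant term,'' which suggests applying the Donsker--Varadhan variational principle and then using Lemma~\ref{KL_psi_q&p} to bound the KL divergence by the Fisher-information quantity. Concretely, for any $\lambda>0$,
\[
\E_{\psi_t q_t}[\ve{x}^2] \le \tfrac{1}{\lambda}\KL(\psi_t q_t\|p_t) + \tfrac{1}{\lambda}\ln \E_{p_t}\bigl[e^{\lambda \ve{x}^2}\bigr].
\]
Lemma~\ref{KL_psi_q&p} gives $\KL(\psi_t q_t\|p_t)\le \frac{2C_t}{\chi^2(q_t\|p_t)+1}\sE_{p_t}(q_t/p_t)$, so choosing $\lambda$ of order $1/C_t$ converts the first term into exactly $\frac{2C_t^2}{\chi^2(q_t\|p_t)+1}\sE_{p_t}(q_t/p_t)$, matching the first piece of the claimed bound.

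The remaining work is to control $\frac{1}{\lambda}\ln \E_{p_t}[e^{\lambda \ve{x}^2}]$. Here I would use the LSI on $p_t$ via Herbst's argument: since $x\mapsto \ve{x}$ is $1$-Lipschitz and $p_t$ satisfies LSI with constant $C_t$, the random variable $\ve{x} - \E_{p_t}\ve{x}$ is sub-Gaussian with proxy $\sqrt{C_t}$. Writing $\mu := \E_{p_t}\ve{x}$ and decomposing
\[
\ve{x}^2 = (\ve{x}-\mu)^2 + 2\mu(\ve{x}-\mu) + \mu^2,
\]
the term $(\ve{x}-\mu)^2$ is sub-exponential and yields an $O(C_t)$ contribution once $\lambda < 1/(2C_t)$; the linear term $2\mu(\ve{x}-\mu)$ contributes a sub-Gaussian moment whose exponential is bounded via $\E_{p_t}[e^{2\lambda\mu(\ve{x}-\mu)}]\le e^{2C_t\lambda^2 \mu^2}$; and $\mu^2\le \E_{p_t}\ve{x}^2$ by Jensen. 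Collecting these and taking $\lambda$ small enough should produce the $\tfrac12\E_{p_t}\ve{x}^2+\tfrac12 C_t$ piece.

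The hard part will be tracking constants so that the coefficients on $\KL$, $\E_{p_t}\ve{x}^2$, and $C_t$ all come out as stated. In particular, the sub-exponential bound $\E_{p_t}[e^{\lambda (\ve{x}-\mu)^2}]\le (1-2\lambda C_t)^{-1/2}$ only holds for $\lambda<1/(2C_t)$, so the same $\lambda\asymp 1/C_t$ used in Step~1 must be carefully chosen (likely $\lambda=\Theta(1/C_t)$ with constant strictly below $1/2$) and the resulting slack absorbed into the $\tfrac12$ factor on $\E_{p_t}\ve{x}^2$; if the constants are tight, an additional Young-type splitting of the cross term $2\mu(\ve{x}-\mu)$ between the exponential moment and the $\E_{p_t}\ve{x}^2$ contribution will be needed. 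This constant-chasing, rather than any conceptual difficulty, is the main technical burden, mirroring the use of Lemma~\ref{l:DV_nb_ln_p} for the gradient term but now for the quadratic $\ve{x}^2$ which is not Lipschitz and so requires the two-step Herbst/sub-exponential combination rather than a direct application of Lemma~\ref{l:trivial_bound}.
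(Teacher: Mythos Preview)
Your approach is correct and matches the paper's: Donsker--Varadhan with a temperature $\lambda\asymp 1/C_t$, then Lemma~\ref{KL_psi_q&p} for the KL term, then an LSI/Herbst-type bound on $\E_{p_t}[e^{\lambda\ve{x}^2}]$. The one simplification you are missing is that the paper does not decompose $\ve{x}^2=(\ve{x}-\mu)^2+2\mu(\ve{x}-\mu)+\mu^2$ and handle the three pieces separately; instead it applies the sub-Gaussian part of Herbst (Lemma~\ref{l:herbst}(2)) directly to the $1$-Lipschitz function $f(x)=\ve{x}$, which in one line gives
\[
\E_{p_t}\bigl[e^{\frac{s}{2}\ve{x}^2}\bigr]\le \frac{1}{\sqrt{1-C_t s}}\exp\!\Bigl[\frac{s}{2(1-C_t s)}(\E_{p_t}\ve{x})^2\Bigr]
\]
for $s<1/C_t$. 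Taking $s=\tfrac{1}{2C_t}$ and then $(\E_{p_t}\ve{x})^2\le \E_{p_t}\ve{x}^2$ finishes immediately, with no need for the Cauchy--Schwarz/Young splitting you worry about for the cross term. So the ``constant-chasing'' you flag as the main burden disappears once you use the packaged form of Herbst rather than re-deriving it by hand.
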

\begin{proof}
Since $p_t$ has LSI constant $C_t$, by Donsker-Varadhan variational principle,
\begin{align*}
    \E\ba{\psi_t(z_t)\ve{z_t}^2} & = \fc{2}{s}\E_{\psi_t q_t}\ba{\fc{s}{2}\ve{x}^2}\leq \fc{2}{s}\ba{\KL(\psi_t q_t||p_t) + \ln\E_{p_t}\ba{e^{\fc{s}{2}\ve{x}^2}}}
\end{align*}
for any $s>0$. By Lemma~\ref{l:herbst}, for any $s\in[0, \fc{1}{C_t})$, we have
\begin{align*}
    \E_{p_t}\ba{e^{\fc{s}{2}\ve{x}^2}} \leq \fc{1}{\sqrt{1-C_t\cdot s}}\exp{\ba{\fc{s}{2(1-C_t\cdot s)}(\E_{p_t}\ve{x})^2}}.
\end{align*}
Now choose $s=\fc{1}{2C_t}$, we have
\begin{align*}
   \E_{p_t}\ba{e^{\fc{s}{2}\ve{x}^2}}\leq \sqrt{2}\exp\ba{\fc{1}{2C_t}(\E_{p_t}\ve{x})^2}.
\end{align*}
Hence
\begin{align*}
    \E\ba{\psi_t(z_t)\ve{z_t}^2} & \leq C_t\cdot\ba{\KL(\psi_t q_t||p_t) + \fc{1}{2C_t}\E_{p_t}\ba{\ve{x}^2} + \fc{\ln 2}{2}}.
\end{align*}
Now with the bound of $\KL(\psi_t q_t||p_t)$ in Lemma~\ref{KL_psi_q&p}, we obtain
\begin{align*}
   \E\ba{\psi_t(z_t)\ve{z_t}^2} & \leq  \fc{2C_t^2}{\chi^2(q_t||p_t)+1}\cdot\sE_{p_t}\pf{q_t}{p_t} + \fc12 \E_{p_t}\ba{\ve{x}^2} + \fc12 C_t. \qedhere
\end{align*}
\end{proof}

\begin{lem}\label{second_moment_diffusion}
With the setting of Lemma~\ref{d_chi2},
\begin{align*}
    \E\ba{\psi_t(z_t)\ve{\int_{kh}^t g(T-s)dw_s}^2} 
    & \leq 2%(t-kh)g(T-kh)^2
    \int_{kh}^t g(T-s)^2\,ds
    \cdot\ba{\fc{8C_t}{\chi^2(q_t||p_t)+1}\cdot\sE_{p_t}\pf{q_t}{p_t} + d + 8\ln 2},
\end{align*}
where $C_t$ is the LSI constant of $p_t$.
\end{lem}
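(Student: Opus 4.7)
The plan is to bound $\E[\psi_t(z_t)\ve{U}^2]$ where $U := \int_{kh}^t g(T-s)\,dw_s \sim N(0, G I_d)$ with $G := \Gkht$, by a Donsker-Varadhan argument that mirrors the one in Lemma~\ref{second_moment} but lifted to the product space of $(z_{kh}, U)$. The crucial observation is that, under the law of the process, $z_{kh}$ and $U$ are independent, so $\tau := q_{kh} \otimes \mu_U$ (with $\mu_U := N(0, GI_d)$) is a natural product reference measure, and the $\psi_t$-reweighted measure $\rho$, with $d\rho/d\tau = \psi_t(z_t)$ (viewing $z_t$ as the deterministic function of $(z_{kh},U)$ given by~\eqref{z_t}), is a probability measure on $(z_{kh},U)$.

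Applying Donsker-Varadhan to $F = \ve{U}^2$ with parameter $s > 0$ gives
\begin{align*}
    \E_\rho[\ve{U}^2] \le \rc s\pa{\KL(\rho||\tau) + \ln \E_{\mu_U}[e^{s\ve{U}^2}]}.
\end{align*}
Since $\mu_U$ is a centred Gaussian with covariance $GI_d$ and hence satisfies a log-Sobolev inequality with constant $G$, Lemma~\ref{l:herbst} combined with $(\E_{\mu_U}\ve{U})^2 \le \E_{\mu_U}\ve{U}^2 = Gd$ controls the moment generating function; choosing $s$ proportional to $1/G$ (for instance $s = 1/(4G)$) makes this term contribute an amount of order $G(d + \ln 2)$, accounting for the $2G(d + 8\ln 2)$ piece of the stated bound up to absolute constants.

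For the KL term, the pushforward of $\tau$ under the map $(z_{kh}, U)\mapsto z_t$ is exactly the marginal $q_t$, so
\begin{align*}
    \KL(\rho||\tau) = \E_{\psi_t q_t}[\ln \psi_t] = \rc{\chi^2(q_t||p_t)+1}\int \phi_t^2 \ln \phi_t\,dp_t - \ln(\chi^2(q_t||p_t)+1)
\end{align*}
with $\phi_t := q_t/p_t$; the log-Sobolev inequality for $p_t$ applied with $f = \phi_t$ yields $\int \phi_t^2 \ln \phi_t\,dp_t \le C_t\sE_{p_t}(q_t/p_t) + \rc 2(\chi^2(q_t||p_t)+1)\ln(\chi^2(q_t||p_t)+1)$, and hence $\KL(\rho||\tau) \lesssim C_t\sE_{p_t}(q_t/p_t)/(\chi^2(q_t||p_t)+1)$, which upon multiplication by $1/s$ produces the leading $GC_t \sE_{p_t}(q_t/p_t)/(\chi^2(q_t||p_t)+1)$ term (the prefactor $16$ in the stated inequality is a matter of slack in the choice of $s$ and of invoking Lemma~\ref{KL_psi_q&p} directly rather than the sharper identity above). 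The only real obstacle I anticipate is the setup itself: one has to switch from the $(z_t,U)$-picture to the $(z_{kh},U)$-picture in order to exploit independence, so that the reference measure factorises and the KL reduces, via pushforward, to a single-variable entropy calculation in $z_t$. Without this reformulation the conditional law $Q_{U|z_t}$ would appear in a chain-rule decomposition and would be awkward to control directly.
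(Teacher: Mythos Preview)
Your proposal is correct and follows essentially the same route as the paper: Donsker--Varadhan on the joint law of $(z_{kh},U)$ with the tilt $\psi_t(z_t)$, sub-gaussian control of the MGF of $\ve{U}$ via the LSI of the Gaussian, and the bound $\KL(\rho\|\tau)=\E_{\psi_tq_t}\ln\psi_t\le \frac{C_t}{\chi^2(q_t\|p_t)+1}\sE_{p_t}(q_t/p_t)$ via the LSI of $p_t$. The only cosmetic difference is that the paper (following~\cite{chewi2021analysis}) applies Donsker--Varadhan to the centered variable $c(\ve{U}-\E\ve{U})^2$ with $c=1/(8G)$ and bounds $\E e^{c(\ve{U}-\E\ve{U})^2}\le 2$ directly, whereas you apply it to $s\ve{U}^2$ and invoke Lemma~\ref{l:herbst}; your version actually yields slightly better constants.
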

\begin{proof}
Note that $\int_{kh}^t g(T-s)dw_s$ is a Gaussian random vector with variance $\int_{kh}^t g(T-s)^2 ds\cdot I_d$. Using the Donsker-Varadhan variational principle, for any random variable $X$,
\begin{align*}
    \Tilde{\mathbb E} X \leq \KL(\Tilde{\mathbb P}||\mathbb P) + \ln \mathbb E \exp X.
\end{align*}
Applying this to $X=c\bigg(\ve{\int_{kh}^t g(T-s)dw_s} - \E \ve{\int_{kh}^t g(T-s)dw_s}\bigg)^2$ for a constant $c>0$ to be chosen later, we can bound
\begin{align*}
    \Tilde{\mathbb E}\ve{\int_{kh}^t g(T-s)dw_s}^2 & \leq 2\E\ba{\ve{\int_{kh}^t g(T-s)dw_s}^2} \\
    &\ \ \ + \fc2c\ba{\KL(\Tilde{\mathbb P}||\mathbb P) + \ln \mathbb E \exp\bigg(c\Big(\ve{\int_{kh}^t g(T-s)dw_s} - \E \ve{\int_{kh}^t g(T-s)dw_s}\Big)^2\bigg)},
\end{align*}
where $\fc{d\Tilde{\mathbb P}}{d \mathbb P} = \psi_t(z_t)$. Now following \cite[Theorem 4]{chewi2021analysis}, we set $c = \fc{1}{8\int_{kh}^t g(s)^2 ds}$, so that 
\begin{align*}
    \E \exp\ba{\fc{\pa{\ve{\int_{kh}^t g(T-s)dw_s} - \E \ve{\int_{kh}^t g(T-s)dw_s}}^2}{8\int_{kh}^t g(s)^2 ds}}\leq 2.
\end{align*}
Next, using the LSI for $p_t$, we have
\begin{align*}
    \KL(\Tilde{\mathbb P}||\mathbb P) & = \E_{\psi_t q_t}\ln \psi_t = \E_{\psi_t q_t}\ln \fc{\phi_t}{\E_{p_t}\phi_t^2} = \fc12\E_{\psi_t q_t}\ln \fc{\phi_t^2}{(\E_{p_t}\phi_t^2)^2}\\
    &= \fc12 \ba{\E_{\psi_t q_t}\ln \fc{\phi_t^2}{\E_{p_t}\phi_t^2} - \ln\E_{p_t}\phi_t^2}
    = \rc 2\ba{\E_{\psi_t q_t}\ln \fc{\psi_tq_t}{p_t} - \ln\E_{p_t}\phi_t^2}.
\end{align*}
Noting that $\E_{p_t}\phi_t^2 = \chi^2(q_t||p_t) + 1 \geq 1$, we have that
\begin{align*}
    \KL (\Tilde{\mathbb P}||\mathbb P) & \leq \fc12 \KL (\psi_t q_t||p_t)\leq \fc{C_t}{\chi^2(q_t||p_t)+1}\cdot\sE_{p_t}\pf{q_t}{p_t},
\end{align*}
where the last inequality is due to Lemma~\ref{KL_psi_q&p}. We have proved
\begin{align}
    &\E\ba{\psi_t(z_t)\ve{\int_{kh}^t g(T-s)\,dw_s}^2}\notag \\
    & \leq 2d\int_{kh}^t g(T-s)^2\,ds + 16\int_{kh}^t g(T-s)^2 ds\cdot\ba{\fc{C_t}{\chi^2(q_t||p_t)+1}\cdot\sE_{p_t}\pf{q_t}{p_t} + \ln 2}\notag\\
    & \leq 2%(t-kh)g(T-kh)^2
    \int_{kh}^t g(T-s)^2\,ds
    \cdot\ba{\fc{8C_t}{\chi^2(q_t||p_t)+1}\cdot\sE_{p_t}\pf{q_t}{p_t} + d + 8\ln 2}. \qedhere
\end{align}
%where the last inequality is due to the fact that $g(t)$ is an increasing function.
\end{proof}

\subsection{Perturbation Error}
\label{ss:pert}
In the previous section, we bound errors in the form of $\E_{\psi_t q_t}\ve{u(x)}^2$ with a change of measure technique, where $\ve{u(x)}^2$ is easy to bound with respect to the original measure or $p_t$. However, this is not always the case for the errors we are considering. In this section, we aim to bound $\E_{\psi_t q_t}\ba{\ve{\nb\ln p_{kh}(x) - \nb\ln p_{t}(x)}^2}$, where, as discussed in Lemma~\ref{l:perturb_error}, $p_{kh}$ can be regarded as a perturbed version of $p_t$ with some Gaussian noise. We first provide a point-wise bound for SMLD (Lemma~\ref{l:perturb_SMLD}) and DDMP (Lemma~\ref{l:perturb_DDPM}), respectively and then use them to bound the expectation with respect to $\psi_t q_t$.

\begin{lem}\label{l:perturb_SMLD}
Suppose that $p(x) \propto e^{-V(x)}$ is a probability density on $\R^d$, where $V(x)$ is $L$-smooth, and let $\ph_{\sigma^2}(x)$ be the density function of $N(0,\si^2I_d)$.  
Then for $L\le \rc{2\si^2}$,
\[
\ve{\nb \ln \fc{p(x)}{(p*\ph_{\si^2})(x)}} \le 6L\si d^{1/2} + 2L\si^2 \ve{\nb V(x)}.
\]
\end{lem}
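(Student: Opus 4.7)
The plan is to exploit the fact that $\nabla \ln(p*\varphi_{\sigma^2})$ has a clean representation as the expected score of $p$ under the posterior distribution, which will let us express the error as an expected gradient difference and then leverage smoothness.

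Concretely, writing $q := p*\varphi_{\sigma^2}$ and using integration by parts on $\nabla_x\varphi_{\sigma^2}(x-y) = -\nabla_y \varphi_{\sigma^2}(x-y)$, I would first establish the identity
\[
\nabla \ln q(x) = \mathbb{E}[\nabla \ln p(Y)\mid X=x] = -\mathbb{E}[\nabla V(Y)\mid X=x],
\]
where the conditional law of $Y$ given $X=x$ has density proportional to $e^{-V_x(y)}$ with $V_x(y) := V(y) + \|x-y\|^2/(2\sigma^2)$. Subtracting from $\nabla \ln p(x) = -\nabla V(x)$ and pulling $\nabla V(x)$ inside the conditional expectation gives
\[
\nabla \ln \frac{p(x)}{q(x)} = \mathbb{E}\bigl[\nabla V(x) - \nabla V(Y)\,\bigl|\,X=x\bigr],
\]
so by $L$-smoothness it suffices to bound $L\cdot \mathbb{E}[\|Y-x\|\mid X=x]$.

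To bound the conditional moment, I observe that $V_x$ has Hessian $\nabla^2 V + \sigma^{-2} I \succeq (\sigma^{-2}-L) I \succeq \tfrac{1}{2\sigma^2} I$ under the hypothesis $L\le \tfrac{1}{2\sigma^2}$, so the posterior is $\tfrac{1}{2\sigma^2}$-strongly log-concave. Let $y^\star := \arg\min V_x$ be its mode; then standard variance bounds for strongly log-concave measures give $\mathbb{E}[\|Y-y^\star\|^2\mid X=x]\le 2\sigma^2 d$, hence $\mathbb{E}[\|Y-y^\star\|\mid X=x]\le \sigma\sqrt{2d}$. It remains to control $\|y^\star-x\|$. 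The first-order condition $\nabla V_x(y^\star)=0$ rearranges to $\|y^\star-x\|=\sigma^2\|\nabla V(y^\star)\|$, and $L$-smoothness together with $L\sigma^2\le \tfrac12$ yields $\|\nabla V(y^\star)\|\le (1-L\sigma^2)^{-1}\|\nabla V(x)\|\le 2\|\nabla V(x)\|$. Combining by the triangle inequality gives $\mathbb{E}[\|Y-x\|\mid X=x]\le \sigma\sqrt{2d} + 2\sigma^2\|\nabla V(x)\|$, and multiplying by $L$ yields the lemma (with constants even a bit better than the stated $6L\sigma\sqrt d$).

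The only step with any subtlety is the posterior-mean-vs-mode bookkeeping for $\mathbb{E}[\|Y-x\|]$; everything else is Tweedie-type manipulation plus smoothness. I anticipate no real obstacle, and the hypothesis $L\le 1/(2\sigma^2)$ is used precisely to both keep the posterior uniformly strongly log-concave and to invert $1-L\sigma^2$.
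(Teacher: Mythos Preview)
Your argument is correct and in fact cleaner than the paper's. Both proofs begin identically: establish the posterior representation $\nabla\ln(p*\varphi_{\sigma^2})(x)=-\E_{p_{x,\sigma^2}}\nabla V(y)$, reduce to bounding $L\cdot\E_{p_{x,\sigma^2}}\|y-x\|$, and handle $\|y^\star-x\|$ via the first-order optimality condition together with $L\sigma^2\le\tfrac12$. The difference is in how the remaining piece is treated. The paper splits $\E\|y-x\|$ into \emph{three} terms---deviation from the mean, mean-to-mode distance, and mode-to-$x$---invoking the Poincar\'e inequality for the first and a separate mean--mode lemma (Lemma~\ref{l:mean-mode}) for the second, which together yield the constant $(\sqrt7+1)\sqrt2\approx5.15$ in front of $L\sigma\sqrt d$. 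You instead bound $\E\|Y-y^\star\|^2$ directly by $d/\alpha$ with $\alpha=\tfrac{1}{2\sigma^2}$, bypassing the mean entirely and giving the sharper constant $\sqrt2$. One small comment: calling $\E\|Y-y^\star\|^2\le d/\alpha$ a ``standard variance bound'' is slightly loose terminology, since $y^\star$ is the mode rather than the mean; the clean one-line justification is integration by parts, $\alpha\,\E\|Y-y^\star\|^2\le\E\langle Y-y^\star,\nabla V_x(Y)\rangle=d$, using $\nabla V_x(y^\star)=0$ and $\alpha$-strong convexity. With that made explicit, your proof stands and is the more economical of the two.
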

\begin{proof}
Note that 
\begin{align*}
    \nb \ln p*\ph_{\si^2}(x) = 
   \fc{\int_{\R^d} - \nb V(y) e^{-V(y)}e^{-\fc{\ve{x-y}^2}{2\si^2}}\dy}{\int_{\R^d} e^{-V(y)}e^{-\fc{\ve{x-y}^2}{2\si^2}}\dy}
   = -\E_{p_{x, \si^2}}\nb V(y),
\end{align*}
where $p_{x, \si^2}$ denotes the probability density
\[
p_{x, \si^2}(y) \propto p(y) e^{-\fc{\ve{y-x}^2}{2\si^2}} 
\]
so when $V$ is $L$-smooth,
\begin{align*}
    \ve{\nb \ln \fc{p(x)}{p*\ph_{\si^2}(x)}} & = \ve{\E_{p_{x, \si^2}}[\nb V(y)-\nb V(x)]}\\
    &\le \E_{p_{x, \si^2}}[L\ve{y-x}]
\end{align*}
We now write
\begin{align*}
    \E_{p_{x, \si^2}}\ve{y-x}
    &\le 
    \E_{p_{x, \si^2}} \ve{y - \E_{p_{x, \si^2}}y} 
    + \ve{\E_{p_{x, \si^2}}y - y^*}
    + \ve{y^*-x},
\end{align*}
where $y^*\in \amax_{y}p_{x,\si^2}(y)$ is a mode of the distribution $p_{x,\si^2}$. 
We now bound each of these terms.
\begin{enumerate}
    \item For the first term, note  that $p_{x,\si^2}$ is $\pa{\rc{\si^2}-L}$-strongly convex, so 
    % by %Harg\'e's Theorem, 
    % Lemma~\ref{t:bl}(1),
    satisfies a Poincar\'e inequality with constant $\pa{\rc{\si^2}-L}^{-1}$. Thus 
    \begin{align*}
        \E_{p_{x, \si^2}}\ve{y-x}
    &\le \E_{p_{x, \si^2}}[\|y-\E_{p_{x, \si^2}} y\|^2]^{1/2}\\
    % &\le \E_{N\pa{0, \pa{\rc{\si^2}-L}^{-1}}} 
    % [\|y-\E_{p_{x, \si^2}} y\|^2]^{1/2}\\
    % &\le \pa{\rc{\si^2}-L}^{-1/2}d^{1/2}.
    & = \pa{\sumo id \Var_{p_{x,\si^2}}(y_i)}^{1/2}
    \le \pa{d\pa{\rc{\si^2}-L}^{-1}}^{1/2}.
    \end{align*}
    \item For the second term, by %Lemma~\ref{l:E-nbf}(2),
    Lemma~\ref{l:mean-mode}, 
    noting that $V(y) + \fc{\ve{x-y}^2}{2\si^2}$ is $\pa{\rc{\si^2}+L}$-smooth,
    \begin{align*}
        \ve{\E_{p_{x, \si^2}}y - y^*} &\le \pa{\rc{\si^2}-L}^{-1/2} d^{1/2}\pa{5+\ln \pa{\pa{\rc{\si^2}-L}^{-1}\pa{\rc{\si^2}+L}}}^{1/2}\\
        &\le \pa{\rc{\si^2}-L}^{-1/2}d^{1/2} \pa{5+\ln \fc{1+L\si^2}{1-L\si^2}}^{1/2}\\
        &\le\sqrt 7 \pa{\rc{\si^2}-L}^{-1/2}d^{1/2},
    \end{align*}
    where the last inequality uses $\si^2\le \rc{2L}$. 
    \item For the third term, we note that the mode satisfies
    \begin{align*}
        \nb V(y^*) + \fc{y^*-x}{\si^2} & = 0\\
        %\implies 
        -\fc{y^*-x}{\si^2} &= \nb V(y^*) = (\nb V(y^*)-\nb V(x)) + \nb V(x)\\
        \rc{\si^2}\ve{y^*-x} &\le \ve{\nb V(x)}+L\ve{y^*-x}\\
        \ve{y^*-x} &\le \pa{\rc{\si^2}-L}^{-1} \ve{\nb V(x)}.
    \end{align*}
\end{enumerate}
Putting these together and using $\pa{\rc{\si^2}-L}^{-1}\le 2$, we obtain
\begin{align*}
    \ve{\nb \ln \fc{p(x)}{p*\ph_{\si^2}(x)}}
    &\le (\sqrt 7+1)L\pa{\rc{\si^2}-L}^{-1/2}d^{1/2} + L \pa{\rc{\si^2}-L}^{-1} \ve{\nb V(x)}\\
    &\le 
    6L\si d^{1/2} + 2L\si^2 \ve{\nb V(x)}. \qedhere
\end{align*} 
\end{proof}

%\hlnote{Added factors of 2. TODO: propagate this.}
\begin{lem}\label{l:perturb_DDPM}
With the setting in Lemma~\ref{l:perturb_SMLD} and the notation $p_\alpha(x) = \alpha^d p(\alpha x)$ for $\alpha\geq 1$, we have that for $L\leq \fc{1}{2\al^2\sigma^2}$,
\[
\ve{\nb \ln \fc{p(x)}{(p_\alpha *\ph_{\si^2})(x)}} \le 6\alpha^2 L\sigma d^{1/2} + (\alpha+2\alpha^3 L\sigma^2)(\alpha -1) L\ve{x} + (\alpha-1 + 2\alpha^3 L\sigma^2)\ve{\nb V(x)}.
\]
\end{lem}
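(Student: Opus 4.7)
The plan is to apply Lemma~\ref{l:perturb_SMLD} to the rescaled density $p_\alpha$ and then correct for the rescaling via a direct Lipschitz estimate on $\nb \ln (p/p_\alpha)$. First I would decompose
\[
\nb \ln \fc{p(x)}{(p_\alpha *\ph_{\si^2})(x)} = \nb \ln \fc{p(x)}{p_\alpha(x)} + \nb \ln \fc{p_\alpha(x)}{(p_\alpha *\ph_{\si^2})(x)},
\]
and bound each term separately by the triangle inequality.

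For the first (rescaling) term, $p_\alpha(x) = \alpha^d p(\alpha x)$ has $-\ln p_\alpha(x) = V(\alpha x) - d\ln\alpha$, so $\nb \ln (p/p_\alpha)(x) = \alpha \nb V(\alpha x) - \nb V(x)$. I would split this as $(\nb V(\alpha x) - \nb V(x)) + (\alpha - 1)\nb V(\alpha x)$ and use $L$-smoothness of $V$ twice: once to bound $\ve{\nb V(\alpha x) - \nb V(x)} \le L(\alpha-1)\ve{x}$, and once inside the second summand via $\ve{\nb V(\alpha x)} \le \ve{\nb V(x)} + L(\alpha-1)\ve{x}$. This yields
\[
\ve{\nb \ln (p/p_\alpha)(x)} \le \alpha L(\alpha-1)\ve{x} + (\alpha-1)\ve{\nb V(x)}.
\]

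For the second (convolution) term, I would apply Lemma~\ref{l:perturb_SMLD} to the density $p_\alpha \propto e^{-V_\alpha}$ where $V_\alpha(x) = V(\alpha x) - d\ln\alpha$, which is $\alpha^2 L$-smooth. The hypothesis $\alpha^2 L \le \rc{2\si^2}$ is exactly the given condition $L \le \rc{2\al^2\si^2}$, so the lemma gives
\[
\ve{\nb \ln \fc{p_\alpha(x)}{(p_\alpha*\ph_{\si^2})(x)}} \le 6\alpha^2 L\si d^{1/2} + 2\alpha^2 L \si^2 \ve{\nb V_\alpha(x)}.
\]
Since $\nb V_\alpha(x) = \alpha \nb V(\alpha x)$, the factor $\ve{\nb V_\alpha(x)}$ is at most $\alpha (\ve{\nb V(x)} + L(\alpha-1)\ve{x})$ by the same smoothness estimate used above.

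Combining the two bounds and collecting the coefficients of $d^{1/2}$, $\ve{x}$, and $\ve{\nb V(x)}$ gives exactly
\[
6\alpha^2 L\si d^{1/2} + L(\alpha-1)(\alpha + 2\alpha^3 L\si^2)\ve{x} + (\alpha-1 + 2\alpha^3 L\si^2)\ve{\nb V(x)},
\]
matching the target. The main (mild) obstacle is just verifying that the smoothness of $V_\alpha$ is $\alpha^2 L$ and keeping track of the rescaling so the hypothesis of Lemma~\ref{l:perturb_SMLD} holds; beyond that, the proof is a careful but routine bookkeeping of constants.
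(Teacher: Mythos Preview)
Your proposal is correct and follows essentially the same approach as the paper: decompose via the triangle inequality into a rescaling term $\nb\ln(p/p_\alpha)$ and a convolution term $\nb\ln(p_\alpha/(p_\alpha*\ph_{\si^2}))$, bound the first by $L$-smoothness (your split $(\nb V(\alpha x)-\nb V(x))+(\alpha-1)\nb V(\alpha x)$ differs cosmetically from the paper's $\alpha(\nb V(\alpha x)-\nb V(x))+(\alpha-1)\nb V(x)$ but yields the identical bound), and handle the second by invoking Lemma~\ref{l:perturb_SMLD} with smoothness $\alpha^2 L$ and then re-expressing $\nb V_\alpha(x)=\alpha\nb V(\alpha x)$ in terms of $\nb V(x)$ and $\ve{x}$.
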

\begin{proof}
Note $p_\alpha(x)$ is also a probability density in $\mathbb R^d$. By the triangle inequality,
\[
\ve{\nb \ln \fc{p(x)}{(p_\alpha *\ph_{\si^2})(x)}} \leq \ve{\nb \ln \fc{p(x)}{p_\alpha (x)}} + \ve{\nb \ln \fc{p_\alpha (x)}{(p_\alpha *\ph_{\si^2})(x)}}.
\]
Without loss of generality, we can assume that $p(x) = e^{-V(x)}$; then $p_\alpha(x) = \alpha^d e^{-V(\alpha x)}$. Hence
\begin{align*}
   \ve{\nb \ln \fc{p(x)}{p_\alpha (x)}} & = \ve{\alpha\nb V(\alpha x) - \nb V(x)} \\
   & \leq \ve{\alpha\nb V(\alpha x) - \alpha \nb V(x)} + \ve{\alpha \nb V(x) -  \nb V(x)}\\
   & \leq \alpha(\alpha-1) L\ve{x} + (\alpha-1)\ve{\nb V(x)}.
\end{align*}
Since $\alpha\nb V(\alpha x)$ is $\alpha^2L$-Lipschitz, by Lemma~\ref{l:perturb_SMLD}, %there is a constant $C$ such that
\begin{align*}
    \ve{\nb \ln \fc{p_\alpha (x)}{(p_\alpha *\ph_{\si^2})(x)}}\leq 6\alpha^2 L\sigma d^{1/2} + 2\alpha^3L\sigma^2\ve{\nb V(\alpha x)}.
\end{align*}
By the Lipschitz assumption,
\begin{align*}
    \ve{\nb V(\alpha x)} & \leq \ve{\nb V(\alpha x) - \nb V(x)} + \ve{\nb V(x)} \leq (\alpha - 1)L\ve{x} + \ve{\nb V(x)}.
\end{align*}
The result follows from combining the three inequalities above.
\end{proof}

%\hlnote{Simplified bounds.}
\begin{lem}\label{l:perturb_error}
In the setting of Lemma~\ref{d_chi2}, we have for $t\in[kh, (k+1)h]$,
\begin{align*}
    &\E\ba{\psi_t(z_t)\ve{\nb\ln p_{kh}(z_t) - \nb\ln p_{t}(z_t)}^2}\\
    &\leq %(t-kh)g(T-kh)^2
    \Gkht 
    \cdot\ba{\fc{C_{t,L}}{\chi^2(q_t||p_t) + 1}
    \Gkht 
    \sE_{p_t}\pf{q_t}{p_t} + C_{d,L}},
\end{align*}
where
\begin{align}\label{constant:ctl}
\CtLeq 
\end{align}
and 
\begin{align}\label{constant:cdl}
\CdLeq.
\end{align}
\end{lem}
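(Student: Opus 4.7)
The key observation is that $p_{kh}$ is a mild perturbation of $p_t$ for $t\in[kh,(k+1)h]$. Using the Markov property of the forward SDE between (forward-)times $T-t$ and $T-kh$, in \hyperref[i:SMLD]{SMLD} we have $p_{kh}=p_t*\ph_{\Gkht}$, while in \hyperref[i:DDPM]{DDPM} the contracting drift also produces a rescaling, so $p_{kh}=(p_t)_{\alpha}*\ph_{1-e^{-\Gkht}}$ with $\alpha=e^{\Gkht/2}$ (in the notation of Lemma~\ref{l:perturb_DDPM}). The plan is to apply Lemma~\ref{l:perturb_SMLD} (resp.\ Lemma~\ref{l:perturb_DDPM}) with $p\leftarrow p_t$ to obtain a pointwise bound on $\ve{\nb\ln p_{kh}(x)-\nb\ln p_t(x)}$, and then square and integrate against the tilted measure $\psi_t q_t$.

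\textbf{Reduction to change-of-measure estimates.}
After squaring (using $(a+b+c)^2\le 3(a^2+b^2+c^2)$), the pointwise bound splits into three pieces: a dimensional piece of size $O(L^2\,\Gkht\, d)$; a $\ve{\nb\ln p_t(x)}^2$ piece of size $O(L^2\,\Gkht^2)$; and (in DDPM only) a $\ve{x}^2$ piece of size $O(L^2\,\Gkht^2)$, using the standing smallness of $\Gkht$ to replace $\sigma^2=1-e^{-\Gkht}$ by $\Gkht$ and the prefactors $\alpha,\alpha^2$ by absolute constants. Taking expectation against $\psi_tq_t$, the $\ve{\nb\ln p_t}^2$ term is controlled by Corollary~\ref{l:DV_nb_ln_p}, yielding a Fisher-information contribution proportional to $\sE_{p_t}(q_t/p_t)/(\chi^2(q_t||p_t)+1)$ plus the constant $2dL$; the $\ve{x}^2$ term in DDPM is controlled by Lemma~\ref{second_moment}, yielding an additional Fisher-information contribution with prefactor $C_t^2$ plus $\tfrac12\E_{p_t}\ve{x}^2+\tfrac12 C_t$. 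Because the pointwise coefficients of the $\ve{\nb\ln p_t}^2$ and $\ve{x}^2$ pieces carry $\Gkht^2$ while the dimensional piece carries only $\Gkht$, collecting terms produces exactly the claimed factoring $\Gkht\bigl[\tfrac{C_{t,L}}{\chi^2(q_t||p_t)+1}\Gkht\sE_{p_t}(q_t/p_t)+C_{d,L}\bigr]$.

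\textbf{Constants and the main obstacle.}
In SMLD, the Fisher-info coefficient comes out to $8L^2\cdot 4=32L^2$, and the dimensional constant comes from $72L^2d$ plus a lower-order $16L^3\Gkht d$ term absorbed using the $h$-smallness in Lemma~\ref{d_chi2}, giving $76L^2d\le C_{d,L}$. In DDPM the Fisher-info coefficient acquires an additional $C_t^2$ piece from Lemma~\ref{second_moment} (producing the $88C_t^2+400$ shape), while the dimensional constant inherits an $\E_{p_t}\ve{x}^2$ contribution (bounded via Lemma~\ref{2ed_moment}) and a $C_t$ contribution, giving the stated $6+94L^2d$ form after bookkeeping. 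The main obstacle is precisely this DDPM bookkeeping: one must Taylor-expand $\alpha=e^{\Gkht/2}$ and $\sigma^2=1-e^{-\Gkht}$ so that $(\alpha-1)$ and $\alpha^3 L\sigma^2$ each deliver a clean factor of $\Gkht$ while $\alpha,\alpha^2$ collapse to universal constants, and one must verify that the step-size hypothesis of Lemma~\ref{d_chi2} (combined with $L\ge 1$) implies the smallness conditions $L\le 1/(2\sigma^2)$ and $L\le 1/(2\alpha^2\sigma^2)$ required to invoke Lemmas~\ref{l:perturb_SMLD} and~\ref{l:perturb_DDPM}.
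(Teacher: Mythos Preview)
Your proposal is correct and follows essentially the same route as the paper: identify $p_{kh}$ as a Gaussian-noise (and, for DDPM, rescaling) perturbation of $p_t$, apply Lemma~\ref{l:perturb_SMLD}/\ref{l:perturb_DDPM} pointwise, square, and control the resulting $\ve{\nb\ln p_t}^2$ and $\ve{x}^2$ moments under $\psi_tq_t$ via Corollary~\ref{l:DV_nb_ln_p} and Lemma~\ref{second_moment}. One small correction to your DDPM bookkeeping: the leftover $\E_{p_t}\ve{x}^2$ and $C_t$ contributions are not bounded via Lemma~\ref{2ed_moment} but are absorbed directly using the step-size hypothesis of Lemma~\ref{d_chi2} (which has $\E_{p_t}\ve{x}^2+C_t$ in the denominator, so $\Gkht(\E_{p_t}\ve{x}^2+C_t)=O(1)$); otherwise $C_{d,L}$ would pick up an unwanted dependence on the data second moment rather than being the stated $6+94L^2d$.
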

\begin{proof}
In both SMLD and DDPM models, we have the following relationship for $t\in[kh, (k+1)h]$:
\begin{align*}
    p_{kh} = (p_t)_\al* \ph_{\sigma^2}.
\end{align*}
where $p_\al(x)=\al^d p(\al x)$.
In SMLD, $\alpha=1$ and $\sigma^2 =\int_{kh}^t g(T-s)^2\, ds$, while in DDPM, $\alpha = e^{\fc12\int_{kh}^t g(T-s)^2\,ds}$ and $\sigma^2 = 1 - e^{-\int_{kh}^t g(T-s)^2\,ds}$. %In both cases, if $h\leq \fc{1}{2g(T)^2}$, then $\sigma^2\leq \fc12$.
Now for SMLD, 
%by Lemma~\ref{l:perturb_SMLD}, 
\begin{align*}
    &\E\ba{\psi_t(z_t)\ve{\nb\ln p_{kh}(z_t) - \nb\ln p_{t}(z_t)}^2}\\
    & \leq 72L^2\sigma^2d + 8L^2\sigma^4 \E\ba{\psi_t(z_t)\ve{\nb\ln p_t(z_t)}^2} & \text{by Lemma~\ref{l:perturb_SMLD}}\\
    & \leq 72\sigma^2L^2d +
    \fc{32L^2\sigma^4}{\chi^2(q_t||p_t) + 1}\cdot
    \sE_{p_t}\pf{q_t}{p_t} +
    16\sigma^4L^3d& \text{by Corollary~\ref{l:DV_nb_ln_p}}\\
    & \leq
    %(t-kh)g(T-kh)^2
    % \cdot\ba{ \fc{4L^2}{\chi^2(q_t||p_t) + 1}
    % \cdot\sE_{p_t}\pf{q_t}{p_t} + 2dL^2(L +C^2)}.
    \Gkht^2 
    \pa{\fc{32L^2}{\chi^2(q_t||p_t)+1} + 16 L^3d}
    + \Gkht \cdot 72L^2d\\
    &\le 
    \Gkht^2 
    \fc{32L^2}{\chi^2(q_t||p_t)+1}
    + \Gkht \cdot 76L^2d,
\end{align*}
% where the second inequality is due to Corollary~\ref{l:DV_nb_ln_p} and for the last inequality we use the fact that $g(t)$ is increasing. 
where in the last inequality we use the fact that $g$ is increasing, so that for $h\le\rc{4Lg(T-kh)^2}$,
\[
\Gkht \smf = \int_{kh}^t g(T-s)^2\,ds\cdot  \smf \le
h \cdot g(T-kh)^2 \cdot \smf \le \rc 4.
\]
Recall that to use Lemma~\ref{l:perturb_SMLD}, it suffices that $L\leq \rc{2\al^2\sigma^2}$, and so it suffices that $h\leq\fc{1}{4Lg(T-kh)^2}$ in SMLD.
%2L

For DDPM, 
observe that for $h\leq\fc{1}{4g(T-kh)^2}$,
\begin{align*}
    \alpha &\leq 1+\int_{kh}^t g(T-s)^2ds \leq 1+(t-kh)g(T-kh)^2\leq 1+\rc 4\\
    \sigma^2 &= 1 - e^{-\int_{kh}^t g(T-s)^2ds} \leq \int_{kh}^t g(T-s)^2ds \leq (t-kh)g(T-kh)^2 \le  \fc14.
\end{align*}
By Lemma~\ref{l:perturb_DDPM}, using the assumption that $L\geq 1$, we obtain
\begin{align*}
    &\ \ \ \E\ba{\psi_t(z_t)\ve{\nb\ln p_{kh}(z_t) - \nb\ln p_{t}(z_t)}^2} \\
    & \leq 72\alpha^4L^2\sigma^2d + 4(\alpha + 2\alpha^3L\sigma^2)^2(\al-1)^2L^2\E\ba{\psi(z_t)\ve{z_t}^2} \\
    &\ \ \ + 4(\alpha - 1 + 2\alpha^3L\sigma^2)^2\E\ba{\psi_t(z_t)\ve{\nb\ln p_t(z_t)}^2}\\
    &\le 
    72\alpha^4L^2\sigma^2d + 44L^2 \Gkht^2 \E\ba{\psi(z_t)\ve{z_t}^2}\\
    &\quad + 
    100L^2 \Gkht^2\E\ba{\psi_t(z_t)\ve{\nb\ln p_t(z_t)}^2}
    \\
    & \le 44 L^2 d \Gkht\\
    &\quad + 44 L^2 \ba{\fc{2C_t^2}{\chi^2(q_t||p_t)+1}\sE_{p_t}\pf{q_t}{p_t} + \rc 2 \E_{p_t}\ve{x}^2 + \rc 2 C_t} \Gkht^2\\
    &\quad + 100L^2 \ba{\fc{4}{\chi^2(q_t||p_t)+1}\sE_{p_t}\pf{q_t}{p_t} + 2dL} \Gkht^2\\
    & \le 
    L^2 \Gkht
    \Bigg[
        \Gkht\pf{88C_t^2+400}{\chi^2(q_t||p_t)+1} \sE_{p_t}\pf{q_t}{p_t}\\ 
        &\quad 
        + 44d + 
        \Gkht
        \pa{
        22(\E_{p_t}\ve{x}^2 + C_t) + 200Ld
        }
    \Bigg]  \\
    &\le 
    \Gkht\ba{
    \Gkht \fc{88C_t^2+400}{\chi^2(q_t||p_t)+1} \sE_{p_t}\pf{q_t}{p_t} + 6+94L^2d
    },
    % & \leq 2C^2\alpha^4L^2\sigma^2d + 16L^4(t-kh)g(T-kh)^2\E\ba{\psi(z_t)\ve{z_t}^2}\\
    % &\ \ \ + 18L(t-kh)g(T-kh)^2\E\ba{\psi_t(z_t)\ve{\nb\ln p_t(z_t)}^2}\\
    % & \leq 11dC^2L^2(t-kh)g(T-kh)^2 \\
    % & \ \ \ + (t-kh)g(T-kh)^2\cdot\ba{\fc{32C_t^2 L^4}{\chi^2(q_t||p_t)+ 1}\cdot \sE_{p_t}\pf{q_t}{p_t} + 8L^4(\E_{p_t}\ve{x}^2 + C_t)}\\
    % &\ \ \ + (t-kh)g(T-kh)^2\ba{\fc{72L}{\chi^2(q_t||p_t)+ 1}\cdot \sE_{p_t}\pf{q_t}{p_t} + 18dL^2}\\
    % & =(t-kh)g(T-kh)^2\ba{\fc{8L(4C_t^2L^3 + 9)}{\chi^2(q_t||p_t)+ 1}\cdot \sE_{p_t}\pf{q_t}{p_t} + \ba{(11C^2 + 18)d + 8L^2(\E_{p_t}\ve{x}^2 + C_t)}\cdot L^2},
\end{align*}
where % the last inequality is based on
we used 
Lemma~\ref{second_moment} and Corollary~\ref{l:DV_nb_ln_p}.
Here, we use the assumption that $h\le \rc{4g(T-kh)^2(\E_{p_t}\ve{x}^2 + C_t)}$.
%Likewise we require that $h\leq \fc{1}{Lg(T-kh)^2}$ in DDPM.
\end{proof}

\subsection{Auxiliary Lemmas}
\label{ss:aux}
In this section, we continue with bounding errors in the form of $\E_{\psi_t q_t}\ve{u(x)}^2$. However, we only decompose them into errors which we have already bounded in the previous two sections. The following two lemmas will be directly applied in the proof of Lemma~\ref{l:error_A} and Lemma~\ref{l:error_B}.

\begin{lem}\label{second_moment_s_theta}
With the setting of Lemma~\ref{d_chi2}, we have the following bound of the second moment of estimated score function with respect to $\psi_t q_t$:
\begin{align*}
    \E\ba{\psi_t(z_t)\ve{\score(z_t, T-kh)}^2} & \leq \fc{4C_{t,L}\Gkht + 8}{\chi^2(q_t||p_t)+1}\cdot\sE_{p_t}\pf{q_t}{p_t} + 4(\Mkh^2 + C_{d,L} +dL),
\end{align*}
where $C_{t, L}$ and $C_{d,L}$ are constants defined in Lemma~\ref{l:perturb_error}.
\end{lem}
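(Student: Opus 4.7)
The plan is to decompose the estimated score into pieces whose contributions have already been controlled in the preceding three lemmas (Lemma~\ref{l:perturb_error} and Corollary~\ref{l:DV_nb_ln_p}), and then apply the $L^\infty$ hypothesis~\eqref{e:Mkh} on the score estimation error.

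Concretely, I would write
\begin{align*}
\score(z_t,T-kh) &= \bigl[\score(z_t,T-kh)-\nb\ln p_{kh}(z_t)\bigr] + \bigl[\nb\ln p_{kh}(z_t) - \nb\ln p_t(z_t)\bigr] + \nb\ln p_t(z_t).
\end{align*}
Using $\ve{a+b}^2\le 2\ve a^2+2\ve b^2$ twice (first peeling off $\nb\ln p_t$, then peeling off the score-error term from the perturbation term), I obtain
\begin{align*}
\ve{\score(z_t,T-kh)}^2
&\le 4\ve{\score(z_t,T-kh)-\nb\ln p_{kh}(z_t)}^2 \\
&\quad + 4\ve{\nb\ln p_{kh}(z_t) - \nb\ln p_t(z_t)}^2 + 2\ve{\nb\ln p_t(z_t)}^2.
\end{align*}

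Now I multiply by $\psi_t(z_t)$ and take expectation. For the first term, the hypothesis $\ve{\score(\cdot,T-kh)-\nb\ln p_{kh}}_\infty\le \Mkh$ gives $4\Mkh^2$ immediately (using $\E\psi_t(z_t)=1$). For the second term, Lemma~\ref{l:perturb_error} yields
\[
4\E\bigl[\psi_t(z_t)\ve{\nb\ln p_{kh}(z_t)-\nb\ln p_t(z_t)}^2\bigr]
\le \frac{4C_{t,L}\,\Gkht^2}{\chi^2(q_t\|p_t)+1}\,\sE_{p_t}\!\pf{q_t}{p_t} + 4C_{d,L}\,\Gkht.
\]
For the third term, Corollary~\ref{l:DV_nb_ln_p} gives
\[
2\E\bigl[\psi_t(z_t)\ve{\nb\ln p_t(z_t)}^2\bigr]
\le \frac{8}{\chi^2(q_t\|p_t)+1}\,\sE_{p_t}\!\pf{q_t}{p_t}+4dL.
\]

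Finally, under the step-size restriction carried through this section (which is what forces $\Gkht\le 1$, exactly as invoked in the proof of Lemma~\ref{l:perturb_error}), I bound $\Gkht^2\le\Gkht$ in the first coefficient and $\Gkht\le 1$ in the $C_{d,L}$ term, then collect the two $\sE$-terms over the common denominator $\chi^2(q_t\|p_t)+1$ to reach
\[
\frac{4C_{t,L}\Gkht+8}{\chi^2(q_t\|p_t)+1}\,\sE_{p_t}\!\pf{q_t}{p_t}+4(\Mkh^2+C_{d,L}+dL),
\]
which is exactly the claimed bound. There is no genuine obstacle here: the only thing to be careful about is the bookkeeping of the constants $2$ vs $4$ in the Young-type splitting (asymmetric splitting, peeling off $\nb\ln p_t$ first, is needed to land on $4\Mkh^2$ rather than $8\Mkh^2$), and the implicit use of $\Gkht\le 1$ under the running step-size assumption.
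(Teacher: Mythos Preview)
Your proposal is correct and essentially identical to the paper's proof: the same three-term decomposition, the same asymmetric application of $\ve{a+b}^2\le 2\ve{a}^2+2\ve{b}^2$ (peeling off $\nb\ln p_t$ first to get the $4,4,2$ constants), the same appeals to the $L^\infty$ bound~\eqref{e:Mkh}, Lemma~\ref{l:perturb_error}, and Corollary~\ref{l:DV_nb_ln_p}, and the same use of $\Gkht\le 1$ from the step-size restriction $h\le 1/g(T-kh)^2$.
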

\begin{proof}
Note that by the triangle inequality,
\begin{align*}
    \ve{\score(x, T-kh)} & \leq \ve{\score(x, T-kh) - \nb\ln \Tilde{p}_{T-kh}(x)} \\ & + \ve{\nb\ln \Tilde{p}_{T-kh}(x) - \nb\ln \Tilde{p}_{T-t}(x)} + \ve{\nb\ln \Tilde{p}_{T-t}(x)},
\end{align*}
and hence,
\begin{align*}
   \ve{\score(x, T-kh)}^2 & \leq 4\ve{\score(x, T-kh) - \nb\ln \Tilde{p}_{T-kh}(x)}^2 
   \\ & + 4\ve{\nb\ln \Tilde{p}_{T-kh}(x) - \nb\ln \Tilde{p}_{T-t}(x)}^2 + 2\ve{\nb\ln \Tilde{p}_{T-t}(x)}^2.
\end{align*}
Recall that we need to bound this second moment of estimated score function with respect to $\psi_t q_T$. For the first term, as $\ve{\score(x. T-kh) -\nb\ln p_{kh}(x)} $ is $\Mkh$-bounded, we have trivial bound that 
\begin{align*}
    \E_{\psi_t q_t}\ve{\score(x, T-kh) - \nb\ln \Tilde{p}_{T-kh}(x)}^2\leq \Mkh^2.
\end{align*}
By Lemma~\ref{l:perturb_error}, the second term is bounded by
\begin{align*}
    &\E_{\psi_t q_t}\ba{\ve{\nb\ln p_{kh}(z_t) - \nb\ln p_{t}(z_t)}^2}\\
    &\leq %(t-kh)g(T-kh)^2
    \Gkht
    \cdot\ba{\fc{C_{t,L}}{\chi^2(q_t||p_t) + 1}
    \Gkht
    \sE_{p_t}\pf{q_t}{p_t} + C_{d,L}}
\end{align*}
for constant $C_{t,L}$ and $C_{d, L}$ defined in~\eqref{constant:ctl} and~\eqref{constant:cdl} respectively. The last term is bounded in Corollary~\ref{l:DV_nb_ln_p} by
\begin{align*}
    \E\ba{\psi_t(z_t)\ve{\nb \ln p_t(z_t)}^2} &\leq
    \fc{4}{\chi^2(q_t||p_t)+1} \sE_{p_t}\pf{q_t}{p_t}+2dL.
\end{align*}
Combining these three inequalities, we obtain that for $h\leq \fc{1}{g(T-kh)^2}$,
\begin{align*}
    &\E\ba{\psi_t(z_t)\ve{\score(z_t, T-kh)}^2} \\
    & \leq \fc{4C_{t,L} + 8}{\chi^2(q_t||p_t)+1}
    \Gkht
    \sE_{p_t}\pf{q_t}{p_t} + 4(\Mkh^2 + C_{d,L} +dL). \qedhere
\end{align*}
\end{proof}

%\yt{Now we are ready to bound $\E\ba{\psi_t(z_t)\ve{z_t-z_{kh}}^2}$:}

%%%%%%%%%%%
Now we bound $\E\ba{\psi_t(z_t)\ve{z_t-z_{kh}}^2}$.
\begin{lem}\label{zt_zkh_difference}
In the setting of Lemma~\ref{d_chi2}, if 
\[h\le 
\rc{g(T-kh)^2(8L^2+20L+3L_s+10C_t + \E_{p_t}\ve{x}^2)},\] then
\begin{align*}
    \E\ba{\psi_t(z_t)\ve{z_t-z_{kh}}^2} & \leq \fc92 %(t-kh)g(T-kh)^2
    \Gkht
    \ba{\fc{\Tilde{R}_{t}}{\chi^2(q_t||p_t)+1}\cdot\sE_{p_t}\pf{q_t}{p_t} + R_{t, kh} },
\end{align*}
where $\Tilde{R}_t$ and $\Rtkh $ are defined in~\eqref{R_t} and~\eqref{R_t_kh} respectively.
\end{lem}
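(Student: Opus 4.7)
The strategy is to expand $z_t - z_{kh}$ using the SDE definition~\eqref{z_t} and then bound each resulting piece after reducing to quantities we already control. First I would write
\[
z_t - z_{kh} = -\int_{kh}^t f(z_{kh}, T-u)\,du + \Gkht\, s(z_{kh}, T-kh) + \int_{kh}^t g(T-u)\,dw_u,
\]
and apply $\ve{a+b+c}^2 \le 3(\ve{a}^2+\ve{b}^2+\ve{c}^2)$. For SMLD the first term vanishes; for DDPM it equals $\tfrac{1}{2}\Gkht\, z_{kh}$. So in both cases we obtain a bound of the form $\ve{z_t-z_{kh}}^2 \le C\Gkht^2\ve{z_{kh}}^2 + 3\Gkht^2\ve{s(z_{kh},T-kh)}^2 + 3\ve{\int_{kh}^t g(T-u)\,dw_u}^2$.

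Next I would recenter everything at $z_t$ rather than $z_{kh}$, using $\ve{z_{kh}}^2 \le 2\ve{z_t}^2 + 2\ve{z_t-z_{kh}}^2$ together with the $\sms$-Lipschitz bound
$\ve{s(z_{kh},T-kh)}^2 \le 2\ve{s(z_t,T-kh)}^2 + 2\sms^2\ve{z_t-z_{kh}}^2$.
Taking expectation against $\psi_t(z_t)$ yields a self-referential inequality
\[
D \le \alpha(h)\,D + \beta(h),
\]
where $D := \E[\psi_t(z_t)\ve{z_t-z_{kh}}^2]$, $\alpha(h) = O(\Gkht^2(1+\sms^2))$, and $\beta(h)$ collects three expectations: $\Gkht^2\E[\psi_t(z_t)\ve{z_t}^2]$, $\Gkht^2\E[\psi_t(z_t)\ve{s(z_t,T-kh)}^2]$, and the diffusion-term moment $\E[\psi_t(z_t)\ve{\int_{kh}^t g(T-u)dw_u}^2]$. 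The hypothesis on $h$ in the statement is precisely strong enough (via the bound on $g(T-kh)^2(8L^2+20L+3\sms+10C_t+\E_{p_t}\ve{x}^2)$) to force $\alpha(h) \le \tfrac{1}{2}$, so rearranging gives $D \le 2\beta(h)$.

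Finally, I would substitute the three bounds already proved: Lemma~\ref{second_moment} for $\E[\psi_t(z_t)\ve{z_t}^2]$, Lemma~\ref{second_moment_s_theta} for $\E[\psi_t(z_t)\ve{s(z_t,T-kh)}^2]$, and Lemma~\ref{second_moment_diffusion} for the stochastic integral. Each contributes a Fisher-information piece of the form $\frac{(\cdot)}{\chi^2(q_t\|p_t)+1}\sE_{p_t}(q_t/p_t)$ and a dimension/moment residual. Pulling out a single factor of $\Gkht$ (using $\Gkht \le 1$, implied by the $h$-hypothesis, to convert the leftover $\Gkht^2$ into $\Gkht$), then collecting terms gives the stated form with $\tilde R_t = 9(C_t+1)$ from the $C_t$-dependent coefficients and $R_d = 300d+12$ from the $d$- and $\E_{p_t}\ve{x}^2$-dependent coefficients.

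The main obstacle is constant bookkeeping: verifying that the $h$-threshold in the hypothesis is simultaneously tight enough to make $\alpha(h) \le 1/2$ and to satisfy the separate $h$-restrictions required by Lemmas~\ref{second_moment_s_theta} and (indirectly) \ref{l:perturb_error}, while still yielding the exact constants $\tilde R_t$ and $R_d$ claimed. A secondary difficulty is handling SMLD and DDPM uniformly, since the DDPM case contributes an extra $z_{kh}$-dependent drift whose $z_t$-recentered form feeds back into $D$ on the same footing as the score term; this is what inflates $\tilde R_t$ to include the extra $C_t^2$ coming from Lemma~\ref{second_moment}.
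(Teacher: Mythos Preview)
Your proposal is correct and follows essentially the same route as the paper: expand $z_t-z_{kh}$ via the SDE, recenter at $z_t$ using the Lipschitz property of $s$, obtain a self-referential inequality, absorb the self-referential term using the $h$-restriction, and finish with Lemmas~\ref{second_moment}, \ref{second_moment_s_theta}, \ref{second_moment_diffusion}.

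The only structural difference is the order of squaring and recentering. The paper works at the level of norms first: it bounds $\ve{z_t-z_{kh}}$ by a sum, recenters to $z_t$, obtains the self-referential inequality $\ve{z_t-z_{kh}} \le (\sms+\tfrac12)g(T-kh)^2 h\,\ve{z_t-z_{kh}} + (\text{rest})$, uses the $h$-restriction to make the coefficient $\le \tfrac13$, and only then squares via $(a+b)^2\le 2a^2+2b^2$, yielding the prefactor $\tfrac92$. You instead square first with $\ve{a+b+c}^2\le 3(\ve a^2+\ve b^2+\ve c^2)$ and recenter afterward, which makes your self-referential coefficient $\alpha(h)=O(\Gkht^2(1+\sms^2))$ rather than the paper's $O(\Gkht\,\sms)$. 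Both are controlled by the stated hypothesis on $h$, so either ordering works; the paper's ordering is what produces the exact numerical constants $\tilde R_t=9(C_t+1)$ and $R_d=300d+12$ displayed in~\eqref{R_t}--\eqref{R_t_kh}, whereas your ordering would give slightly different (but equally usable) constants. Since the lemma is stated with those specific values, you would need to match the paper's ordering to hit them exactly.
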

\begin{proof}
Note that 
\begin{align*}
    &\ve{z_t-z_{kh}}\\
    & =\ve{\Gkht \score(z_{kh}, T-kh) - \int_{kh}^t f(z_{kh}, T-s)ds + \int_{kh}^t g(T-s)dw_s}\\
    & \leq
    \Gkht\ve{\score(z_{kh}, T-kh)} +  \fc12\ve{z_{kh}\int_{kh}^t g(T-s)^2ds} + \ve{\int_{kh}^t g(T-s)dw_s}\\
    & \leq \Gkht\ba{\ve{\score(z_{kh}, T-kh)} +\fc12  \ve{z_{kh}}} + \ve{\int_{kh}^t g(T-s)dw_s}\\
    & \leq \Gkht\ba{\ve{\score(z_t, T-kh)} + \sms\ve{z_t-z_{kh}} +\fc12 \ve{z_t} +\fc12 \ve{z_t - z_{kh}}} + \ve{\int_{kh}^t g(T-s)dw_s}\\
    & = \Gkht\ba{\ve{\score(z_t, T-kh)}  +\fc12 \ve{z_t}} + \bigg(\sms+\fc12\bigg) g(T-kh)^2\cdot h \ve{z_t-z_{kh}}+ \ve{\int_{kh}^t g(T-s)dw_s},
\end{align*}
where the next-to-last line is due to the fact that the estimated score function is $\sms$-Lipschitz. We also use the fact that $g(t)$ is an increasing function and hence $g(T-t)\leq g(T-kh)$ for any $t\in[kh, (k+1)h]$. Hence if $h\le  \fc{1}{3(\sms+1/2)g(T-kh)^2}$, then
\begin{align*}
    \ve{z_t-z_{kh}}\leq \fc32 %(t-kh)g(T-kh)^2
    \Gkht
    \ba{\ve{\score(z_t, T-kh)} + \fc12\ve{z_t}} + \fc32 \ve{\int_{kh}^t g(T-s)dw_s}.
\end{align*}
Therefore, by the fact that $(a+b)^2 \leq 2a^2 + 2b^2$ for any $a,b>0$,
\begin{align}\label{zt_zkh_diff}
   \ve{z_t-z_{kh}}^2 & \leq \fc92 %(t-kh)^2g(T-kh)^4
   \Gkht^2
   \ba{2\ve{\score(z_t, T-kh)}^2 + \fc12\ve{z_t}^2} + \fc92 \ve{\int_{kh}^t g(T-s)dw_s}^2. 
\end{align}
With the results of Lemma~\ref{second_moment_s_theta} and Lemma~\ref{second_moment}, we have
\begin{align*}
    &2\E\ba{\psi_t(z_t)\ve{\score(z_t, T-kh)}^2} + \fc12\E\ba{\psi_t(z_t)\ve{z_t}^2} \\
    & \leq \fc{ 8C_{t,L}\Gkht+C_t^2 +16}{\chi^2(q_t||p_t)+1}\cdot\sE_{p_t}\pf{q_t}{p_t} + 8(\Mkh^2 + C_{d,L} +dL) + \fc14 \E_{p_t}\ve{x}^2 + \fc14 C_t.
\end{align*}
Now plugging this and the result of Lemma~\ref{second_moment_diffusion} into~\eqref{zt_zkh_diff}, we get that
\begin{align*}
    &\E\ba{\psi_t(z_t)\ve{z_t-z_{kh}}^2}  \leq \fc{9}{2}\Gkht^2\cdot 8(\Mkh^2 + C_{d,L} +dL) \\ &\ \ \ + \fc{9}{2}\Gkht^2\cdot\ba{\fc{8C_{t,L}\Gkht+C_t^2 +16}{\chi^2(q_t||p_t)+1}\cdot\sE_{p_t}\pf{q_t}{p_t} + \fc14 \E_{p_t}\ve{x}^2 + \fc14 C_t}\\
    &\ \ \ + 9\Gkht \cdot\ba{\fc{8C_t}{\chi^2(q_t||p_t)+1}\cdot\sE_{p_t}\pf{q_t}{p_t} + d + 8\ln 2}.
\end{align*}
Hence, using the assumption on $h$, % if $h<\fc{1}{g(T-kh)^2}$, then
\begin{align*}
    \E\ba{\psi_t(z_t)\ve{z_t-z_{kh}}^2} & \leq \fc92 \Gkht\ba{\fc{%\Tilde{R}_{t}
    K_1
    }{\chi^2(q_t||p_t)+1}\cdot\sE_{p_t}\pf{q_t}{p_t} + 
    %R_{t, kh} 
    K_2},
\end{align*}
where 
\begin{align*}
    K_1 :&= 8C_{t,L} \Gkht^2
    +(C_t^2+16) \Gkht + 16C_t\\
    &\le 
    8(88C_t^2 + 400L^2) 
    \rc{400L+100C_t^2}
    + (C_t^2+16) \rc{20L+10C_t} + 8C_t\\
    &\le 8+C_t+1+8C_t = 9(C_t+1)
\end{align*}
%using the fact that $2C_t\le C_t^2+1$, 
and
\begin{align*}
    K_2 :&= \ba{\fc14(\E_{p_t}\ve{x}^2 +C_t)+ 8(\Mkh^2 + C_{d,L}+dL)}\Gkht  + 2d + 16\ln 2\\
    &\le \ba{\fc14(\E_{p_t}\ve{x}^2 +C_t)+ 8(\Mkh^2 + 256L^2d+dL)}
    \pa{\rc{\E_{p_t}\ve{x}^2 +C_t+8L^2}} 
    + 2d + 16\ln 2\\
    &\le \rc 4 + 300d + 16\ln 2
    \le 300d + 12.
\end{align*}
Hence the lemma holds by setting
%\fixme{$\Gkht$ factor multiplying $C_{t,L}$ (can be bounded trivially by 1, but better to have it cancel a factor of $L$ in $C_{t,L}$.}
\begin{align}
%     \Tilde{R}_{t} &= 8C_{t,L} + 5C_t^2 + 20,
\tildeRteq,\label{R_t}\\
%    R_{t, kh} &= \fc14(\E_{p_t}\ve{x}^2 +C_t)+ 8(\Mkh^2 + C_{d,L}+2dL)  + 2d + 16\ln 2
\Rtkheq \label{R_t_kh}.
\end{align}
%and we use the fact that $2C_t \leq C_t^2 + 1$.
\end{proof}

\subsection{Proof of Theorem~\ref{t:p}}
\label{ss:p-pf}

We state a more precise version of Theorem~\ref{t:p}. The structure of the proof is similar to that of Theorem~\ref{t:corrector-tv-chi2}.

\begin{thm}[Predictor with $L^2$-accurate score estimate, DDPM]
\label{t:p-precise}
Let $p_{\textup{data}}:\R^d\to \R$ be a  probability density satisfying Assumption~\ref{a:p} with $M_2=O(d)$, 
and let $\wt p_t$ be the distribution resulting from evolving the forward SDE according to
%~\hyperref[i:SMLD]{SMLD} or
\hyperref[i:DDPM]{DDPM} with $g\equiv 1$.
%SMLD or DDPM.
Suppose furthermore that %$\ve{\E_{y\sim p}y}=O(1)$, 
$\nb\ln \wt p_t$ is $L$-Lipschitz for every $t\ge 0$, and that each $s(\cdot, t)$ satisfies Assumption~\ref{a:score}. %Suppose $L=O(1)$. %Let $q_{t}$ be the distribution of $z_t$. 
Then if
\[
\ep = O\pf{\etv\echi^3}{(\CLS+d)\CLS^{5/2}(\smf\vee \sms)^2 %(C_T\ln (2K_\chi/\ep_\chi^2)\vee K_\chi)
(\ln(\CLS d) \vee \CLS \ln(1/\etv^2))
},
\]
running~\eqref{e:P} starting from $\ppr$ for time $T=\Te\pa{\ln(\CLS d) \vee \CLS \ln\prc{\etv}}$
%\pa{\CLS \ln \pf{2K_\chi}{\ep_\chi^2}}$
and step size $h=\Te\pf{\ep_\chi^2}{\CLS(\CLS+d)(\smf\vee \sms)^2}$ 
results in a distribution $q_T$ such that $q_T$ is $\etv$-far in TV distance from a distribution $\ol q_T$, where $\ol q_T$ satisfies $\chi^2(\ol q_T||p_{\textup{data}})\le\echi^2$. In particular, taking $\echi=\etv$, we have $\TV(q_T||\pdata)\le 2\etv$. 
%Here, the $O(\cdot)$ omits polynomial dependence on $\smf$ and $\sms$.
\end{thm}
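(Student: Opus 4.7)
The plan is to mirror the proof of Theorem~\ref{t:corrector-tv-chi2-precise}: reduce the $L^2$-accurate score hypothesis to an effective $L^\iy$ bound by excluding a bad set where the score error is large, apply the framework of Theorem~\ref{t:framework} to control the total-variation cost of this reduction, and then invoke Theorem~\ref{t:p-iy-simple} to bound $\chi^2(\ol q_{Nh} \| p_{Nh})$ for the modified process.

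Concretely, for each predictor step $k$ define the bad set
\[
B_k := \bc{x \in \R^d : \ve{s(x, T-kh) - \nb \ln \wt p_{T-kh}(x)} > \ep_1}
\]
for a threshold $\ep_1 > \ep$ to be chosen. Under $\wt p_{T-kh}$, Chebyshev's inequality gives $\Pj(B_k) \le (\ep/\ep_1)^2 =: \de$. Let $\ol z_t$ be the auxiliary process that agrees with the true predictor whenever $\ol z_{kh} \notin B_k$ but replaces $s(\ol z_{kh}, T-kh)$ by the exact score $\nb \ln \wt p_{T-kh}(\ol z_{kh})$ on $B_k$; by construction its effective score error is pointwise bounded by $\ep_1$, so Theorem~\ref{t:p-iy-simple} (provided $\ep_1 < 1/(128\CLS)$ and the stated constraint on $h$) yields
\[
\chi^2(\ol q_{Nh} \| p_{Nh}) \le e^{-Nh/(16\CLS)} \chi^2(q_0 \| p_0) + O\pa{\CLS\pa{\ep_1^2 + (\sms^2 + \smf^2 d) h}}.
\]
Using the warm-start bound $\chi^2(\ppr \| p_0) \le e^{-T/2} \CLS d$ supplied by Theorem~\ref{t:p-iy-simple}, I would take $T = \Te(\ln(\CLS d) \vee \CLS \ln(1/\echi^2))$, $h = \Te(\echi^2 / (\CLS (\CLS+d)(\smf \vee \sms)^2))$, and $\ep_1 = \Te(\echi/\sqrt{\CLS})$ so that each of the three terms on the right is at most $\echi^2/3$, giving $\chi^2(\ol q_{Nh} \| p_{Nh}) \le \echi^2$.

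It remains to bound $\TV(q_{Nh}, \ol q_{Nh})$. Applying Theorem~\ref{t:framework} with $\de_k = \de$ and $D_k^2 = \chi^2(\ol q_{kh} \| p_{kh})$, Cauchy-Schwarz yields
\[
\TV(q_{Nh}, \ol q_{Nh}) \le \sumz k{N-1} (1 + \chi^2(\ol q_{kh} \| p_{kh}))^{1/2} \de^{1/2}.
\]
The exponential decay established above makes $\chi^2(\ol q_{kh} \| p_{kh}) - O(\echi^2)$ geometrically small after the initial phase, so the sum telescopes to $O\pa{\pa{\sqrt{\CLS d}\cdot \CLS/h + N} \ep/\ep_1}$. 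Requiring this to be at most $\etv$ and substituting $\ep_1, h, T$ produces the advertised upper bound on $\ep$; taking $\echi = \etv$ then gives $\TV(q_T, \pdata) \le 2\etv$.

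The main obstacle is the tension between the two roles of $\ep_1$: it must be small enough that Theorem~\ref{t:p-iy-simple} delivers a useful $\chi^2$ decay (forcing $\ep_1 = O(\echi/\sqrt{\CLS})$), yet large enough that the coupling bound scaling as $N \cdot \ep/\ep_1$ does not demand an unworkable $\ep$. The long run-time $T = \Om(\CLS \log(1/\etv))$, needed so that the warm-start $\chi^2(\ppr \| p_0)$ decays past $\echi^2$, amplifies this tension, and avoiding a factor-of-$N$ blow-up in the Cauchy--Schwarz sum is possible only because the $\chi^2$-contraction in Theorem~\ref{t:p-iy-simple} lets us treat the sum as a geometric series. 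A secondary difficulty, not present for LMC, is that the base density $p_{kh}$ itself evolves in time, which is why we must lean on Theorem~\ref{t:p-iy-simple} rather than Theorem~\ref{t:lmc-iy}; the extra $(\sms^2 + \smf^2 d) h$ term this produces is what forces the step-size $h$ to scale inversely with $\CLS(\CLS+d)(\smf \vee \sms)^2$.
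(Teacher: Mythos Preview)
Your proposal is correct and follows essentially the same approach as the paper: define the bad sets via a threshold $\ep_1$, couple with an auxiliary process that uses the exact score on the bad set, invoke Theorem~\ref{t:p-iy-simple} for the $\chi^2$ decay of $\ol q_{kh}$, and apply the framework Theorem~\ref{t:framework} with the geometric-series bound on $\sum_k(1+\chi^2(\ol q_{kh}\|p_{kh}))^{1/2}$. The only minor deviation is that the paper invokes Lemma~\ref{l:warm} directly to get $K_\chi=\chi^2(q_0\|p_0)=O(1)$ once $T=\Omega(\ln(\CLS d))$, which lets the sum be bounded by $O((\CLS/h)K_\chi+N)\cdot\ep/\ep_1$ without the extraneous $\sqrt{\CLS d}$ factor you carry; with your choice of $T$ this factor is absorbed anyway, so the final parameter bound is unaffected.
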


\begin{proof}[Proof of Theorem~\ref{t:p-precise}]
We first define the bad sets where the error in the score estimate is large,
\begin{align}
B_t:&=\bc{\ve{\nb \ln p_{t}(x)-s(x, T-t)}>\ep_1}
\end{align}
for some $\ep_1$ to be chosen. 

Given $t\ge 0$, let $t_- = h\ff th$. Given a bad set $B$, define the interpolated process by
\begin{align}
\label{e:interp-c}
    d\ol z_t &= 
    -\ba{
        f(z_{t_-}, T-t) - g(T-t)^2 b(z_{kh}, T-kh)
    }\,dt + g(T-t)\,dw_t,\\
    \nonumber
    \text{where }
    b(z,t) &= \begin{cases}
    s(z,t),& z\nin B_t\\
    \nb \ln p_t(z), &z\in B_t
    \end{cases}.
\end{align}
In other words, simulate the reverse SDE using the score estimate as long as the point is in the
%asymp good
good set (for the current $p_t$) at the previous discretization step, and otherwise use the actual gradient $\nb \ln p_t$. Let $\ol q_t$ denote the distribution of $\ol z_t$ when $\ol z_0\sim q_0$; note that $q_{nh}$ is the distribution resulting from running LMC with estimate $b$ for $n$ steps and step size $h$. Note that this process is defined only for purposes of analysis, as we do not have access to $\nb \ln p_t$.

We can couple this process with the predictor algorithm using $s$ so that as long as $x_{mh}\nin B_{mh}$, the processes agree, thus satisfying condition~\ref{i:couple} of Theorem~\ref{t:framework}.

Then by Chebyshev's inequality,
\begin{align*}
    P(B_t) \le \pf{\ep}{\ep_1}^2=:\de.
\end{align*}
Let $T=Nh$, and let $K_\chi = \chi^2(q_0||p_0)$.
Then by Theorem~\ref{t:p-iy-simple}, %for appropriate choice of constants,
\begin{align*}
\chi^2(\ol q_{kh}||p_{kh}) &= \exp\pa{-\fc{kh}{16\CLS}}\chi^2(q_0||p_0) + 
    O\pa{\CLS(\ep_1^2 + (\sms^2+\smf^2d)h)}\\
    &=\exp\pa{-\fc{kh}{4\CLS}}\chi^2(\mu_0||p) + O(1).
\end{align*}
%\hlnote{Optional: work out with dependences on $L,L_s$.}
For this to be bounded by $\ep_\chi^2$, it suffices for the terms to be bounded by  $\fc{\ep_\chi^2}2, \fc{\ep_\chi^2}4, \fc{\ep_\chi^2}4$; this is implied by
\begin{align*}
    T &\ge 32\CLS \ln\pf{2K_\chi}{\echi^2} =:T_{\min}
    \\
    h &= O\pf{\ep_\chi^2}{\CLS(\CLS+d)(\smf\vee \sms)^2}\\
    \ep_1 &= O\pf{\ep_\chi}{\sqrt{\CLS}}.
\end{align*}
(We choose $h$ so that the condition in Theorem~\ref{t:p-iy-simple} is satisfied; note $\ep_\chi\le 1$.)
By Theorem~\ref{t:framework}, %\hlnote{+1 should be +2 here}
\begin{align*}
\nonumber
\TV(q_{nh},\ol q_{nh}) 
&\le 
\sum_{k=0}^{n-1} (1+\chi^2(q_{kh}||p))^{1/2}P(B_{kh})^{1/2}\\
\nonumber
&\le 
\pa{\sumz k{n-1} \exp\pa{-\fc{kh}{32\CLS}}\chi^2(q_0||p)^{1/2} + O(1)} \de^{1/2}\\
&\le 
\pa{\pa{\sumz k{\iy} \exp\pa{-\fc{kh}{32\CLS}}K_\chi} + O(n)}\fc{\ep}{\ep_1}\\
&\le 
\fc{\ep}{\ep_1}\pa{\fc{64\CLS}hK_\chi + O(n)}.
\end{align*}
In order for this to be $\le \ep_{\TV}$, it suffices for
\begin{align*}
    \ep &\le \ep_1\ep_{\TV}\cdot 
    O\pa{\rc{n} \wedge \fc{h}{\CLS K_\chi}
    }.
    %\pa{\rc{2n} \wedge \fc{h}{32\CLS K_\chi}}. 
\end{align*}
Supposing that we run for time $T = \Te(T_{\min})$, we have that $n=\fc{T}h= O\pf{C_TT_{\min}}h$. Thus it suffices for 
\begin{align*}
    \ep &= \ep_1\ep_{\TV}\cdot O \pa{\fc{h}{T_{\min}}\wedge \fc{h}{32\CLS K_\chi}}\\
    &= 
    O\pa{\fc{\echi}{\sqrt{\CLS}} \cdot \etv \cdot \fc{\echi^2}{\CLS(\CLS+d)(\smf\vee \sms)^2} \pa{\rc{\CLS \ln (2K_\chi/\ep_\chi^2)}\wedge \rc{\CLS K_\chi} }
    }  \\
    &= 
    O\pf{\ep_{\TV}\ep_\chi^3}{\CLS^{5/2}(\CLS+d)(\smf\vee \sms)^2 (\ln (2K_\chi/\ep_\chi^2)\vee K_\chi) }.
\end{align*}
Finally, note that for $T=\Om( \ln (\CLS d))$, we have $K_\chi=O(1)$ by Lemma~\ref{l:warm}. Substituting $K_\chi=O(1)$ then gives the desired bound. 
\end{proof}

\subsection{Proof of Theorem~\ref{t:pc}}
\label{ss:pc-pf}

We now prove the main theorem on the predictor-corrector algorithm with $L^2$-accurate score estimate.

%\hlnote{The analysis is for a decoupled PC algorithm rather than interleaved (interleaving doesn't help in the setting of the theorem, but see note at end).}

\pctheorem*

For simplicity, we consider the predictor-corrector algorithm in the case where all the corrector steps are at the end (but see the discussion following the proof for the general case).
The result will follow from chaining together the guarantee on the predictor algorithm (Theorem~\ref{t:p-precise}) and LMC (Theorem~\ref{t:corrector-tv-chi2}).
\begin{proof}[Proof of Theorem~\ref{t:pc}]
Let $M=T/h$. 
We take 
$h=\Te \prc{(\smf\vee \sms)^2\CLS (\CLS+d)}$, number of corrector steps 
$N_0= \cdots = N_{T/h-1} = 0$ and $N_{M}=T_{\text{c}}/h_M$, where $T_{\text{c}}=\Te\pa{\CLS \ln \pf{2}{\echi^2}}$ and $h_M = \Te\pf{\echi^2}{dL^2\CLS}$. 
Let the distribution of $z_{T,0}$ be $q_{T,0}$. 
By Theorem~\ref{t:p-precise}, if $T=\Te(\ln(\CLS d) \vee \CLS \ln (1/\etv))$, then 
\[
\ep = O\pf{\etv}{(\smf\vee \sms)^2(\CLS+d)\CLS^{5/2}(\ln (\CLS d) \vee \CLS \ln (1/\etv))},  
\]
then there exists $\ol q_{T,0}$ such that $\TV(q_{T,0}, \ol q_{T,0}) = \etv/2$ and $\chi^2(\ol q_{T,0}||\pdata)=1$. Then using Theorem~\ref{t:corrector-tv-chi2} with $\etv\leftarrow \etv/2$ and $K_\chi = 1$, plus the triangle inequality gives that if 
\[
\ep = O\pf{\etv\echi^3}{dL^2\CLS^{5/2}\ln (1/\etv)},
\]
then there is $\ol q_T$ such that $\TV(q_T, \ol q_T)=\etv$ and $\chi^2(\ol q_T||\pdata) = \echi^2$. Finally, setting $\etv, \echi \leftarrow \etv/2$ gives $\TV(q_T,\pdata)\le \etv$. 

We note that for $\etv^3 = O\pf{1}{(1+\sms/\smf)^2(1+\CLS/d)(\ln (\CLS d)\vee \CLS)}$, the second condition on $\ep$ is more constraining, giving the theorem.
\end{proof}
%We note that 
\begin{rem*}
We can also analyze a setting where predictor and corrector steps are interleaved; for instance, if $N=1$, then interleaving the one-step inequalities in Theorem~\ref{t:lmc-iy} and~\ref{t:p-iy-simple} gives a recurrence
\begin{align*}
    \chi^2(q_{(k+1)h,0}||p_{(k+1)h}) &\le \exp\pa{-\fc{h_{\textup{pred}}}{16\CLS}}  
    \chi^2(q_{kh,1}||p_{kh}) + O(dL^2h^2 + \ep_1^2h)\\
    \chi^2(q_{(k+1)h,1} || p_{(k+1)h)}) &\le \exp\pa{-\fc{h_{\textup{corr}}}{4\CLS}} \chi^2(q_{(k+1)h,0}||p_{(k+1)h}) + O (\ep_1^2h + (\sms^2+\smf^2d)h^2);
\end{align*}
we can then follow the proof of Theorem~\ref{t:p}. While this does not improve the parameter dependence under the assumptions of  Theorem~\ref{t:pc}, it can potentially allow for larger step sizes (beyond what is allowed by Theorem~\ref{t:p}), as error accrued in the predictor step can be exponentially damped by the corrector step.
\end{rem*}
\section{Stationary distribution of LD with score estimate can be arbitrarily far away}
\label{a:far}

We show that the stationary distribution of Langevin dynamics with $L^2$-accurate score estimate can be arbitrarily far from the true distribution. 
We can construct a counterexample even in one dimension, and take the true distribution as a standard Gaussian $p(x) = \frac{1}{\sqrt{2\pi}} e^{-x^2/2}$. We will take the score estimate to also be in the form $\nb \ln q$, so that the stationary distribution of LMC with the score estimate is $q$. The main idea of the construction is to set $q$ to disagree with $p$ only in the tail of $p$, where it has a large mode; this error will fail to be detected under $L^2(p)$.
\begin{thm}
Let $p$ be the density function of $N(0,1)$. There exists an absolute constant $C$ such that given any $\ep>0$, there exists a distribution $q$ such that 
\begin{enumerate}
    \item $\ln q$ is $C$-smooth.  %for some absolute constant $C$. 
    \item 
    $\E_p[\|\nb \ln p - \nb \ln q\|^2] <\ep$
    \item 
    $\TV(p,q) > 1-\ep$.
\end{enumerate}
\end{thm}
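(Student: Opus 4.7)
The plan is to construct $q$ as an exponential reweighting of $p$ by a smooth bump placed deep in the right tail of $p$. Fix a $C^2$ bump $\phi:\R\to[0,1]$ with $\phi\equiv 1$ on $[-3/4,3/4]$, $\phi\equiv 0$ outside $(-1,1)$, and $\|\phi'\|_\infty,\|\phi''\|_\infty\le K$. Given $\ep>0$, choose $R=R(\ep)$ large (to be specified), set $W=R/2$ and $H=R^2/2$, and define
\[
h(x):=H\,\phi\!\left(\tfrac{x-R}{W}\right),\qquad q(x):=\frac{p(x)\,e^{h(x)}}{Z},\quad Z:=\int p(x)\,e^{h(x)}\,dx.
\]

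Property (1) is immediate: $(\ln q)''(x)=(\ln p)''(x)+h''(x)=-1+(H/W^2)\phi''((x-R)/W)$, and since $H/W^2=2$, $\|(\ln q)''\|_\infty\le 1+2K=:C$, independent of $\ep$. For property (2), $\nb\ln p-\nb\ln q=-h'$ is supported on $[R/2,3R/2]$ with $|h'|\le (H/W)\|\phi'\|_\infty\le RK$, so the Gaussian tail $P_p([R/2,3R/2])\le R\cdot p(R/2)=Re^{-R^2/8}/\sqrt{2\pi}$ yields
\[
\E_p\!\left[(\nb\ln p-\nb\ln q)^2\right]\le R^3 K^2 e^{-R^2/8}/\sqrt{2\pi}=O(R^3 e^{-R^2/8}),
\]
which is $<\ep$ provided $R\ge C_1\sqrt{\log(1/\ep)}$.

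Property (3) is the most delicate step. Let $F(x):=h(x)-x^2/2$, so $q\propto e^F$. Analyzing the critical equation $F'(x)=(H/W)\phi'((x-R)/W)-x=0$ shows (for $K$ moderately large) that $F$ has a unique global maximum $x^*$ in the left transition interval $(9R/16,5R/8)$, corresponding to $y^*:=(x^*-R)/W\in(-7/8,-3/4)$. Since $F(5R/8)=H-25R^2/128=39R^2/128$ and $F'(5R/8)=-5R/8<0$, we have $F(x^*)\ge 39R^2/128=\Omega(R^2)$; and because $\phi''<0$ on the descent side of the $\phi'$ peak, $F''(x^*)=-1+2\phi''(y^*)\le -1$, giving a curvature of order one. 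A standard Laplace-type estimate then yields $Q([x^*-r,x^*+r])\ge 1-O(e^{-cr^2})$; choosing $r=\Theta(\sqrt{\log(1/\ep)})$ makes $Q(A)\ge 1-\ep/2$, while $x^*\ge 9R/16$ gives $P_p(A)\le 2r\cdot p(x^*-r)=O(e^{-R^2/32})\le\ep/2$ for $R$ large. Combining, $\TV(p,q)\ge Q(A)-P_p(A)\ge 1-\ep$.

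The main obstacle is the tension between smoothness and large TV: the smoothness constraint $H/W^2=O(1)$ couples $H$ and $W$, forcing $W$ to scale linearly with $R$; but then $p$ varies by a factor $e^{\Omega(R^2)}$ across the bump, so the bump cannot sit in a region where $p$ is nearly constant, and the mode of $pe^h$ ends up near the left edge of the flat top of $\phi$ rather than at the center $R$. The specific choice $W=R/2$, $H=R^2/2$ ensures the exponential boost $e^H$ dominates the worst-case Gaussian decay on the bump with room to spare (giving $F(x^*)=\Omega(R^2)$) while keeping $|F''(x^*)|=\Theta(1)$, which is exactly what the Laplace concentration argument requires to make both the $L^2$-score error and the TV bound hold simultaneously.
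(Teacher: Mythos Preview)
Your construction is essentially identical to the paper's: both add to $-\ln p$ a smooth bump of height $\Theta(R^2)$ and width $\Theta(R)$ centered at $R$, so that $H/W^2=O(1)$ forces the smoothness constant to be absolute, the $L^2(p)$ score error is $O(R^3 e^{-R^2/8})$, and the new mode deep in the tail makes $\TV\to 1$. The only difference is in the treatment of item~(3): the paper simply observes that $q_L([L/2,3L/2])\to 1$ since the bump contributes $e^{\Omega(R^2)}$ to the normalizing constant while $\int_{h=0}e^{-x^2/2}=\sqrt{2\pi}$, whereas you carry out a more elaborate Laplace-type localization of the mode (whose precise interval $(9R/16,5R/8)$ and curvature sign depend on finer properties of $\phi$ than you state). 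That extra precision is not needed---the coarser bound $Q([R/2,3R/2]^c)\le \sqrt{2\pi}/Z$ with $Z\ge e^{\Omega(R^2)}$ already gives $\TV\ge 1-\ep$---but your argument is correct in spirit and the approach is the same.
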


\begin{proof}
Take a smooth non-negative function $g$ supported on $[-1, 1]$, with $\max \lvert g''\rvert \leq c$ and $g(0) = 1$. We consider a family of distributions for $L > 0$ with density
\begin{equation*}
    q_L(x) \propto e^{-V_L(x)}, \qquad \text{and} \quad V_L(x) := \frac{x^2}{2} - L^2 g\pa{\frac{2}{L}(x - L)}.
\end{equation*}
Thus the score function for $q_L$ is given by 
\begin{equation*}
    V_L'(x) = x - (2L) g'\Bigl( \frac{2}{L} (x - L) \Bigr).
\end{equation*}
We compute the $L^2(p)$ error between the score functions associated with $p$ and $q_L$.
\begin{equation*}
    \begin{aligned}
        \E_p (V_L'(x) - x)^2 &= \frac{1}{\sqrt{2\pi}} \iny (2L)^2 \Bigl\lvert g'\Bigl( \frac{2}{L} (x - L) \Bigr)\Bigr\rvert^2 e^{-x^2/2} \dx   \\
        & \leq \frac{1}{\sqrt{2\pi}} (2L)^2 e^{-L^2/8} \iny \Bigl\lvert g'\Bigl( \frac{2}{L} (x - L) \Bigr)\Bigr\rvert^2  \dx \\
        & = \frac{1}{\sqrt{2\pi}} 2 L^3 e^{-L^2/8} \iny \bigl\lvert g'(y)\bigr\rvert^2\,  dy,
    \end{aligned}
\end{equation*}
where in the first inequality we have used that $g(\frac{2}{L}(x - L))$ has support $[\frac{L}{2}, \frac{3L}{2}]$, since $g$ has support $[-1, 1]$. 
Thus the $L^2(p)$-error of the score function goes to $0$ as $L \to \infty$.

Moreover, as 
\begin{equation*}
    \bigl\lvert V_L''(x)\bigr\rvert = \ab{ 1 - 4 g''\Bigl( \frac{2}{L} (x - L) \Bigr) }
    \leq 1 + 4 \max_{y} \bigl\lvert g''(y) \bigr\rvert \leq 1+4c, 
\end{equation*}
the distribution $q_L$ satisfies the required smoothness (Lipschitz score) assumption. 
%Clearly $q_L$ has a $O(1)$ TV distance from $p$, in particular, it 
Note that $q_L$ has a large mode concentrated at $x = L$ as 
\begin{equation*}
    V_L(L) = \frac{L^2}{2} - L^2 g(0) = - \frac{L^2}{2}, 
\end{equation*}
while $p$ has vanishing density there, which is in fact the reason that $L^2(p)$-loss of the score estimate is not able to detect the difference between the two distributions. As the height (and width) of the mode becomes arbitrarily large compared to $x=0$, we have $q_L([\fc L2, \fc{3L}2]) \to 1$, whereas $p_L([\fc L2, \fc{3L}2])\to 0$. Hence $\TV(p_L,q_L)\to 1$.
\end{proof}

\section{Useful facts}
In this section, we collect some facts and technical lemmas used throughout the paper.
\subsection{Facts about probability distributions}
\label{s:facts}
Given a probability measure $P$ on $\R^d$ with density $p$, we say that a Poincar\'e inequality (PI) holds with constant $\CP$ if for any probability measure $q$,
\begin{align}
\chi^2(q||p)&\leq \CP \sE_p\pf{q}{p}:= \CP\int_{\R^d}\ve{\nb\fc{q(x)}{p(x)}}^2p(x) dx.\tag{PI}
\end{align}
Alternatively, for any $C^1$ function $f$,
\begin{align*}
    \Var_p(f) &\le \CP \int_{\R^d} \ve{\nb f}^2 p(x)\dx.
\end{align*}
%\hlnote{Alternative form with $\Var$}
We say that a log-Sobolev inequality (LSI) holds with constant $\CLS$ if for any probability measure $q$,
\begin{align}
    \KL(q||p) \leq %\fc\lambda2 J_\mu(\nu) :&= 
    \fc{\CLS}2\int_{\R^d}\ve{\nb\ln\fc{q(x)}{p(x)}}^2\blu{q}(x)dx.\tag{LSI}
\end{align}
We call the Poincar\'e constant and log-Sobolev constant the smallest $\CP$, $\CLS$ for which the inequalities hold for all $q$. If $p$ satisfies a log-Sobolev inequality with constant, then $p$ satisfies a Poincar\'e inequality with the same constant; hence the Poincar\'e constant is at most the log-Sobolev constant, $\CP\le \CLS$. If $p\propto e^{-V}$ is $\al$-strongly log-concave, that is, $V\succeq \al I_d$, then $p$ satisfies a log-Sobolev inequality with constant $1/\al$. 

We collect some properties of distributions satisfying LSI or PI.
%\hlnote{Consequences for convergence}

% \begin{lem}\label{l:chi2}
% Suppose that 
% \[
% \E_{x\sim p} \ve{f(x)-g(x)}_2^2 \le \ep^2
% \]
% and that for all $x$, $\ve{f(x)-g(x)} \le M$. Let $q$ be such that $\chi^2(q||p)<\iy$. Then
% \[
% \E_{x\sim q} \ve{f(x)-g(x)}_2^2 \le \chi^2(q||p)^{1/2}M\ep + \ep^2.
% \]
% \end{lem}
% \begin{proof}
% By Cauchy-Schwarz,
% \begin{align*}
%     \E_{x\sim q} 
%     \ba{\ve{f(x) - g(x)}^2}
%     & = 
%     \E_{x\sim p} 
%     \ba{\pa{\dd{q}{p}-1} \ve{f(x) - g(x)}^2} + \E_{x\sim p} \ba{\ve{f(x) - g(x)}^2} \\
%     &\le    
%     \E_{x\sim p} 
%     \ba{\pa{\dd{q}{p}-1}^2}^{1/2}\cdot 
%     \E_{x\sim p} 
%     \ba{\ve{f(x)-g(x)}^4}^{1/2}
%     +\ep^2 \\
%     &\le  \chi^2(q || p)^{1/2}  
%     M\ep
%     + \ep^2.
% \end{align*}
% \end{proof}

\begin{lem}[Herbst, Sub-exponential and sub-gaussian concentration given log-Sobolev inequality, {\cite[Pr. 5.4.1]{bakry2013analysis}}]
\label{l:herbst}
Suppose that $\mu$ satisfies a log-Sobolev inequality with constant $\CLS$. Let $f$ be a 1-Lipschitz function.
Then
\begin{enumerate}
\item
(Sub-exponential concentration) For any $t\in \R$, 
\[
\E_\mu e^{tf} \le e^{t\E_\mu f + \fc{\CLS t^2}2}.
\]
\item
(Sub-gaussian concentration) For any $t\in \left[0,\rc{\CLS}\right)$, 
\[
\E_\mu e^{\fc{tf^2}2} \le
\rc{\sqrt{1-\CLS t}}\exp\ba{\fc{t}{2(1-\CLS t)} (\E_\mu f)^2}.
\]
\end{enumerate}
\end{lem}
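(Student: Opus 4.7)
The plan is to derive both parts from the entropy form of the log-Sobolev inequality, namely $\Ent_\mu(g^2) \le 2\CLS\,\E_\mu\|\nabla g\|^2$ where $\Ent_\mu(h) := \E_\mu[h\ln h] - (\E_\mu h)\ln\E_\mu h$. This is equivalent to the KL form stated in Section~\ref{s:facts}: take $q = g^2\mu/\E_\mu g^2$, divide by $\E_\mu g^2$, and use $\nabla\ln g^2 = 2\nabla g/g$ to pass between the two.

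For part 1, I follow the classical Herbst argument. Define $H(t) := \E_\mu e^{tf}$ and apply the entropy-form LSI to the test function $g = e^{tf/2}$. Direct computation gives $\E_\mu g^2 = H(t)$, $\E_\mu[g^2\ln g^2] = tH'(t)$, and $\E_\mu\|\nabla g\|^2 = \tfrac{t^2}{4}\E_\mu[\|\nabla f\|^2 e^{tf}] \le \tfrac{t^2}{4}H(t)$ using $\|\nabla f\|\le 1$ a.e. Substituting into LSI and dividing by $t^2 H(t)$ yields
\[
\frac{d}{dt}\left(\frac{\ln H(t)}{t}\right) \le \frac{\CLS}{2}.
\]
Since $H(0)=1$ and $H'(0) = \E_\mu f$, one has $\lim_{t\to 0^+}\frac{\ln H(t)}{t} = \E_\mu f$ by L'H\^opital. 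Integration from $0$ to $t$ then gives $\ln H(t) \le t\,\E_\mu f + \frac{\CLS t^2}{2}$ for $t>0$, which is the claimed bound. The case $t<0$ follows by applying the $t>0$ case to $-f$ (also $1$-Lipschitz).

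For part 2, I would use the Gaussian representation
\[
e^{tf^2/2} = \frac{1}{\sqrt{2\pi t}}\int_{-\infty}^{\infty}\exp\!\left(uf - \frac{u^2}{2t}\right)du
\]
valid for $t>0$ (by completing the square). Taking $\E_\mu$ of both sides, swapping integration order by Fubini (the integrand is non-negative), and inserting the sub-exponential bound from part 1 gives
\[
\E_\mu e^{tf^2/2} \le \frac{1}{\sqrt{2\pi t}}\int \exp\!\left(-\frac{u^2}{2}\left(\tfrac{1}{t}-\CLS\right) + u\,\E_\mu f\right)du.
\]
For $t\in[0,1/\CLS)$ the quadratic coefficient $\tfrac{1}{t}-\CLS$ is positive, so this is a convergent Gaussian integral. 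Evaluating it in closed form (effective variance $t/(1-\CLS t)$, effective mean $t\,\E_\mu f/(1-\CLS t)$) reproduces exactly the bound stated in the lemma.

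The hard part is negligible: the proof reduces to a first-order ODE and a Gaussian integral. The only mild subtlety is verifying the boundary condition $\ln H(t)/t \to \E_\mu f$ as $t\to 0^+$ in part 1, which is a one-line L'H\^opital computation, and confirming integrability to invoke Fubini in part 2, which is automatic from the sub-exponential tail bound just established.
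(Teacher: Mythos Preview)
The paper does not give its own proof of this lemma; it is simply cited from \cite[Pr.~5.4.1]{bakry2013analysis} as a known fact. Your argument is correct and is the standard one: the Herbst differential inequality for part~1, and the Gaussian linearization $e^{tf^2/2}=\frac{1}{\sqrt{2\pi t}}\int e^{uf-u^2/(2t)}\,du$ combined with part~1 for part~2, which recovers exactly the stated constants.
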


\begin{lem}[Gaussian measure concentration for LSI, {\cite[\S5.4.2]{bakry2013analysis}}]
\label{l:gmc-lsi}
Suppose that $\mu$ satisfies a log-Sobolev inequality with constant $\CLS$. Let $f$ be a $L$-Lipschitz function. Then
\[
\mu\pa{\ab{f - \E_\mu f} \ge r} \le 2e^{-\fc{r^2}{2\CLS L^2}}.
\]
\end{lem}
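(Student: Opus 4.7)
The plan is to deduce the two-sided Gaussian tail bound from the sub-exponential concentration already recorded in Lemma~\ref{l:herbst}, via the standard Herbst/Chernoff argument.

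First I would reduce to the $1$-Lipschitz case: since $f$ is $L$-Lipschitz, the function $g := f/L$ is $1$-Lipschitz, so Lemma~\ref{l:herbst}(1) gives $\E_\mu e^{tg} \le \exp\bigl(t\E_\mu g + \tfrac{\CLS t^2}{2}\bigr)$ for all $t\in\R$. Substituting back and relabeling $s := t/L$, this yields the moment generating function bound
\[
\E_\mu \exp\bigl(s(f-\E_\mu f)\bigr) \;\le\; \exp\Bigl(\tfrac{\CLS L^2 s^2}{2}\Bigr) \qquad \text{for all } s\in\R.
\]

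Next I would run a Chernoff bound. For any $s>0$ and $r>0$, Markov's inequality gives
\[
\mu\bigl(f-\E_\mu f \ge r\bigr) \;\le\; e^{-sr}\,\E_\mu \exp\bigl(s(f-\E_\mu f)\bigr) \;\le\; \exp\Bigl(-sr + \tfrac{\CLS L^2 s^2}{2}\Bigr).
\]
Optimizing over $s>0$ with the choice $s = r/(\CLS L^2)$ produces the one-sided bound $\mu(f-\E_\mu f \ge r) \le \exp\bigl(-r^2/(2\CLS L^2)\bigr)$. Applying exactly the same argument to $-f$, which is also $L$-Lipschitz and has mean $-\E_\mu f$, yields the matching lower-tail bound. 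A union bound over the two events then gives the stated factor of $2$ and completes the proof.

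There is no real obstacle here — every ingredient is already contained in Lemma~\ref{l:herbst}(1), and the only nontrivial step (establishing the sub-exponential MGF bound from LSI via the Herbst argument) is what Lemma~\ref{l:herbst} packages. The proof is essentially a three-line consequence: rescale to $1$-Lipschitz, apply Chernoff with the optimal $s$, and symmetrize.
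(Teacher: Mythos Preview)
The paper does not prove this lemma; it is stated as a fact with a citation to \cite[\S5.4.2]{bakry2013analysis}. Your proposal is correct and is exactly the standard Herbst/Chernoff derivation one finds in that reference, so there is nothing to compare against.
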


% \begin{lem}[Distance between mean and mode]
% \label{l:mean-mode}
% Suppose that $p\propto e^{-f}$ satisfies a log-Sobolev inequality with constant $\CLS$ and $f$ is $L$-smooth. Let $x^*\in \amin f$ be a mode of the distribution. Then
% \[
% \ve{\E x - x^*} \le 2\sqrt{
% \CLS \pa{\log \pf{\sqrt{2\pi d}}{e} + d\pa{1+\rc 2 \log \pf{\CLS}L}}
% }= \wt O(\sqrt{\CLS d}).
% \]
% \end{lem}

\begin{lem}[{\cite[Lemma G.10]{ge2018beyond}}]\label{lem:mode-mean}%\label{l:E-nbf}
\label{l:mean-mode}
Let $V:\R^d\to \R$ be a $\al$-strongly convex and $\be$-smooth function and let $P$ be a probability measure with density function $p(x)\propto e^{-V(x)}$. Let $x^*=\amin_x V(x)$ and $\ol x = \E_P x$. Then 
\begin{align}
\ve{x^*-\ol x} &\le \sfc{d}{\al}\pa{\sqrt{\ln \pf{\be}{\al}}+5}.
\end{align}
\end{lem}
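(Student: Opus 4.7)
The plan is to bound the stronger quantity $\E_P\ve{x-x^*}$, since $\ve{\ol x-x^*}=\ve{\E_P(x-x^*)}\le\E_P\ve{x-x^*}$ by Jensen's inequality, via a pointwise Gaussian envelope on $p$ that is then converted to a sub-Gaussian tail on $\ve{x-x^*}$.

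First I would establish the envelope. By $\be$-smoothness and $\nb V(x^*)=0$, we have $V(x)\le V(x^*)+\fc{\be}{2}\ve{x-x^*}^2$, so the normalizing constant $Z:=\int e^{-V}$ satisfies $Z\ge e^{-V(x^*)}(2\pi/\be)^{d/2}$; combined with the $\al$-strong-convexity lower bound $V(x)\ge V(x^*)+\fc{\al}{2}\ve{x-x^*}^2$, this yields
\begin{align*}
    p(x)\le \pf{\be}{2\pi}^{d/2}\exp\pa{-\fc{\al}{2}\ve{x-x^*}^2}\qquad\text{for every }x.
\end{align*}
Integrating over $\{\ve{x-x^*}\ge r\}$ gives $\Pj_P(\ve{x-x^*}\ge r)\le(\be/\al)^{d/2}\Pj(\ve{Y}\ge r)$ with $Y\sim N(0,\al^{-1}I_d)$. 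Since $\ve{\cdot}$ is $1$-Lipschitz and $N(0,\al^{-1}I_d)$ has LSI constant $\al^{-1}$, Lemma~\ref{l:gmc-lsi} together with $\E\ve{Y}\le\sqrt{d/\al}$ yields $\Pj(\ve{Y}\ge\sqrt{d/\al}+t)\le 2e^{-\al t^2/2}$, so
\begin{align*}
    \Pj_P\pa{\ve{x-x^*}\ge\sqrt{d/\al}+t}\le 2\exp\pa{\fc{d}{2}\ln(\be/\al)-\fc{\al t^2}{2}}.
\end{align*}
Writing $t=\sqrt{d\ln(\be/\al)/\al}+s$ with $s\ge 0$, the cross term in the expanded square cancels the $\fc{d}{2}\ln(\be/\al)$ prefactor and leaves a clean bound $\le 2e^{-\al s^2/2}$.

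Finally I would integrate the tail:
\begin{align*}
    \E_P\ve{x-x^*}\le\sqrt{d/\al}+\sqrt{d\ln(\be/\al)/\al}+\int_0^\infty 2e^{-\al s^2/2}\,ds=\sqrt{d/\al}\pa{1+\sqrt{\ln(\be/\al)}}+\sqrt{2\pi/\al}.
\end{align*}
Since $d\ge 1$ I can absorb $\sqrt{2\pi/\al}\le\sqrt{2\pi}\sqrt{d/\al}$, and since $1+\sqrt{2\pi}<5$ the claimed bound follows. The only real subtlety is the bookkeeping of the $(\be/\al)^{d/2}$ prefactor: the crossover radius $\sqrt{d\ln(\be/\al)/\al}$ is exactly where the linear-in-$d$ correction $\fc{d}{2}\ln(\be/\al)$ balances the quadratic exponent $\fc{\al r^2}{2}$, and this is what produces the $\sqrt{\ln(\be/\al)}$ factor in the final bound.
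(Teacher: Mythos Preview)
The paper does not prove this lemma; it is quoted verbatim from \cite[Lemma G.10]{ge2018beyond} and stated without proof, so there is no in-paper argument to compare against. Your proof is correct and self-contained: the Gaussian sandwich $p(x)\le(\be/2\pi)^{d/2}e^{-\al\ve{x-x^*}^2/2}$ follows exactly as you say, the $(\be/\al)^{d/2}$ prefactor is cleanly absorbed by shifting the threshold by $\sqrt{d\ln(\be/\al)/\al}$ (the cross term in the square indeed dominates the prefactor, leaving the $2e^{-\al s^2/2}$ tail), and the final integration gives $\sqrt{d/\al}\,(1+\sqrt{\ln(\be/\al)})+\sqrt{2\pi/\al}$ with $1+\sqrt{2\pi}<5$. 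One minor remark: you invoke Lemma~\ref{l:gmc-lsi} for the Gaussian tail, which gives a two-sided bound with a factor of $2$; a one-sided Gaussian tail would save that factor, but it is immaterial for the stated constant.
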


\begin{thm}[{ \cite[Theorem 5.1]{brascamp2002extensions}, \cite{harge2004convex}}]\label{thm:harge}
\label{t:bl}
Suppose the $d$-dimensional gaussian $N(0,\Si)$ has density $\ga$. Let $p=h\cdot \ga$ be a probability density.
\begin{enumerate}
    \item If $h$ is log-concave, and $g$ is convex, then 
    \begin{align*}
\int_{\R^d} g(x-\E_p x)p(x)\dx &\le \int_{\R^d} g(x)\ga(x)\dx.
\end{align*}
    \item If $h$ is log-convex,\footnote{Note that the sign is flipped in the theorem statement in \cite{brascamp2002extensions}.} and $g(x) = \an{x,y}^\al$ for some $y\in \R^d$, $\al>0$, then 
    \begin{align*}
\int_{\R^d} g(x-\E_p x)p(x)\dx &\ge \int_{\R^d} g(x)\ga(x)\dx.
\end{align*}
\end{enumerate}
\end{thm}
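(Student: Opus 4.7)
The statement is the Brascamp--Lieb / Hargé inequality. The proof goes by a semigroup interpolation between $p$ and $\ga$ along the Ornstein--Uhlenbeck dynamics on $(\R^d,\ga)$. After the linear change of variables $x\mapsto\Si^{1/2}x$ reducing to $\Si=I$ and standard Gaussian $\ga$, let $P_t f(x) = \int f(e^{-t}x + \sqrt{1-e^{-2t}}\,z)\,\ga(dz)$ denote the OU semigroup with invariant measure $\ga$. Set $h_t := P_t h$ and $p_t := h_t\ga$; reversibility gives $\int h_t\,d\ga = \int h\,d\ga = 1$, so each $p_t$ is a probability density with $p_0 = p$ and $p_\iy = \ga$, and the mean satisfies $m_t := \E_{p_t}X = e^{-t}m$ (using $P_t x_i = e^{-t} x_i$). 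Define
\[
\phi(t) := \int g(x - m_t)\,p_t(x)\,dx,
\]
so that the two assertions of the theorem become the monotonicity statements $\phi(0)\le\phi(\iy)$ in Part 1 and $\phi(0)\ge\phi(\iy)$ in Part 2, with $\phi(\iy) = \int g\,d\ga$.

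A direct calculation, combining the Fokker--Planck equation $\pl_t p_t = \Delta p_t + \nb\cdot(xp_t)$, $m_t' = -m_t$, and integration by parts, yields the key identity
\[
\phi'(t) = -\,\mathrm{Cov}_{p_t}\bigl(\nb g(X - m_t),\, \nb\log h_t(X)\bigr),
\]
where we have used $\E_{p_t}\nb\log h_t(X) = \E_{p_t} X = m_t$ (from $\nb\log p_t = \nb\log h_t - x$ and integration by parts against $\ga$), and covariance of vector fields means $\E[F\cdot G] - \E F\cdot \E G$. In \emph{Part 1}, Prékopa--Leindler applied to the jointly log-concave Mehler kernel shows $h_t = P_t h$ is log-concave for every $t\ge 0$, so $p_t = h_t\ga$ is $1$-uniformly log-concave; the field $\nb g(\cdot - m_t)$ is monotone (gradient of a convex function) while $\nb\log h_t$ is anti-monotone (gradient of a concave function), so a correlation inequality for uniformly log-concave measures gives $\mathrm{Cov}_{p_t}\le 0$ and hence $\phi'(t)\ge 0$. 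In \emph{Part 2}, log-convexity of $h$ is preserved by $P_t$ (positive linear combinations of log-convex functions are log-convex by Hölder), so $\nb\log h_t$ is monotone. Using $\nb g(x - m_t) = \al\,\an{x - m_t,\,y}^{\al-1}y$, the vector covariance collapses to a scalar one,
\[
\al\,\mathrm{Cov}_{p_t}\bigl(\an{X-m_t,\,y}^{\al-1},\; \an{\nb\log h_t(X),\,y}\bigr),
\]
between two functions of $X$ that are both monotone in the direction $y$; disintegrating $p_t$ along the line $\R y$ and applying Chebyshev's correlation inequality to the 1D conditional measure shows this scalar covariance is $\ge 0$, whence $\phi'(t)\le 0$.

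\textbf{Main obstacle.} The crux in Part 1 is the vector-valued correlation inequality $\mathrm{Cov}_\mu(F,G)\le 0$ for monotone $F$ and anti-monotone $G$ under a uniformly log-concave measure $\mu$. In dimension one this is immediate from the Chebyshev/FKG inequality, but in $d\ge 2$ it is genuinely nontrivial and can be established either via the Helffer--Sjöstrand / second-order Brascamp--Lieb representation of covariance (writing the covariance as $\int_0^\infty\E\bigl[\nb F\cdot(\text{positive operator})\nb G\bigr]\,ds$ and using the sign of the Jacobians), or via Caffarelli's contraction theorem, which furnishes a $1$-Lipschitz Brenier transport $T: \ga\to p_t$ and transports the correlation bound to a Gaussian calculation handled by Stein's identity. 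Part 2 is comparatively easier because the tensor form of $g$ collapses it to a 1D correlation, but this reduction is essential: the inequality can fail for general convex $g$ paired with merely log-convex $h$, which is why the hypothesis restricts $g$ to powers of a linear functional.
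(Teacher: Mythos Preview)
The paper does not prove this theorem; it is quoted from \cite{brascamp2002extensions} and \cite{harge2004convex} as a known result, so there is no in-paper proof to compare against. Your sketch for Part~1 is essentially Harg\'e's semigroup argument, and you correctly isolate the vector correlation inequality for monotone/anti-monotone fields under a uniformly log-concave measure as the nontrivial step, with the Caffarelli-contraction route being the standard way to close it.

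Your Part~2 argument, however, has a genuine gap. After collapsing to the scalar covariance
\[
\al\,\operatorname{Cov}_{p_t}\bigl(\an{X-m_t,y}^{\al-1},\ \an{\nb\log h_t(X),y}\bigr),
\]
you disintegrate along $\R y$ and invoke 1D Chebyshev. But writing $X=(s,X^\perp)$ with $s=\an{X,y}$, the law of total covariance gives
\[
\operatorname{Cov}(A,B)=\E\bigl[\operatorname{Cov}(A,B\mid X^\perp)\bigr]+\operatorname{Cov}\bigl(\E[A\mid X^\perp],\,\E[B\mid X^\perp]\bigr),
\]
and since $h_t$ couples $s$ and $X^\perp$, the second term has no sign. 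Even the conditional term fails for odd $\al\ge 3$, where $s\mapsto (s-c)^{\al-1}$ is not monotone. The standard proof of Part~2 does not go through the semigroup at all: one reduces to $d=1$ at the outset by observing that the $y$-marginal of $p=h\ga$ is $\wt h\cdot \ga_1$ with $\wt h(s)=\int h(sy+x^\perp)\,\ga^\perp(dx^\perp)$ again log-convex (integrals of log-convex functions are log-convex), and then proves the one-dimensional moment comparison directly as in Brascamp--Lieb.
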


\begin{lem}\label{l:lc}
Let $P$ be a probability measure on $\R^d$ with density function $p$ 
such that $\ln p$ is $C^1$ and $L$-smooth and $P$ satisfies a Poincar\'e inequality with constant $\CP$. Then $L\CP\ge 1$.
%satisfying Assumption~\ref{a:p}(\ref{a:p-smooth}, \ref{a:p-lsi}). Then 
\end{lem}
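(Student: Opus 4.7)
The plan is to test the Poincar\'e inequality against the coordinate functions $f(x) = x_i$ and then lower bound the resulting variances via a Cram\'er--Rao-type identity. Since $\nb f \equiv e_i$ gives $\int \ve{\nb f}^2 p\dx = 1$, Poincar\'e immediately yields $\Var_p(x_i) \le \CP$, so it is enough to produce some coordinate $i$ for which $\Var_p(x_i) \ge 1/L$.

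For the lower bound on $\Var_p(x_i)$, integration by parts gives the Stein-type identity $\E_p[x_i\,\pl_i \ln p(x)] = \int x_i\,\pl_i p(x)\dx = -1$, while $\E_p[\pl_i \ln p] = \int \pl_i p\dx = 0$. So the cross-covariance of $x_i$ and $\pl_i \ln p$ under $p$ equals $-1$ exactly, and Cauchy--Schwarz gives $1 \le \Var_p(x_i)\cdot \E_p[(\pl_i \ln p)^2]$. It therefore suffices to bound the diagonal Fisher information $\E_p[(\pl_i \ln p)^2] \le L$.

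For this last bound I would use the second Bartlett identity $\E_p[(\pl_i \ln p)^2] = -\E_p[\pl_i^2 \ln p]$ (another integration by parts applied to $\int (\pl_i \ln p)(\pl_i p)\dx$), together with the hypothesis that $\nb \ln p$ is $L$-Lipschitz, which forces $\pl_i^2 \ln p \ge -L$ pointwise (equivalently $\nb^2 \ln p \succeq -L I_d$). Chaining the inequalities yields $\Var_p(x_i) \ge 1/L$, hence $\CP \ge 1/L$.

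The only step that requires real care is justifying the two integrations by parts. A preliminary application of Poincar\'e to $x_i$ already shows $\Var_p(x_i) \le \CP < \iy$, so $p$ has finite second moments, which in the standard way should be enough to kill the boundary terms at infinity; and if $\ln p$ is only $C^1$ rather than $C^2$, the identity $\E_p[(\pl_i \ln p)^2] = -\E_p[\pl_i^2 \ln p]$ and the inequality $\pl_i^2 \ln p \ge -L$ must be interpreted distributionally, or obtained by mollifying $p$ and passing to the limit. I expect this purely technical regularity checking to be the only real obstacle, since the algebraic core of the argument collapses to one Cauchy--Schwarz.
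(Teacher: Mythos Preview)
Your proposal is correct and takes a genuinely different route from the paper. Both arguments apply Poincar\'e to a coordinate function to get $\CP \ge \Var_p(x_1)$, and then lower-bound the variance by $1/L$; the difference is in how that variance bound is obtained. The paper writes $p$ as the density of $N(0,L^{-1}I_d)$ times a log-convex factor (which is exactly what $L$-smoothness of $\ln p$ gives) and invokes the Brascamp--Lieb inequality in the log-convex direction to conclude $\Var_p(x_1)\ge \E_{N(0,L^{-1}I_d)}[x_1^2]=1/L$ in one line. Your argument is the Cram\'er--Rao route: Stein's identity plus Cauchy--Schwarz give $\Var_p(x_i)\ge 1/\E_p[(\pl_i\ln p)^2]$, and the second Bartlett identity together with $\nb^2\ln p\succeq -LI_d$ bounds the denominator by $L$. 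Your approach is more elementary and self-contained---it avoids importing Brascamp--Lieb---at the price of the regularity and boundary-term bookkeeping you flag (which the paper's proof sidesteps, since the log-convex decomposition only needs $\nb\ln p$ Lipschitz, not $C^1$ of $\nb\ln p$). Both are clean, and the intermediate conclusion $\Var_p(x_i)\ge 1/L$ is identical.
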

\begin{proof}
By the Poincar\'e inequality and %Harg\'e's Theorem, 
%the Brascamp-Lieb moment inequality~\cite[Theorem 5.1]{brascamp2002extensions}\footnote{Note that the sign is flipped in the theorem statement in the log-convex case.},
Lemma~\ref{t:bl}(2), 
since $p$ is equal to the density of $N(0,\rc L I_d)$ multiplied by a log-convex function, 
\begin{align*}
    \CP &\ge \E_P(x_1 - \E_P x_1)^2 \ge \E_{N(0,\rc{L}I_d)} x_1^2 = \rc{L}. \qedhere
\end{align*}
\end{proof}

\subsection{Lemmas on SMLD and DDPM}

We give bounds on several quantities associated with the SMLD and DDPM processes at time $t$: the log-Sobolev constants (Lemma~\ref{l:lsi-noise}), the second moment (Lemma~\ref{2ed_moment}), and the warm start parameter (Lemma~\ref{l:warm}).

First, we note that for SMLD and DDPM, the conditional distribution of $\wt x_t$ given $\wt x_0$ is
\begin{align*}
\text{SMLD:} &&\quad 
    \wt x_t|\wt x_0 &\sim N\pa{x(0), \int_0^t g(s)^2 \,ds \cdot I_d} \\
\text{DDPM:}&& \quad
    \wt x_t|\wt x_0 &\sim N\pa{x(0)e^{-\rc 2 \int_0^t g(s)^2\,ds}, (1-e^{-\int_0^t g(s)^2\,ds})I_d}.
\end{align*}
Hence
\begin{align}
\label{e:p-SMLD}
    \wt p^{\textup{SMLD}}_t &= p_0 * N\pa{0,\int_0^t g(s)^2 \,ds \cdot I_d}\\
\label{e:p-DDPM}
    \wt p^{\textup{DDPM}}_t  &=
    M_{e^{-\rc 2 \int_0^t g(s)^2\,ds}\#}
    p_0 * N(0,(1-e^{-\int_0^t g(s)^2\,ds})I_d)
\end{align}
where $M_c$ is multiplication by $c$.

\begin{lem}[\cite{chafai2004entropies}]
\label{l:lsi-conv}
Let $p,p'$ be two probability densities on $\R^d$. 
%If $p$ and $p'$ satisfy log-Sobolev inequalities with constants $\CLS$ and $\CLS'$, then $p*p'$ satisfies a log-Sobolev inequality with constant $\CLS+\CLS'$.
If $p$ and $p'$ satisfy log-Sobolev inequalities with constants $\CLS$ and $\CLS'$, then $p*p'$ satisfies a log-Sobolev inequality with constant $\CLS+\CLS'$.
\end{lem}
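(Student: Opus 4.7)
The plan is to derive the convolution bound from the weighted tensorization property of LSI. Recall that if $\mu_1$ on $\R^{d_1}$ and $\mu_2$ on $\R^{d_2}$ satisfy log-Sobolev inequalities with constants $C_1$ and $C_2$, then the product $\mu_1\otimes \mu_2$ satisfies the \emph{weighted} LSI
\[
\operatorname{Ent}_{\mu_1\otimes\mu_2}(g^2) \le 2C_1\int \ve{\nb_1 g(x_1,x_2)}^2\,d(\mu_1\otimes\mu_2) + 2C_2\int \ve{\nb_2 g(x_1,x_2)}^2\,d(\mu_1\otimes\mu_2)
\]
for all smooth $g:\R^{d_1+d_2}\to \R$, where $\nb_i$ denotes the gradient in the $i$-th block of coordinates. (This is the standard tensorization argument: apply the one-dimensional LSI in each block, rescaling by $C_i$.) This refinement is what will give the additive rather than $2\max$ constant.

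Now I would apply this to $p\otimes p'$ on $\R^d\times \R^d$ with the test function $g(x,y):=f(x+y)$ where $f:\R^d\to \R$ is an arbitrary smooth function. Since $g$ depends on $(x,y)$ only through $x+y$, the pushforward calculation gives
\[
\int g(x,y)^2\,d(p\otimes p')(x,y) = \int f(z)^2\,d(p*p')(z),
\]
and similarly for $g^2\ln g^2$, so $\operatorname{Ent}_{p\otimes p'}(g^2) = \operatorname{Ent}_{p*p'}(f^2)$. On the gradient side, $\nb_x g(x,y) = \nb_y g(x,y) = \nb f(x+y)$, and so both Dirichlet forms equal $\int \ve{\nb f(z)}^2\,d(p*p')(z)$.

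Combining these with the weighted tensorization inequality yields
\[
\operatorname{Ent}_{p*p'}(f^2) \le 2(\CLS+\CLS')\int \ve{\nb f(z)}^2\,d(p*p')(z),
\]
which is exactly the claimed LSI with constant $\CLS+\CLS'$. The main (minor) point to be careful about is the weighted form of tensorization; the uniform version would only give the constant $2\max(\CLS,\CLS')$, which is worse than what the lemma asserts, so the argument really does depend on keeping track of each coordinate block separately.
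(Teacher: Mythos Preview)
Your argument is correct and is essentially the standard proof of this convolution stability result: subadditivity of entropy gives the weighted tensorization inequality with constants $\CLS$ and $\CLS'$ in the two coordinate blocks, and testing against $g(x,y)=f(x+y)$ pushes everything forward to $p*p'$ with the additive constant $\CLS+\CLS'$. Your remark that the naive (unweighted) tensorization would only yield $2\max(\CLS,\CLS')$ is on point and identifies exactly why the block-wise form is needed.

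The paper does not supply its own proof of this lemma; it simply cites \cite{chafai2004entropies} for the statement. So there is no paper-specific argument to compare against, but your sketch matches the proof one finds in that reference.
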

\begin{lem}[Log-Sobolev constant for SMLD and DDPM]\label{l:lsi-noise}
Let $\wt p^{\textup{SMLD}}_t$ and $\wt p^{\textup{DDPM}}_t$ denote the distribution of the SMLD/DDPM processes at time $t$, when started at $p_0$. Let $\CLS$ be the log-Sobolev constant of $p_0$. Then 
\begin{align*}
    \CLS(\wt p^{\textup{SMLD}}_t) &\le \CLS + \int_0^t g(s)^2 \,ds \\
    \CLS(\wt p^{\textup{DDPM}}_t) &\le (\CLS-1) e^{- \int_0^t g(s)^2 \,ds} + 1 \le \max\{\CLS,1\}.
\end{align*}
\end{lem}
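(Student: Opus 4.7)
The plan is to apply the convolution lemma (Lemma~\ref{l:lsi-conv}) combined with the fact that Gaussians and pushforwards under scalings have easily computed log-Sobolev constants. Both bounds are essentially immediate from the explicit formulas~\eqref{e:p-SMLD} and~\eqref{e:p-DDPM} for $\wt p_t$ in the two cases; no real difficulty arises.

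For SMLD, I would recall that $N(0,\sigma^2 I_d)$ is $1/\sigma^2$-strongly log-concave and hence satisfies a log-Sobolev inequality with constant $\sigma^2$. Applying this with $\sigma^2 = \int_0^t g(s)^2\,ds$ and then invoking Lemma~\ref{l:lsi-conv} on the convolution $\wt p_t^{\textup{SMLD}} = p_0 * N(0,\int_0^t g(s)^2\,ds\cdot I_d)$ gives the bound
\[
\CLS(\wt p_t^{\textup{SMLD}}) \le \CLS + \int_0^t g(s)^2\,ds.
\]

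For DDPM, set $\alpha_t := e^{-\frac12\int_0^t g(s)^2\,ds}$, so that $\alpha_t^2 = e^{-\int_0^t g(s)^2\,ds}$, and write
\[
\wt p_t^{\textup{DDPM}} = (M_{\alpha_t\#}p_0) * N(0,(1-\alpha_t^2)I_d).
\]
The key observation is that log-Sobolev constants scale quadratically under pushforward by scaling: if $X\sim p$ satisfies LSI with constant $C$, then $\alpha X$ satisfies LSI with constant $\alpha^2 C$ (this follows from substituting $f(x) \mapsto f(\alpha x)$ in the LSI inequality, since $\|\nabla(f\circ M_\alpha)(x)\|^2 = \alpha^2\|\nabla f(\alpha x)\|^2$). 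Thus $M_{\alpha_t\#}p_0$ has LSI constant $\alpha_t^2\CLS$, the Gaussian factor has LSI constant $1-\alpha_t^2$, and Lemma~\ref{l:lsi-conv} yields
\[
\CLS(\wt p_t^{\textup{DDPM}}) \le \alpha_t^2 \CLS + (1-\alpha_t^2) = (\CLS-1)e^{-\int_0^t g(s)^2\,ds} + 1.
\]
For the last inequality, if $\CLS \ge 1$ the bound is monotonically decreasing in $t$ from $\CLS$ toward $1$, so it is bounded by $\CLS$; if $\CLS < 1$ it is increasing from $\CLS$ toward $1$, so it is bounded by $1$; in either case the bound is $\max\{\CLS,1\}$.

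There is no real obstacle here. The only item worth double-checking is the scaling law for LSI constants under $M_{\alpha}$, which is standard but I would verify by direct substitution into the defining inequality to make sure the exponent is $\alpha^2$ rather than $\alpha$.
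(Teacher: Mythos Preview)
Your proposal is correct and follows essentially the same approach as the paper: apply Lemma~\ref{l:lsi-conv} to the explicit convolution formulas~\eqref{e:p-SMLD} and~\eqref{e:p-DDPM}, using that $N(0,\sigma^2 I_d)$ has log-Sobolev constant $\sigma^2$ and that the log-Sobolev constant scales by $\alpha^2$ under the pushforward $M_\alpha$. The paper states the scaling fact in the slightly more general form ``if $T$ is $L$-Lipschitz then $T_\#\mu$ has log-Sobolev constant $\le L^2\CLS$,'' but your version for $T=M_\alpha$ is exactly what is needed, and your argument is in fact more detailed (you also spell out the final $\max\{\CLS,1\}$ inequality, which the paper leaves implicit).
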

Note that the analogous statement for the Poincar\'e constant $\CP$ holds for Lemma~\ref{l:lsi-conv} and~\ref{l:lsi-noise}.
\begin{proof} 
Note that if $\mu$ has log-Sobolev constant $\CLS$ and $T$ is a smooth $L$-Lipschitz map, then $T_\# \mu$ has log-Sobolev constant $\le L^2\CLS$. Applying Lemma~\ref{l:lsi-conv} to~\eqref{e:p-SMLD} and~\eqref{e:p-DDPM} then finishes the proof.
\end{proof}

\begin{lem}[Second moment for SMLD and DDPM]\label{2ed_moment}
Suppose that $\Tilde{p}_0$ has finite second moment, then for $t\in[0,T]$:
\begin{align*}
    & \E_{\Tilde{p}_t}\ba{\ve{x}^2} = \E_{\Tilde{p}_0}\ba{\ve{x}^2} + d\beta(t) &&\textit{in SMLD,}\\
    & \E_{\Tilde{p}_t}\ba{\ve{x}^2} = e^{-\beta(t)}\E_{\Tilde{p}_0}\ba{\ve{x}^2} + d(1-e^{-\beta(t)})\le \max\bc{\E_{\Tilde{p}_0}\ba{\ve{x}^2}, d} &&\textit{in DDPM,}
\end{align*}
where $\beta(t) = \int_0^t g(s)^2ds$.
\end{lem}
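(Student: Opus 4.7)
The plan is to apply It\^o's formula to $\|x\|^2$ along the forward SDE, take expectations to kill the martingale part, and solve the resulting linear ODE for $m(t) := \E_{\tilde p_t}\|x\|^2$. No machinery beyond this is needed; the two cases differ only in whether there is a drift term that produces exponential decay.

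For SMLD we have $d\tilde x_t = g(t)\,dw_t$, so It\^o's formula gives
\begin{align*}
d\|\tilde x_t\|^2 = 2 g(t)\langle \tilde x_t, dw_t\rangle + d\cdot g(t)^2\,dt .
\end{align*}
Taking expectations (the stochastic integral is a true martingale once one truncates by a localizing sequence and passes to the limit, using that $\E\|\tilde x_0\|^2<\infty$ and $g$ is bounded on $[0,T]$) yields $m'(t) = d\, g(t)^2$, which integrates to $m(t) = m(0) + d\beta(t)$.

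For DDPM we have $d\tilde x_t = -\tfrac12 g(t)^2 \tilde x_t\,dt + g(t)\,dw_t$, so It\^o gives
\begin{align*}
d\|\tilde x_t\|^2 = \bigl(-g(t)^2 \|\tilde x_t\|^2 + d\, g(t)^2\bigr)\,dt + 2g(t)\langle \tilde x_t, dw_t\rangle .
\end{align*}
Taking expectations, $m'(t) = g(t)^2\bigl(d - m(t)\bigr)$. Setting $u(t) := m(t)-d$ gives $u'(t) = -g(t)^2 u(t)$, so $u(t) = u(0) e^{-\beta(t)}$ and therefore
\begin{align*}
m(t) = e^{-\beta(t)} m(0) + d\bigl(1-e^{-\beta(t)}\bigr).
\end{align*}
Since this is a convex combination of $m(0)$ and $d$, it is bounded above by $\max\{m(0),d\}$, giving the stated inequality.

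The only genuinely nontrivial point is justifying that the local martingale from the It\^o expansion has zero expectation; this follows from standard localization together with finiteness of the second moment along the flow (which can be established by a Gronwall argument applied to $\E\|\tilde x_{t\wedge \tau_n}\|^2$ for stopping times $\tau_n=\inf\{t: \|\tilde x_t\|\ge n\}$ and then sending $n\to\infty$). After this step both claims follow by an elementary ODE computation.
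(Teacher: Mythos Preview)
Your proof is correct but takes a different route from the paper. The paper uses the explicit Gaussian conditional laws already recorded earlier (namely $\tilde x_t \mid \tilde x_0 \sim N(\tilde x_0, \beta(t) I_d)$ for SMLD and $\tilde x_t \mid \tilde x_0 \sim N(e^{-\beta(t)/2}\tilde x_0, (1-e^{-\beta(t)})I_d)$ for DDPM), writes $\tilde x_t = c\,\tilde x_0 + y$ with $y$ an independent centered Gaussian, and reads off the second moment directly from $\E\|c\,\tilde x_0 + y\|^2 = c^2\E\|\tilde x_0\|^2 + \E\|y\|^2$. Your It\^o-plus-ODE argument avoids appealing to the explicit transition kernel and would extend to other linear SDEs where the closed-form law is less convenient; the price is the localization bookkeeping you note, which the paper's approach sidesteps entirely since no stochastic calculus is invoked. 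Both arguments are short and valid; the paper's is marginally more elementary given that the conditional distributions are already stated in the text.
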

\begin{proof}
Recall that in SMLD, $\Tilde{x}_t\sim N(\Tilde{x}_0, \beta(t) \cdot I_d)$. Let $y\sim N(0, \beta(t) \cdot I_d)$ be independent of $\Tilde{x}_0$. Then
\begin{align*}
    \E_{\Tilde{p}_t}\ba{\ve{x}^2} & = \E\ba{\ve{\Tilde{x}_0+y}^2} = \E\ba{\ve{\Tilde{x}_0}^2} + \E\ba{\ve{y}^2} = \E\ba{\ve{\Tilde{x}_0}^2} + d\beta(t).
\end{align*}
In DDMP, $\Tilde{x}_t\sim N(e^{-\fc12\beta(t)}\Tilde{x}_0, (1-e^{-\beta(t)})\cdot I_d)$. Choose $y\sim N(0, (1-e^{-\beta(t)})\cdot I_d)$ independent of $\Tilde{x}_0$, then
\begin{align*}
    \E_{\Tilde{p}_t}\ba{\ve{x}^2} & = \E\ba{\ve{e^{-\fc12\beta(t)}\Tilde{x}_0+y}^2} = \E\ba{\ve{e^{-\fc12\beta(t)}\Tilde{x}_0}^2} + \E\ba{\ve{y}^2}\\
    &= e^{-\beta(t)}\E\ba{\ve{\Tilde{x}_0}^2} + d(1-e^{-\beta(t)}). \qedhere
\end{align*}

\end{proof}

%\fixme{write with $\ph$}
\begin{lem}[Warm start for SMLD and DDPM]
\label{l:warm}
Suppose that $p$ %satisfies Assumption~\ref{a:p}(\ref{a:p-smooth}) and %(\ref{a:p-mean}).
has log-Sobolev constant at most $\CLS$ and 
$\ve{\E_{y\sim p} y}\le M_1$. 
Let $\ph_{\si^2}$ denote the density of $N(0,\si^2I_d)$. 
Then for any $\si^2$, 
\[
\chi^2(\ph_{\si^2}|| p*\ph_{\si^2})\le 4\exp\pa{\fc{d(2M_1 + 8\CLS)}{\si^2}}
%\exp\pa{\fc{M_1 + 4\CLS d}{\si^2}}.
\]
Hence, letting $\si_{\textup{SMLD}}^2 =
\int_0^t g(s)^2\,ds
%\si^2(t)-\si^2(0)
$ and $\si_{\textup{DDPM}}^2 = 1-e^{-\int_0^t g(s)^2\,ds}$,
\begin{align*}
    \chi^2(\ph_{\si_{\textup{SMLD}}^2}||\wt p_t^{\textup{SMLD}})
    &\le 4\exp\pf{d(2M_1 + 8\CLS)}{\si_{\textup{SMLD}}^2} \\
    \chi^2(\ph_{\si_{\textup{DDPM}}^2}||\wt p_t^{\textup{DDPM}})
    &\le 4\exp\pf{d\pa{2e^{-\rc 2 \int_0^t g(s)^2\,ds}M_1+8e^{- \int_0^t g(s)^2\,ds}\CLS }}{\si_{\textup{DDPM}}^2} .
\end{align*}
\end{lem}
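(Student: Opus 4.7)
The plan is to prove the general inequality via Jensen's inequality and then obtain the SMLD and DDPM specializations by substitution into the explicit formulas~\eqref{e:p-SMLD} and~\eqref{e:p-DDPM}. Starting from
\[
\chi^2(\ph_{\si^2}||p*\ph_{\si^2}) + 1 = \E_{x\sim\ph_{\si^2}}\ba{\fc{\ph_{\si^2}(x)}{(p*\ph_{\si^2})(x)}},
\]
I would rewrite the denominator using the identity $(p*\ph_{\si^2})(x) = \ph_{\si^2}(x)\,\E_{y\sim p}[e^{\an{x,y}/\si^2 - \ve{y}^2/(2\si^2)}]$ and then apply Jensen's inequality to the expectation in $y$ to obtain the pointwise bound
\[
\fc{\ph_{\si^2}(x)}{(p*\ph_{\si^2})(x)}\le \exp\pa{-\fc{\an{x,\E_p y}}{\si^2} + \fc{\E_p\ve{y}^2}{2\si^2}}.
\]

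Integrating the right-hand side against $\ph_{\si^2}$ and evaluating the Gaussian moment generating function $\E_{x\sim\ph_{\si^2}}[e^{-\an{x,\E_p y}/\si^2}] = \exp(\ve{\E_p y}^2/(2\si^2))$ gives
\[
\chi^2(\ph_{\si^2}||p*\ph_{\si^2}) + 1 \le \exp\pa{\fc{\ve{\E_p y}^2 + \E_p\ve{y}^2}{2\si^2}}.
\]
To match the stated form, I would substitute $\ve{\E_p y}\le M_1$ and bound $\E_p\ve{y}^2 \le M_1^2 + d\CLS$, the latter via the Poincar\'e inequality (a consequence of LSI with the same constant) applied coordinatewise to $f_i(y)=y_i$, so that $\Var_p(y_i)\le \CLS$ for each $i$. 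In the regime $M_1 = O(d)$ used throughout the paper's applications, $M_1^2 \le dM_1$, so the resulting exponent is of order $d(M_1 + \CLS)/\si^2$, which recovers the claimed bound up to universal constants (the leading factor of $4$ being pure slack).

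The SMLD specialization follows directly from~\eqref{e:p-SMLD} by applying the general bound with $p\leftarrow p_0$ and $\si^2 = \int_0^t g(s)^2\,ds$. For the DDPM specialization, I would apply the general bound to the pushforward $(M_\al)_\# p_0$ with $\al = e^{-\frac12\int_0^t g(s)^2\,ds}$: scaling by $\al$ multiplies the mean by $\al$ (so the effective $M_1$ becomes at most $\al M_1$), and since $M_\al$ is $\al$-Lipschitz, the log-Sobolev constant scales by at most $\al^2$. Substituting into the general bound then yields the stated DDPM expression. The main obstacle is cosmetic rather than substantive: the argument is essentially the Jensen step plus a Gaussian integral, and the only subtlety is tracking how pushforward by scaling affects the mean and log-Sobolev constant in the DDPM case.
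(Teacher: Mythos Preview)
Your argument is correct and takes a genuinely different route from the paper. The paper proceeds by fixing $x$, using sub-Gaussian concentration from the log-Sobolev inequality (Lemma~\ref{l:gmc-lsi}) to show that $\an{y,x}$ is bounded on a set of $p$-probability at least $\tfrac12$, lower-bounding $(p*\ph_{\si^2})(x)$ by a scaled Gaussian density on that set, and then invoking the closed-form $\chi^2$ between two Gaussians. Your approach bypasses the concentration step entirely: you write $(p*\ph_{\si^2})(x)=\ph_{\si^2}(x)\,\E_{y\sim p}\bigl[e^{\an{x,y}/\si^2-\ve{y}^2/(2\si^2)}\bigr]$, apply Jensen to the inner expectation, and integrate the resulting Gaussian MGF in $x$. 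This is strictly more elementary, uses the LSI only through Poincar\'e to control $\E_p\ve{y}^2$, and in fact yields the sharper bound $\chi^2+1\le\exp\bigl((2M_1^2+d\CLS)/(2\si^2)\bigr)$, which already dominates the paper's (the paper's own proof ends with $4\exp\bigl(d(2M_1^2+8\CLS)/\si^2\bigr)$, so the $M_1$ in the statement is evidently a typo for $M_1^2$). Your hedge about ``the regime $M_1=O(d)$'' is therefore unnecessary: your bound implies the paper's for all $d\ge1$ without any side condition. The SMLD and DDPM specializations via~\eqref{e:p-SMLD} and~\eqref{e:p-DDPM}, together with the scaling of the mean and log-Sobolev constant under $M_\al$, are handled identically in both arguments.
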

\begin{proof}
Let $R_x=(M_1 + 2\sqrt{\CLS})\ve{x}$. For a fixed $x$, note that $\E_{y\sim p}\an{y,x} \le \ve{\E_{y\sim p} y}\ve{x}\le M_1\ve{x}$ by assumption. %Assumption~\ref{a:p}(\ref{a:p-mean}). 
Then by Lemma~\ref{l:gmc-lsi},
\begin{align*}
    \Pj(\an{y,x} \ge R_x) &\le \Pj(|\an{y,x}-\E_{y\sim p}\an{y,x}|\ge 2\sqrt{\CLS}\ve{x})
    \le 2e^{-\fc{\pa{2\sqrt{\CLS}\ve{x}}^2}{2\CLS \ve{x}^2}} \le 2e^{-2}\le \rc 2.  
\end{align*}
Hence
\begin{align*}
    (p*\ph_{\si^2})(x) 
    &= \prc{2\pi \si^2}^{d/2}\int_{\R^d}e^{-\fc{\ve{x+y}^2}{2\si^2}} p(y)\,dy\\
    &\ge \prc{2\pi \si^2}^{d/2}e^{-\fc{\ve{x}^2}{2\si^2}}\int_{\R^d} e^{-\fc{\an{x,y}}{\si^2}} p(y)\,dy\\
    &\ge 
    \prc{2\pi \si^2}^{d/2}e^{-\fc{\ve{x}^2}{2\si^2}} \int_{\an{y,x}\le R_x} e^{-\fc{\an{x,y}}{\si^2}} p(y)\,dy\\
    &\ge 
    \prc{2\pi \si^2}^{d/2}e^{-\fc{\ve{x}^2}{2\si^2}} \int_{\an{y,x}\le R_x} e^{-(M_1\ve{x} + 2 \sqrt{\CLS}\ve{x})/\si^2} p(y)\,dy\\
    &\ge \prc{2\pi \si^2}^{d/2}e^{-\fc{\ve{x}^2}{2\si^2}}
    e^{-\fc{\ve{x}^2}{8\si^2d} - \fc{2M_1^2d}{\si^2}-\fc{\ve{x}^2}{8\si^2d} - \fc{8\CLS d}{\si^2}}
    \int_{\an{y,x}\le R_x}  p(y)\,dy\\
&\ge    \prc{2\pi \si^2}^{d/2}e^{-\fc{\ve{x}^2}{2\si^2\pa{1-\rc{2d}}^{-1}}} e^{-\fc{d(8\CLS + 2M_1^2)}{\si^2}}\cdot \rc 2\\
    & \ge 
    \rc 2
    e^{-\fc{d(8\CLS + 2M_1^2)}{\si^2}}
    \pa{1-\rc{2d}}^{d/2}
    \ph_{\fc{\si^2}{1-\rc{2d}}}.
\end{align*}
Using the fact that $\chi^2(N(0,\Si_2)||N(0,\Si_1)) = \fc{|\Si_1|^{1/2}}{|\Si_2|} |(2\Si_2^{-1}-\Si_1^{-1})|^{-\rc 2}-1$, we have
\begin{align*}
    \chi^2(\ph_{\si^2}|| p*\ph_{\si^2}) + 1
    &\le 
    2\cdot e^{\fc{d(8\CLS + 2M_1^2)}{\si^2}}\pa{1-\rc{2d}}^{-\fc d2}
    \ba{\chi^2\pa{\ph_{\si^2}||\ph_{\fc{\si^2}{1-\rc{2d}}}}+1} \\
    &= 2\cdot e^{\fc{d(8\CLS + 2M_1^2)}{\si^2}}\pa{1-\rc{2d}}^{-\fc d2} \pa{1-\rc{2d}}^{-\fc d2} \pa{2-\pa{1-\rc{2d}}}^{-\fc d2}\\
    &\le 2\cdot e^{\fc{d(8\CLS + 2M_1^2)}{\si^2}}\pa{1-\rc{2d}}^{-d}
    \le 4e^{\fc{d(8\CLS + 2M_1^2)}{\si^2}}
\end{align*}
The corollary inequalities then follow from~\eqref{e:p-SMLD} and~\eqref{e:p-DDPM}, where for DDPM, we use the fact that  $M_{e^{-\rc 2 \int_0^t g(s)^2\,ds}\#}
    p_0$ has mean $e^{-\rc 2 \int_0^t g(s)^2\,ds}\cdot \E_p x$ and log-Sobolev constant $(e^{-\rc 2 \int_0^t g(s)^2\,ds})^2\CLS$.
\end{proof}

% \subsection{Other inequalities}

% \begin{lem}\label{l:tlogt}
% If $a<e^{-1}$ and $t  < \fc{a}{2\ln \prc a}$, then $t\ln \prc t<a$.
% \end{lem}
% \begin{proof}
% Note that for $a<e^{-1}$, $\fc{a}{2\ln \prc a}<e^{-1}$, and $t\ln \prc t$ is increasing on $t\le e^{-1}$. 
% Substituting gives
% \begin{align*}
%  t\ln \prc t
%  \le  \left.t\ln \prc t\right|_{t=\fc{a}{2\ln \prc a}}
%  & = \fc{a}{2\ln \prc a} \pa{\ln \pa{2\ln \prc a} + \ln \prc a}\\
%  &= \fc a 2 + \fc{a}{2\ln \prc a} \ln \pa{2\ln \prc a}.
% \end{align*}
% It suffices to show that $\fc{\ln \pa{2\ln \prc a}}{\ln \prc a}< \rc 2$. Let $x=\ln \prc a$; note $x>1$. We need to show $\pf{\ln(2x)}{x}<\rc 2$. Now $\frac{d}{dx}\pf{\ln(2x)}{x} = \rc{2x^2}-\fc{\ln 2x}{x^2}<0$ for $x>1$, so $\fc{\ln(2x)}{x}< \left.\fc{\ln(2x)}{x}\right|_{x=1} < \rc 2$, as desired.
% \end{proof}

%\section{Scratch}

%\input{scratch}

\end{document}